\newtheorem{lemma}{Lemma}
\newtheorem{theorem}{Theorem}
\DeclareMathOperator*{\argmax}{arg\,max}
\newcommand{\EE}{\mathbb{E}}
\title{Source-Guided Flow Matching}
\author{%
  Zifan Wang\thanks{First two authors have equal contribution.} \\
  KTH Royal Institute of Technology\\
  \texttt{zifanw@kth.se} \\
  \And
  Alice Harting\footnotemark[1] \\
  KTH Royal Institute of Technology\\
  \texttt{aharting@kth.se} \\
  \And
  Matthieu Barreau \\
  KTH Royal Institute of Technology\\
  \texttt{barreau@kth.se} \\
  \And
   Michael M. Zavlanos \\
  Duke University \\
   \texttt{michael.zavlanos@duke.edu} \\
  \And
   Karl H. Johansson \\
   KTH Royal Institute of Technology\\
  \texttt{kallej@kth.se} \\
}
\begin{document}

\maketitle

\begin{abstract}

Guidance of generative models is typically achieved by modifying the probability flow vector field through the addition of a guidance field. In this paper, we instead propose the Source-Guided Flow Matching (SGFM) framework, which modifies the source distribution directly while keeping the pre-trained vector field intact. This reduces the guidance problem to a well-defined problem of sampling from the source distribution. We theoretically show that SGFM recovers the desired target distribution exactly. Furthermore, we provide bounds on the Wasserstein error for the generated distribution when using an approximate sampler of the source distribution and an approximate vector field. The key benefit of our approach is that it allows the user to flexibly choose the sampling method depending on their specific problem. To illustrate this, we systematically compare different sampling methods and discuss conditions for asymptotically exact guidance. Moreover, our framework integrates well with optimal flow matching models since the straight transport map generated by the vector field is preserved. Experimental results on synthetic 2D benchmarks, physics-informed generative tasks, and imaging inverse problems demonstrate the effectiveness and flexibility of the proposed framework.
\end{abstract}

\section{Introduction}
Flow matching \cite{lipman2022flow} is a generative modeling framework to learn a vector field that drives the probability flow from a source distribution $q_0$ to a target distribution $q_1$ in some fixed time. It has demonstrated state-of-the-art computational efficiency and sample 
quality across a range of applications, from image generation \cite{lipman2022flow} and molecular structure generation \cite{chen2023flow} to decision‐making tasks \cite{zheng2023guided}.
In particular, optimal flow matching \cite{tong2023improving, Pooladian2023Multisample} trains the vector field by leveraging the optimal transport (OT) solution between $q_0$ and $q_1$. The resulting optimal vector field moves each sample along a straight-line trajectory with a constant velocity, corresponding to the Wasserstein geodesic between the distributions. In practice, these straight trajectories lead to stable training and faster inference, since generative sampling can then reach the target distribution with few integration steps.

The guidance of flow matching refers to directing the probability flows toward outcomes with desired properties \cite{dhariwal2021diffusion,du2023reduce,graikos2022diffusion,ho2022classifier,song2020score}. 
In this context, sample generation aims not only to approximate the data distribution but also to satisfy additional properties, such as conditioning on auxiliary information or optimizing an energy-based objective. 
Training-based guidance methods \cite{ho2022classifier,song2020score} address this by training a specialized model for a given conditioning scenario. 
While effective, these methods require retraining for every new conditioning scenario, which incurs significant cost and therefore limits their flexibility. Thus, a variety of training-free approaches have emerged for both diffusion models \cite{chung2022diffusion,song2023loss,ye2024tfg,uehara2024fine,tang2024fine} and flow matching models \cite{ben2024d,wang2024training,liu2023flowgrad,domingo2024adjoint,feng2025guidance}.

Among these methods, exact guidance is achieved by \cite{uehara2024fine,tang2024fine,domingo2024adjoint,feng2025guidance}. Specifically, \cite{uehara2024fine,tang2024fine} reformulate guidance as a stochastic optimal control (SOC) problem and achieve exactness by modifying both the source distribution and the vector field. Additionally, \cite{domingo2024adjoint} shows that exact guidance is possible by modifying only the vector field, given a suitable noise schedule. However, these methods require solving an SOC problem for every new conditioning scenario, which is computationally expensive. 
Recently, \cite{feng2025guidance} proposed a framework for exact guidance including various adjustments of the vector field. 
However, its exactness is only proved for a specific class of pre-trained vector fields, thereby having limited generality. 
Moreover, a shared feature of all of these methods is that the vector field is transformed. For optimal flow matching models, this implies that the desirable property of straight transport maps is not preserved when guidance is applied.

In this work, we show that exact guidance can instead be achieved by appropriately modifying the source distribution while keeping the original vector field unchanged. We introduce the Source-Guided Flow Matching (SGFM) framework, which reduces guidance to the well-defined task of sampling from the modified source distribution. We prove that sampling from the modified source distribution and driving the flow along the exact vector field precisely recovers the desired target distribution. Furthermore, we provide bounds on the Wasserstein error of the target distribution when using an approximate sampler of the source distribution and an approximate vector field. 

The key to effective implementation of SGFM is accurate and efficient sampling of the modified source distribution. SGFM gives the user the flexibility to tune the procedure by customizing the sampling method according to their specific problem. Such methods include importance sampling, Hamiltonian Monte Carlo, and optimization-based sampling. We discuss the asymptotic exactness of SGFM with these methods. Interestingly, SGFM with optimization-based sampling method coincides with the heuristic formulation in \cite{ben2024d}. In the context of our framework, this method is equivalent to recovering the mode of the modified source distribution. In this way, we offer a new view of \cite{ben2024d} with theoretical justification that also naturally extends to other sampling methods. Experiments on synthetic 2D datasets, physics-informed generative tasks, and imaging inverse problems demonstrate the effectiveness and flexibility that SGFM offers compared to other methods.

\section{Background}
Throughout this paper, we consider a generative modeling framework defined on a data space in $\mathbb{R}^d$. The generative model is characterized by a source distribution $q_0$ and a target distribution $q_1$. The source distribution is an arbitrary distribution from which samples can be drawn, while the target distribution represents an empirical data distribution given by a finite set of samples.
\subsection{Probability flow and flow matching}
The goal of a flow-based generative model is to sample from the target distribution $q_1$ by transforming samples from the source distribution $q_0$. Specifically, the model is defined by a vector field $u_t(x): [0,1] \times \mathbb{R}^d \rightarrow \mathbb{R}^d$ which transports particles according to the ordinary differential equation (ODE)
\begin{equation}
\label{eq:ODE}
\begin{split}
    dx =& u_t(x) dt.
\end{split}
\end{equation}
Associated with \eqref{eq:ODE} is the transport map $\phi_t(x_0)$, which maps the initial point $x_0$ to the solution $x_t$ at time $t$. Applying $\phi_t$ to a distribution of particles $p_0$ induces a probability flow where the density at time $t$ is given by the pushforward measure $p_t \triangleq  [\phi_t]_{\#}(p_0)$, where $[g ]_\#$ is defined by the property $\int f(x) \, d([g]_{\#}(p))(x) = \int f\circ g(x) \, dp(x)$ for every integrable function $f$ \cite{figalli2021invitation}. Equivalently, $p_t$ can be characterized as the probability flow arising from the continuity equation $\partial_t p_t + \nabla \cdot (p_t u_t) = 0$ \cite{villani2008optimal}.

In this view, the flow matching problem is to find a vector field $u_t$ that induces a probability flow $p_t$ such that $p_0=q_0$ and $p_1=q_1$. While the exact vector field $u_t$ is often inaccessible, it can be approximated by a neural network $v_t^{\theta}$ and trained using the conditional flow matching objective
\begin{align}\label{eq:CFM:loss}
    \mathcal{L}_{\text{FM}}(\theta) = \EE_{t \in \mathcal{U}[0,1], (x_0,x_1)\sim \pi} \left\|v_t^{\theta} ( (1-t)x_0 + t x_1 ) - (x_1 - x_0)\right\|^2,
\end{align} where the joint distribution $\pi \in \Gamma(q_0,q_1)$ with $\Gamma(q_0,q_1)$ being the set of all joint distributions having marginal distributions $q_0$ and $q_1$ \cite{lipman2022flow}. For example, we can select $\pi(x_0,x_1) = q_0(x_0) \times q_1(x_1)$.

\subsection{Static and dynamic optimal transport}
 Among the many possible transport plans between $p_0$ and $p_1$, the use of optimal transport (OT) allows to find the one that minimizes the total cost of the transportation. This is quantified by the 2-Wasserstein distance, which can be formulated according to Kantorovich or Monge respectively as the following optimization problems:
\begin{align*}
    W_2^2(q_0, q_1) = \min_{\pi \in \Gamma(q_0,q_1)} \int_{\mathbb{R}^d \times \mathbb{R}^d} \left\| x - y \right\|^2 \,\mathrm{d}\pi(x,y) = \min_{T: T_{\#} q_0 = q_1} \int_{\mathbb{R}^d} \lVert x - T(x)\rVert^2 \,q_0(x)\,\mathrm{d}x.
\end{align*} 
As shown in \cite{villani2021topics,figalli2021invitation}, these problems admit unique minimizers $\pi^*$ and $T^*$, which are related by $\pi^* = [\textrm{Id}, T^*]_{\#} q_0$.
Of particular interest to our case is the dynamic OT formulation, which is defined by the optimization problem: 
\begin{align}\label{eq:OT:dynamic}
    W_2^2(q_0, q_1) &= \inf_{(p_t, u_t)}  \left\{ \int_0^1 \int_{\mathbb{R}^d} \|u_t(x)\|^2 \, p_t(x) \, dx \, dt \;\middle|\; \begin{aligned} &\partial_t p_t + \nabla \cdot (p_t u_t) = 0, \\ &p_0 = q_0, \; p_1 = q_1 \end{aligned} \right\} ,
\end{align} which seeks the vector field $u_t^*$ that induces a probability flow $p_t$ that transports the source distribution $p_0=q_0$ to the target distribution $p_1=q_1$ with minimal total kinetic energy. The relation between the static and dynamic OT solutions is simply given as $u_t^*( (1-t)x_0 + t T^*(x_0)) = T^*(x_0) - x_0$. Thus, the vector field $u_t^*$ gives rise to a linear trajectory $x_t = t T^*(x_0) + (1-t) x_0$ for every initial point $x_0$.

\subsection{Optimal flow matching}
There are infinitely many choices of vector fields that solve the flow-matching problem. However, the unique solution $u^*_t$ to the dynamic OT formulation \eqref{eq:OT:dynamic} is associated with particularly efficient inference and fast generation. This is because $u_t^*$ is independent of $t$, so ODE integration along this field simply yields straight-line paths, which lead to lower time-discretization errors and improved computational efficiency  \cite{kornilov2024optimal,liu2022flow}. To approximate $u_t^*$ via \eqref{eq:CFM:loss}, it is necessary to choose $\pi=\pi^*$. However, computing $\pi^*$ has cubic computational complexity in the number of samples, which is challenging for large datasets. A solution is to instead approximate $\pi^*$ using mini-batch data \cite{tong2023improving}, or alternatively to use entropic OT solvers \cite{Pooladian2023Multisample}.


\subsection{Flow matching guidance}
Given a pre-trained flow matching model that transforms the source distribution $q_0$ to the target distribution $q_1$, consider the conditional generation problem where the task is to generate samples that satisfy additional constraints. When the constraints are encoded by an energy function $J$ which attains its minimum when the constraints are satisfied, the likelihood of constraint satisfaction can be expressed in canonical form $\propto e^{-J(\cdot)}$. In this case, the new target distribution becomes $q_1'(x_1) \propto  q_1(x_1) \times e^{-J(x_1)}$. It can be shown that $q_1'$ is the solution of the variational problem 
\begin{align*}
    q_1' = \arg \min_{q} \EE_{x_1 \sim q} [J(x_1)] + {\rm{KL}}(q||q_1),
\end{align*} where $KL(q||q_1)$ denotes the Kullback-Leibler divergence between $q$ and $q_1$ \cite{uehara2024fine}. In this view, conditional generation is a fine-tuning problem: the distribution is shifted to reduce the task-specific loss $J$ while staying close to the original data distribution $q_1$.

\section{Source-Guided Flow Matching}\label{sec:result}

Suppose that we have a pre-trained flow matching model $v_t$ that transports the source distribution $q_0$ to the target distribution $q_1$. 
Consider the conditional generation task in which the new target distribution is of the form $q_1'(x_1)\propto q_1(x_1) \times e^{-J(x_1)}$, where $J$ is a given loss function.
The problem considered in this paper is how samples from $q_1'$ can be generated.
To that end, one could, in principle, modify the source distribution and/or the vector field.
In what follows, we explore how to guide the probability flow to arrive at $q_1'$ by retaining the pre-trained vector field while modifying only the source distribution. 

\subsection{Exact guidance under an exact transportation map}
Consider the ideal case where the pre-trained vector field $v_t$ exactly transports $q_0$ to $q_1$. 
In this case, we derive a closed-form expression for the modified source distribution. We show that this modified source distribution arrives precisely at the desired target distribution under the same vector field.
A formal statement of this result is presented in the following theorem, which is proved in Appendix~\ref{sec:app:pf:thm1}.


\begin{algorithm}[t]
\caption{Source-Guided Flow Matching} \label{alg:algorithm1}
\begin{algorithmic}[1]
    \STATE \textbf{Input}: Source samples $x_0\sim q_0$, target data samples $x_1\sim q_1$, loss function $J$
    \STATE Train the vector field $v_t^{\theta}(\cdot)$ that transforms $q_0$ to $q_1$
    \STATE Sample $x_0 \sim q_0'$
    \STATE Integrate over ODE $\frac{d}{dt} x_t = v_t^{\theta}(x_t)$
    \STATE \textbf{Output}: samples $x_1$
\end{algorithmic}
\end{algorithm}

\begin{theorem}\label{thm:exact}
Let $q_0$ and $q_1$ be the source and target distributions, respectively. Let $v_t\colon\mathbb{R}^d\to\mathbb{R}^d$ be a vector field whose flow map $\phi_t$ satisfies $(\phi_1)_\# q_0 = q_1$. 
For any measurable function $J\colon\mathbb{R}^d\to\mathbb{R}$, define the new target distribution $q_1'(x_1) = \frac{1}{Z_1}   q_1(x_1)\,e^{-J(x_1)}$ and new source distribution $q_0'(x_0)=\frac{1}{Z_0}q_0(x_0)\,e^{-J\circ T(x_0)}$, where $T = \phi_1$, and $Z_0,Z_1$ are normalizing constants. Then, the same flow $\phi_t$ transports $q_0'$ to $q_1'$, i.e., 
$(\phi_1)_\# q_0' = q_1'.$
\end{theorem}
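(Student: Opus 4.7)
The plan is to prove the identity $(\phi_1)_\# q_0' = q_1'$ directly from the definition of pushforward by testing against an arbitrary bounded measurable function $f\colon\mathbb{R}^d\to\mathbb{R}$ and showing
\[
\int f(x_1)\,d\bigl((\phi_1)_\# q_0'\bigr)(x_1) = \int f(x_1)\,dq_1'(x_1).
\]
The whole argument is essentially a change-of-variables driven by the hypothesis $(\phi_1)_\# q_0 = q_1$ and is purely measure-theoretic, so no smoothness or invertibility of $\phi_1$ is required beyond what is already built into the pushforward formulation recalled in Section 2.1.

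First I would apply the defining property of the pushforward to the left-hand side, which gives
\[
\int f(x_1)\,d\bigl((\phi_1)_\# q_0'\bigr)(x_1) = \int f\!\circ\! T(x_0)\,dq_0'(x_0) = \frac{1}{Z_0}\int f\!\circ\! T(x_0)\,e^{-J\circ T(x_0)}\,q_0(x_0)\,dx_0.
\]
Next I would recognize the integrand as $(f\cdot e^{-J})\circ T$ multiplied by the density of $q_0$, and invoke the hypothesis $T_\# q_0 = q_1$ in the opposite direction to rewrite this as
\[
\frac{1}{Z_0}\int f(x_1)\,e^{-J(x_1)}\,q_1(x_1)\,dx_1.
\]
Comparing this expression with the definition $q_1'(x_1)=\tfrac{1}{Z_1}q_1(x_1)e^{-J(x_1)}$ reduces the theorem to the single identity $Z_0=Z_1$.

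The verification $Z_0=Z_1$ is the only step that requires a short separate argument, and it is also done by change of variables: taking $f\equiv 1$ in the computation above yields $\int e^{-J\circ T}\,dq_0 = \int e^{-J}\,dq_1$, i.e., $Z_0=Z_1$. Substituting back completes the proof. I do not expect any genuine obstacle here; the only subtleties to flag are (i) ensuring $J$ is such that $Z_0,Z_1$ are finite and strictly positive so that $q_0',q_1'$ are well-defined probability measures, which follows from the assumption that $J$ is measurable together with the implicit normalizability used in the statement, and (ii) being careful that we never need $\phi_1$ to be invertible or differentiable — we use only the pushforward identity, which is exactly the abstract form in which the hypothesis is given. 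Since $T=\phi_1$, the conclusion $T_\# q_0' = q_1'$ is precisely $(\phi_1)_\# q_0' = q_1'$.
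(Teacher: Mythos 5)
Your proof is correct, and it takes a genuinely different route from the paper's. The paper proves the result via the density-transformation formula, writing $q_1(x_1) = q_0(T^{-1}(x_1))/|\det\nabla T^{-1}(x_1)|$ and then applying the same formula to $q_0'$, so that the factor $e^{-J(x_1)}$ simply rides along; this is quick but implicitly requires $T=\phi_1$ to be invertible with a differentiable inverse, hypotheses that do not appear in the theorem statement (though they do hold for flows of sufficiently regular vector fields). You instead work with the weak formulation, testing $(\phi_1)_\# q_0'$ against an arbitrary bounded measurable $f$ and using only the abstract pushforward identity $\int f\circ T\,dq_0 = \int f\,dq_1$ --- exactly the form in which the hypothesis is given --- so your argument needs no smoothness or invertibility of $T$ and applies even when $q_0,q_1$ lack densities. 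A further point in your favor is that you explicitly verify $Z_0=Z_1$ by taking $f\equiv 1$; the paper silently uses a single constant $Z$ for both normalizations without justifying that they coincide. The trade-off is only one of presentation: the paper's Jacobian computation makes the mechanism (the density is reweighted pointwise by $e^{-J}$ along the flow) visually transparent, while your argument is the more robust and self-contained of the two.
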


Theorem~\ref{thm:exact} indicates that, if $x_0\sim q_0'$ and $x_t$ evolves as $ d x_t=v_t(x_t) dt$, then $x_1=T(x_0)\sim q_1'$. In other words, exact guidance is achieved. Inspired by this theorem, we propose our SGFM framework, which is presented in Algorithm~\ref{alg:algorithm1}. First, we learn a vector field $v_t^{\theta}$ by minimizing the flow matching loss in \eqref{eq:CFM:loss}. 
Next, we draw samples $x_0 \sim q_0'$, employing the sampling strategies that will be discussed in Section~\ref{sec:sampling}. Finally, each sample $x_0$ is transported along the learned vector field $v_t^{\theta}$ by integrating the associated ODE, yielding guided samples $x_1$.



\paragraph{Modification of the vector field over $\mathbb{R}^d \times [0,1]$ versus modification of the source distribution over $\mathbb{R}^d$:} In conditional generation, existing methods \cite{song2023loss,feng2025guidance} augment the original vector field $v_t$ with a guidance term, which is typically approximated via Monte Carlo sampling. Since generation involves evaluating this augmented vector field at numerous intermediate times $t\in[0,1]$, with each evaluation demanding many samples, the overall process requires extensive sampling. In contrast, our approach leaves the vector field unchanged, and transforms the task into sampling from a modified source distribution at a single time. 

\subsection{Error bounds under approximations in vector field and source distribution}
Learning an exact vector field is inherently difficult, particularly in high-dimensional spaces. In addition, samples may not be drawn exactly from the ideal source distribution $q_0'$, introducing further discrepancies in the generative process. In this section, we analyze how these errors jointly influence the quality of the generated samples. Specifically, we quantify how deviations in both the vector field and the source distribution contribute to the divergence between the target and generated distributions.

To that end, we denote by $v_t(x)$ the exact vector field and $\phi_t(x)$ its flow function, i.e., $\frac{d}{dt}\phi_t(x) = v_t( \phi_t(x))$.
Denote by $v_t^{\theta}(x)$ the learned vector field and $\phi_t^{\theta}(x)$ its corresponding flow function, i.e., $\frac{d}{dt}\phi_t^{\theta}(x) = v_t( \phi_t^{\theta}(x))$.
The formal result on the error bound is presented below, and proven in Appendix~\ref{sec:app:pf:thm2}.
\begin{theorem}\label{thm:error_bound}
Assume that $\left\| v_t(x) - v_t^{\theta}(x) \right\|_{\infty} \leq \epsilon$, and the learned flow $v_t^{\theta}(x)$ is $L_v$-Lipschitz continuous in $x$.
Suppose that the sampling method returns samples of distribution $\tilde{q}_0$.
Then, the generated samples of distribution $[\phi_1^{\theta}]_\# \tilde{q}_0$ satisfy $W_2(q_1',[\phi_1^{\theta}]_\# \tilde{q}_0) \leq e^{L_v} W_2(q_0', \tilde{q}_0) + \epsilon e^{L_v}.$
\end{theorem}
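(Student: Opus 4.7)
The plan is to bound the Wasserstein distance by splitting it via the triangle inequality and handling each piece using Grönwall-type arguments on the flow map. By Theorem~\ref{thm:exact} the exact target satisfies $q_1' = [\phi_1]_\# q_0'$, so I would write
\begin{equation*}
W_2(q_1',[\phi_1^{\theta}]_\# \tilde q_0) \;\leq\; W_2\bigl([\phi_1]_\# q_0',[\phi_1^{\theta}]_\# q_0'\bigr) \;+\; W_2\bigl([\phi_1^{\theta}]_\# q_0',[\phi_1^{\theta}]_\# \tilde q_0\bigr).
\end{equation*}
The first term measures the flow approximation error at fixed initial distribution; the second measures how the learned pushforward amplifies the source sampling error.

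For the first term, I would couple via the common initial point $x_0 \sim q_0'$ and use that $(\phi_1(x_0),\phi_1^\theta(x_0))$ is an admissible coupling, so the squared $W_2$ distance is bounded by $\mathbb{E}_{x_0 \sim q_0'} \|\phi_1(x_0)-\phi_1^\theta(x_0)\|^2$. To control the integrand pointwise, I would write
\begin{equation*}
\phi_t(x)-\phi_t^\theta(x)=\int_0^t \bigl[v_s(\phi_s(x))-v_s^\theta(\phi_s(x))\bigr]\,ds+\int_0^t\bigl[v_s^\theta(\phi_s(x))-v_s^\theta(\phi_s^\theta(x))\bigr]\,ds,
\end{equation*}
bound the first integrand by $\epsilon$ using the uniform vector field error, and the second by $L_v\|\phi_s(x)-\phi_s^\theta(x)\|$ using the Lipschitz assumption on $v_t^\theta$. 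The integral form of Grönwall's inequality then yields $\|\phi_1(x)-\phi_1^\theta(x)\| \leq \frac{\epsilon}{L_v}(e^{L_v}-1) \leq \epsilon e^{L_v}$ uniformly in $x$, giving the additive $\epsilon e^{L_v}$ term.

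For the second term, I would use that $\phi_1^\theta$ is a Lipschitz map: differentiating $\|\phi_t^\theta(x)-\phi_t^\theta(y)\|$ along the ODE and applying the $L_v$-Lipschitzness of $v_t^\theta$ gives, again via Grönwall, $\mathrm{Lip}(\phi_1^\theta) \leq e^{L_v}$. For any coupling $\gamma$ of $(q_0',\tilde q_0)$, the pushforward coupling $(\phi_1^\theta,\phi_1^\theta)_\#\gamma$ is admissible for $([\phi_1^\theta]_\# q_0',[\phi_1^\theta]_\# \tilde q_0)$, so
\begin{equation*}
W_2^2\bigl([\phi_1^{\theta}]_\# q_0',[\phi_1^{\theta}]_\# \tilde q_0\bigr) \;\leq\; e^{2L_v}\int \|x-y\|^2\,d\gamma(x,y),
\end{equation*}
and taking the infimum over $\gamma$ yields the multiplicative $e^{L_v}W_2(q_0',\tilde q_0)$ term. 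Combining the two bounds gives the claim.

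The only mild subtlety, rather than a real obstacle, is that the paper assumes Lipschitzness of $v_t^\theta$ but not of $v_t$; the decomposition above is chosen precisely so that all Grönwall estimates use only the Lipschitz constant of $v_t^\theta$, so no extra regularity on the exact field is required. The calculations themselves are standard ODE stability and pushforward arguments.
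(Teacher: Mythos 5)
Your proposal is correct and follows essentially the same route as the paper: the identical triangle-inequality decomposition through $[\phi_1^\theta]_\# q_0'$, a Gr\"onwall bound showing $\phi_1^\theta$ is $e^{L_v}$-Lipschitz to handle the source-sampling term, and an $\epsilon e^{L_v}$ bound on the flow-approximation term. The only difference is cosmetic: where the paper cites an external result (Benton et al., Theorem 1) for the bound $W_2([\phi_1]_\# q_0', [\phi_1^\theta]_\# q_0') \leq \epsilon e^{L_v}$, you derive it directly via the split integrand and the integral form of Gr\"onwall's inequality, which makes the argument self-contained and confirms that only the Lipschitz constant of $v_t^\theta$ is needed.
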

Theorem~\ref{thm:error_bound} provides an upper bound on the 2-Wasserstein distance between the generated and target distributions. The bound makes the contributions of the two error sources explicit. The first term measures the distributional discrepancy introduced by an approximate sampler of the source distribution, and is scaled by $e^{L_v}$, the Lipschitz constant of the flow map $\phi_1^{\theta}$. Intuitively, any deviation in the initial distribution can be amplified by at most a factor of $L_{\phi_1}$ during transport. 
The second term captures the accumulated effect of errors in the learned vector field over time. This contribution stems from integrating a bounded drift perturbation $\epsilon$ through an $L_v$-Lipschitz flow, with the constant $L_v$ controlling how local errors can grow along trajectories. The parameter $L_v$ characterizes the sensitivity of the generative process to errors in both source distribution and vector field. 

Theorem~\ref{thm:error_bound} indicates that our guidance method is particularly effective when the vector field is relatively flat (i.e., when $L_v$ is small), as errors from the source distribution propagate little in such cases. This scenario arises when target samples are preprocessed to be close to the source samples. 
In practice, as long as the source distribution is close to $q_0'$ and the vector field is properly trained, the generated distribution will remain close to the desired target distribution. In the ideal case when the vector field is perfectly learned ($\epsilon = 0$), exact guidance becomes feasible. We can then employ any advanced sampling methods, such as Hamiltonian Monte Carlo, to draw samples from $q_0'$.

\subsection{Improved guidance with the optimal vector field}
Sampling from the modified source distribution $q_0'(x_0)=\frac{1}{Z_0}q_0(x_0)\,e^{-J\circ T(x_0)}$ involves evaluating the flow map $T = \phi_1$ through integration of the vector field $v_t$ over $t\in [0,1]$. To make it efficient, inspired by \cite{tong2023improving}, we instead learn the optimal vector field $v_t^*$ that transforms $q_0$ to $q_1$. In this case, trajectories become straight lines with constant velocity, which greatly reduces the required number of evaluations and thus accelerates sampling. Moreover, according to Theorem 4.1.3 in \cite{figalli2021invitation}, the flow map $\phi_1$ in this case coincides with the optimal Monge map $T^*$.

\begin{figure}[t]
\vspace{-0.4cm}
    \centering
	\subfigure[Optimal vector field.]{
	\includegraphics[scale=0.2]{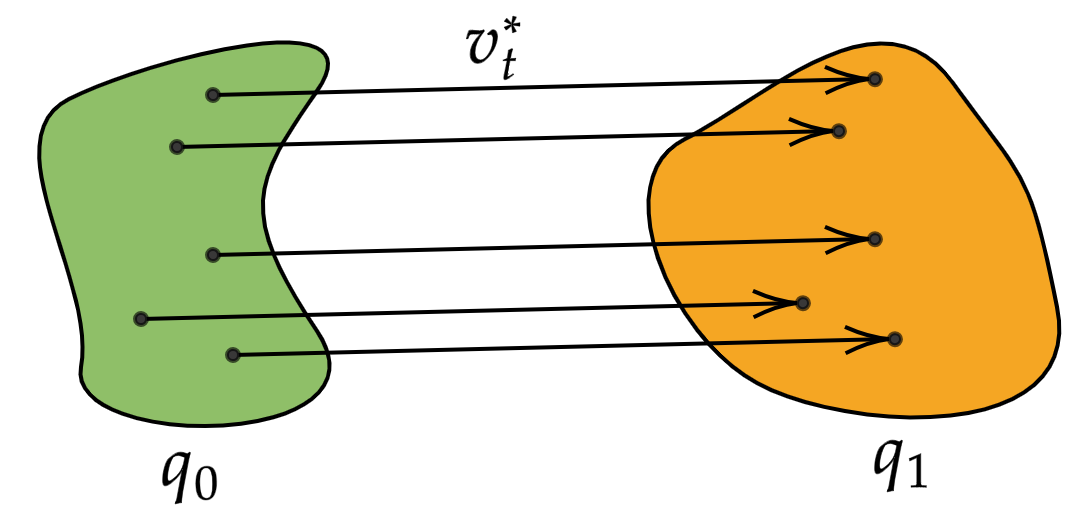}}
	\subfigure[Guidance with optimal vector field.]{
	\includegraphics[scale=0.2]{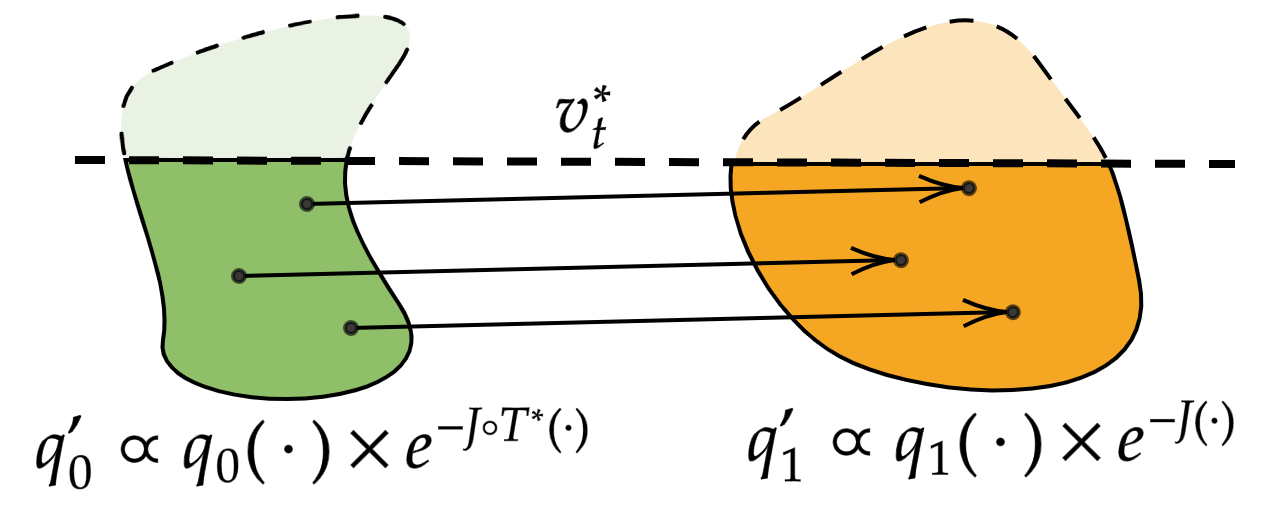}}
	\caption{Illustration of SGFM}
	\label{fig:illus}
\end{figure}

We present an illustration of our guidance method in the case of the optimal vector field.
As shown in Figure~\ref{fig:illus} (a), the optimal vector field $v_t^*$ establishes a point‐wise mapping from each source sample $x_0\sim q_0$ to its corresponding target sample $x_1 \sim q_1$. When we additionally minimize the energy $J$, as illustrated in Figure~\ref{fig:illus} (b), sampling from the new target $q_1'$ reduces to identifying the subset of the source samples that is mapped onto $q_1'$ under the flow $v_t^*$. Theorem~\ref{thm:exact} shows that this subset of source samples has the distribution $q_0'(x_0) \propto q_0(x_0) e^{-J\circ T^*(x_0) }$, where $T^*$ is the flow map at $t=1$.

\begin{figure}[t]
\vspace{-0.5cm}
    \centering
    \subfigure[ $q_1$]{
	\includegraphics[width=0.23\columnwidth]{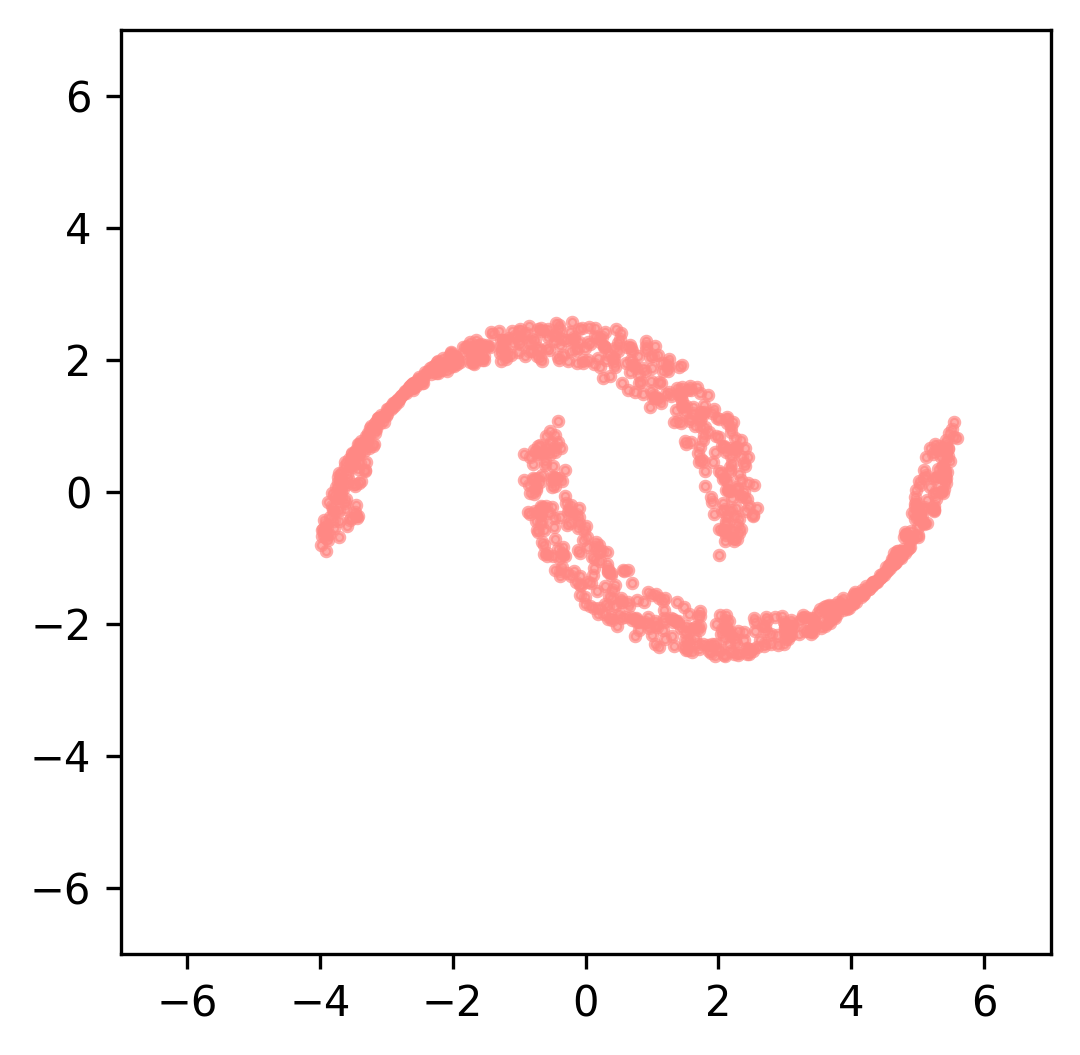}}
	\subfigure[$q_1'$]{
	\includegraphics[width=0.23\columnwidth]{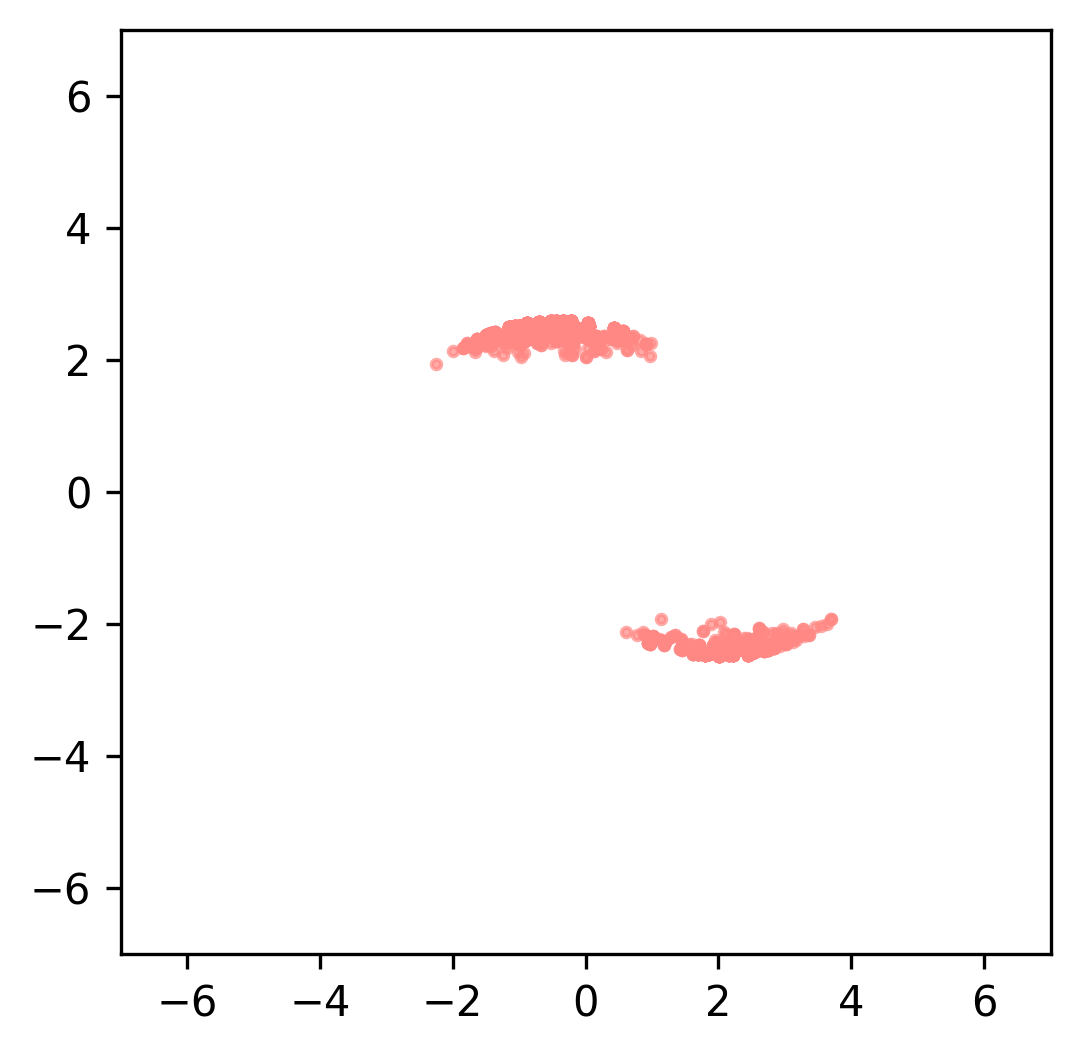}}
    \subfigure[$g^{MC}$ \cite{feng2025guidance}]{
	\includegraphics[width=0.23\columnwidth]{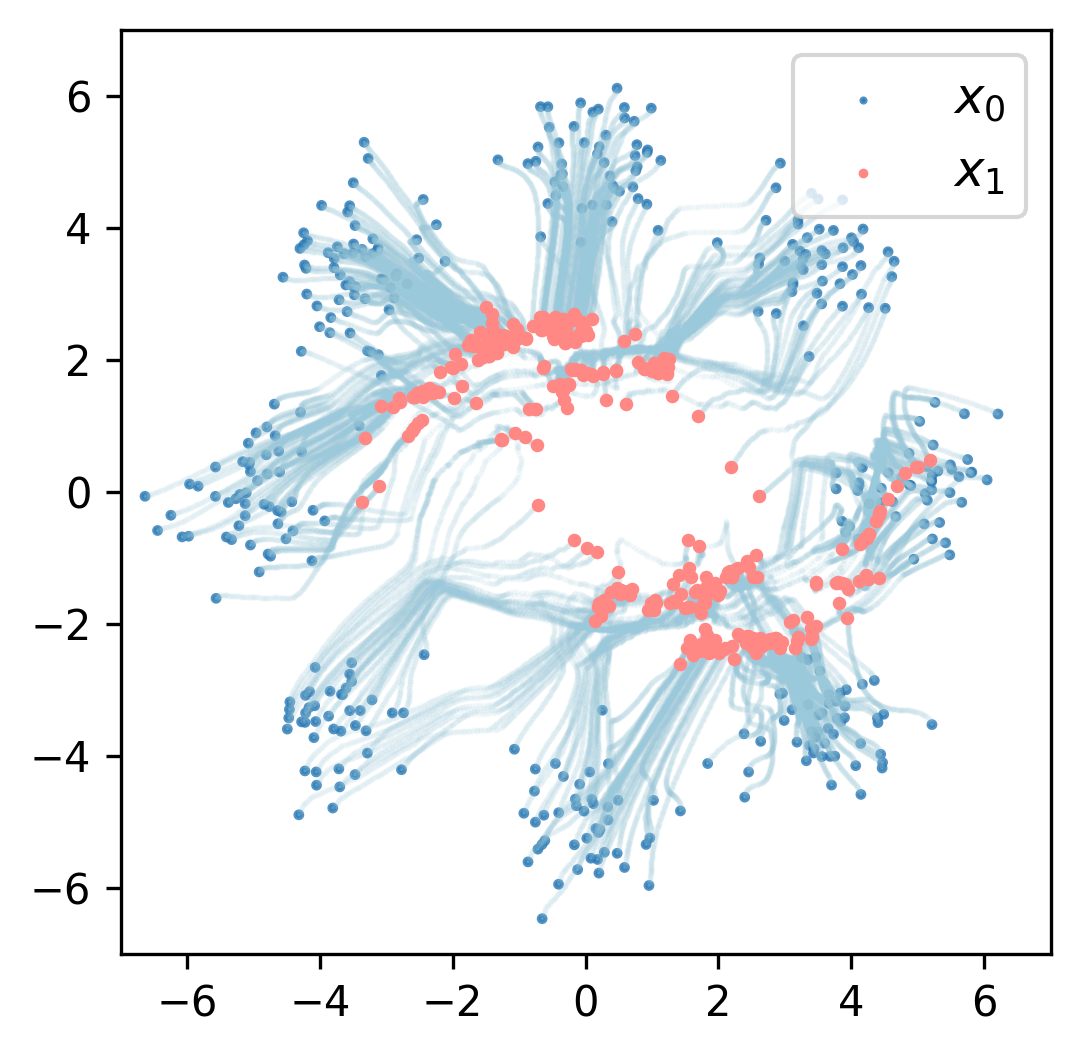}}
	\subfigure[Our method]{
	\includegraphics[width=0.23\columnwidth]{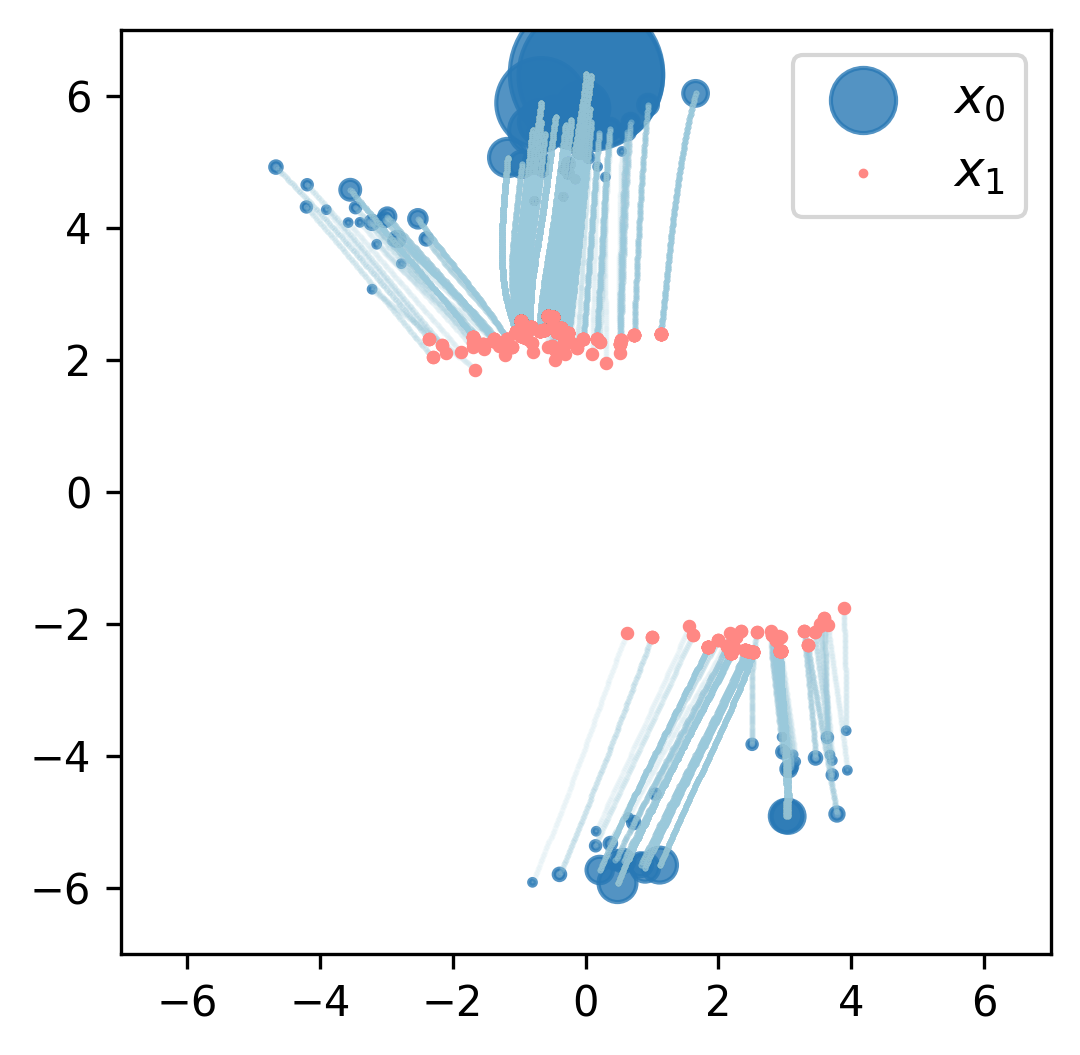}}
	\caption{Illustration of flow guidance in a 2D example. The method $g^{MC}$ \cite{feng2025guidance} modifies the vector field, resulting in curved trajectories. In contrast, our proposed method maintains the optimal vector field and modifies the source distribution, leading to straight trajectories. In (d), each ball's radius indicates the relative weight of the corresponding source sample.}
	\label{fig:2D:comp_guidance}
\end{figure}

Beyond speeding up sampling from the source distribution, straight trajectories from the optimal vector field can also accelerate the inference. Since our guidance method only modifies the source distribution, it inherits the straight-line behavior of the optimal vector field, as shown in Figure~\ref{fig:2D:comp_guidance}(d). This usually leads to faster inference and improved stability as the integration of ODE requires fewer discretization steps.
In contrast, the guidance method proposed in \cite{feng2025guidance} maintains the original source distribution and modifies the vector field with an additional guidance term. As a result, this approach produces curved sampling trajectories, as shown in Figure~\ref{fig:2D:comp_guidance} (c), even when the pre-trained vector field would produce straight paths. Such curvature requires more discretization steps and a higher computational cost to maintain integration accuracy.

\section{Sampling from the Modified Source Distribution}\label{sec:sampling}

Given the pre-trained optimal vector field $v_t^*$, the guidance problem is reduced to drawing samples from the modified source distribution. Thus, the key to effective implementation of our method is accurate and efficient sampling from $q_0'(x_0)\propto~q_0(x_0) e^{-J\circ T^*(x_0)}$. The choice of the sampling method depends on the properties of the cost function $J$ and the dimensionality of the sample space. Whenever the sampling method generates a sequence of approximate distributions $(\tilde{q}^k)_{k\geq 0}$ such that $W^2_2(\tilde{q}^k, q_0')\rightarrow 0$ as $k\rightarrow\infty$, our method of guided flow matching is asymptotically exact, as follows from Theorem \ref{thm:error_bound}. In this section, we discuss asymptotically exact samplers, optimization-based samplers, including their connection to D-Flow \cite{ben2024d}, and other methods.


\subsection{Asymptotically exact sampling methods}\label{sec:sampling:exact}
\subsubsection*{Importance sampling}
In low-dimensional spaces, importance sampling (IS) \cite{Chopin2020} offers a fast and gradient-free sampling method. 
Given an unnormalized target distribution $q$, an initial set of particles is generated using a proposal distribution $m$ such that $\text{supp}(m)\supset\text{supp}(q)$. Samples are then drawn from this set according to weights determined by their relative probability in the target versus proposal distribution $W^n=\frac{w(X^n)}{\sum\limits_{m} w(X^m)}$, where $w(x)\propto\frac{q(x)}{m(x)}$. 
The approximate distribution $\tilde{q}^N(x)\triangleq\sum_{n=1}^N W^n\delta_{X^n}(x),$ with $X^n\sim m,$ converges weakly to the target distribution $q$ when $N\rightarrow\infty$ \cite{Chopin2020}. 
Assuming $q$ is defined on a closed and bounded subset of $\mathbb{R}^d$, this implies that $W^2_2(\tilde{q}^N, q)\rightarrow 0$  as $N\rightarrow\infty$ \cite[Theorem 6.9]{Villani2009}. When $q=q_0'$ and $m=q_0$, we have $w(x_0)=e^{-J\circ T^*(x_0)}$. Note that this method does not require $J$ to be differentiable. For a detailed outline of the algorithm, see Appendix \ref{sec:app:is}.
\subsubsection*{Hamiltonian Monte Carlo}
Importance sampling suffers from the curse of dimensionality, making Hamiltonian Monte Carlo (HMC) \cite{neal2011mcmc} a popular alternative in high-dimensional sample spaces. HMC is a Markov chain Monte Carlo method for unnormalized, continuous densities in Euclidean space where partial derivatives of the log density exist. In particular, HMC generates proposal samples by propagating the target variable, representing position in space, and an auxiliary momentum variable using Hamiltonian dynamics, achieving extensive exploration while maintaining high acceptable probabilities. For more details, see Appendix \ref{sec:app:hmc}. 

HMC typically returns an ergodic Markov chain, which means that it converges asymptotically to the target distribution $q$ \cite{neal2011mcmc}. 
When the negative log-likelihood $-\ln{q}$ is twice differentiable, 
strongly convex and has Lipschitz-continuous gradients, and the integration of the dynamics is sufficiently accurate, the law of the Markov chain after $N$ steps $\tilde{q}^N$ approximates the target distribution $q$ up to arbitrarily small Wasserstein precision $W^2_2(\tilde{q}^N, q)$ for a sufficiently large $N$ \cite[Theorem 5]{chen2019optimal}.

In our case, we have the negative log likelihood $-\ln{q_0'(x_0)}=-\ln{q_0(x_0)} + J\circ T^*(x_0)$, where $T^*$ generally prevents the convexity requirement from being globally satisfied. However, since $T^*$ is learned to encourage straight-line transport, we might expect that the composition $J \circ T^*$ approximately preserves the convexity of $J$ locally. To escape local modes, various HMC strategies exist such as tempering \cite{neal2011mcmc}. 



\subsection{Optimization-based sampling}
Another approach to obtaining samples from the modified source distribution $q_0'$ is to search for the modes by optimizing the negative log-likelihood:
\begin{align}\label{eq:opt_problem:our}
    \min_{x_0} - \ln q_0'(x_0) \quad \Leftrightarrow  \quad \min_{x_0} - \ln q_0(x_0) + J \circ T^* (x_0).
\end{align}
In this formulation, the term $J\circ T^*(x_0)$ introduces task-specific loss via the OT map $T^*$, while the term $-\ln q_0(x_0)$ acts as a regularizer. This regularizer, however, attracts $x_0$ toward the most probable fixed points, rather than toward regions of high probability. For example, in the case of a standard Gaussian source distribution, the optimization problem \eqref{eq:opt_problem:our} becomes 
\begin{align}\label{eq:opt_problem1:our}
\min_{x_0} \frac{\left\| x_0\right\|^2}{2} + c + J \circ T^* (x_0),
\end{align} in which the regularizer $- \ln q_0(x_0)= \frac{\left\| x_0\right\|^2}{2} + c$ would guide the sample toward the unique mode at $x_0=0$ and lead to mode collapse.

To mitigate this issue, the regularizer can be replaced by an alternative that better promotes sample diversity. For a Gaussian source distribution $x_0\sim q_0=\mathcal{N}(0, I_d)$, we have $\|x_0\|^2\sim\chi^2$, where $\chi^2_d$ is the chi-square distribution with $d$ degrees of freedom. Instead of regularizing with the probability of the sample, one might instead use the probability of the norm of the sample, $-\ln p_{\chi^2_d}(\left\|x_0 \right\|^2)$. This means that the unique prior mode at $x_0=0$ is replaced by the sphere $\{x_0: \|x_0\|^2=\argmax_{x}p_{\chi^2_d}(x)=\max(d-2, 0)\}$. The resulting optimization problem is
\begin{align}\label{eq:opt_problem2:our}
    \min_{x_0}  - \ln p_{\mathcal{X}^2}(\|x_0\|^2) + J \circ T^* (x_0) \Leftrightarrow  \min_{x_0} -(d-2) \log\left\| x_0\right\| + \frac{\left\|x_0 \right\|^2}{2} + J \circ T^* (x_0),
\end{align} which coincides with heuristic formulations in \cite{ben2024d} (see discussion below).  

An extension of this idea is to explicitly target high-density regions of the prior. Since $\mathbb{E}[\|x_0\|^2]=d$, $\text{Var}[\|x_0\|^2]=2d$, and $p_{\chi^2}$ is unimodal, we observe that the prior density concentrates in hyperspherical shell $|\|x_0\|^2-d|\leq\sqrt{2d}$. Motivated by this, we propose a new method within the optimization-based sampling family for Gaussian source distributions given by
\begin{align}\label{eq:opt_problem3:our}
&\min_{x_0} J \circ T(x_0) \quad \text{s.t. } |\|x_0\|^2-d|\leq \sqrt{2d}.
\end{align}
In practice, the constrained problem can be addressed either by incorporating a regularizer of the form $( \left\|x_0\right\|^2 - d)^2$, or by applying projected gradient descent onto the feasible set defined by $|\|x_0\|^2-d|\leq \sqrt{2d}$.

The implementation details can be found in Appendix \ref{sec:app:optsamp}. 
From Theorem~\ref{thm:exact}, it follows that ensuring that $x_0$ lies in high-density regions of $q_0'$ implies that the corresponding sample $x_1$ will also lie in high-density regions of $q_1'$, which justifies \eqref{eq:opt_problem:our} - \eqref{eq:opt_problem3:our} as sampling methods in our framework. In practice, we can design specific optimization objectives to sample from $q_0'$ depending on the problem structure and the source distribution. 

\paragraph{Relation to D-Flow\cite{ben2024d}:}  The authors of
\cite{ben2024d} heuristically reformulate sampling as an optimization problem with various choices of regularization. In fact, their formulations coincide with \eqref{eq:opt_problem1:our} - \eqref{eq:opt_problem2:our}.
Since our formulation guarantees fidelity to the ground truth target distribution through Theorem~\ref{thm:exact}, the SGFM framework explicitly clarifies the role of the regularization term and thereby provides foundational support for D-Flow. Thus, D-Flow can be regarded as a special case of SGFM.

The optimization-based sampling method is typically suitable when the target distribution resembles a Dirac distribution, or when we are interested in obtaining a high-probability sample rather than a representative sample of the distribution. However, it risks leading to mode collapse as discussed below.

\paragraph{Example of mode collapse in optimization-based sampling:}
We illustrate the limited expressiveness of optimization-based sampling. Consider a set of particles at locations $(x^1,x^2)$ uniformly distributed over an 'S'-shaped structure.
To encourage $x_1$ and $x_2$ to stay close, we introduce a soft penalty $J=\|x^1-x^2\|$ to guide the generation. 
\begin{wrapfigure}{r}{0.3\columnwidth}
\vspace{-0.8cm}
\begin{center}
\centerline{\includegraphics[width=0.35\columnwidth]{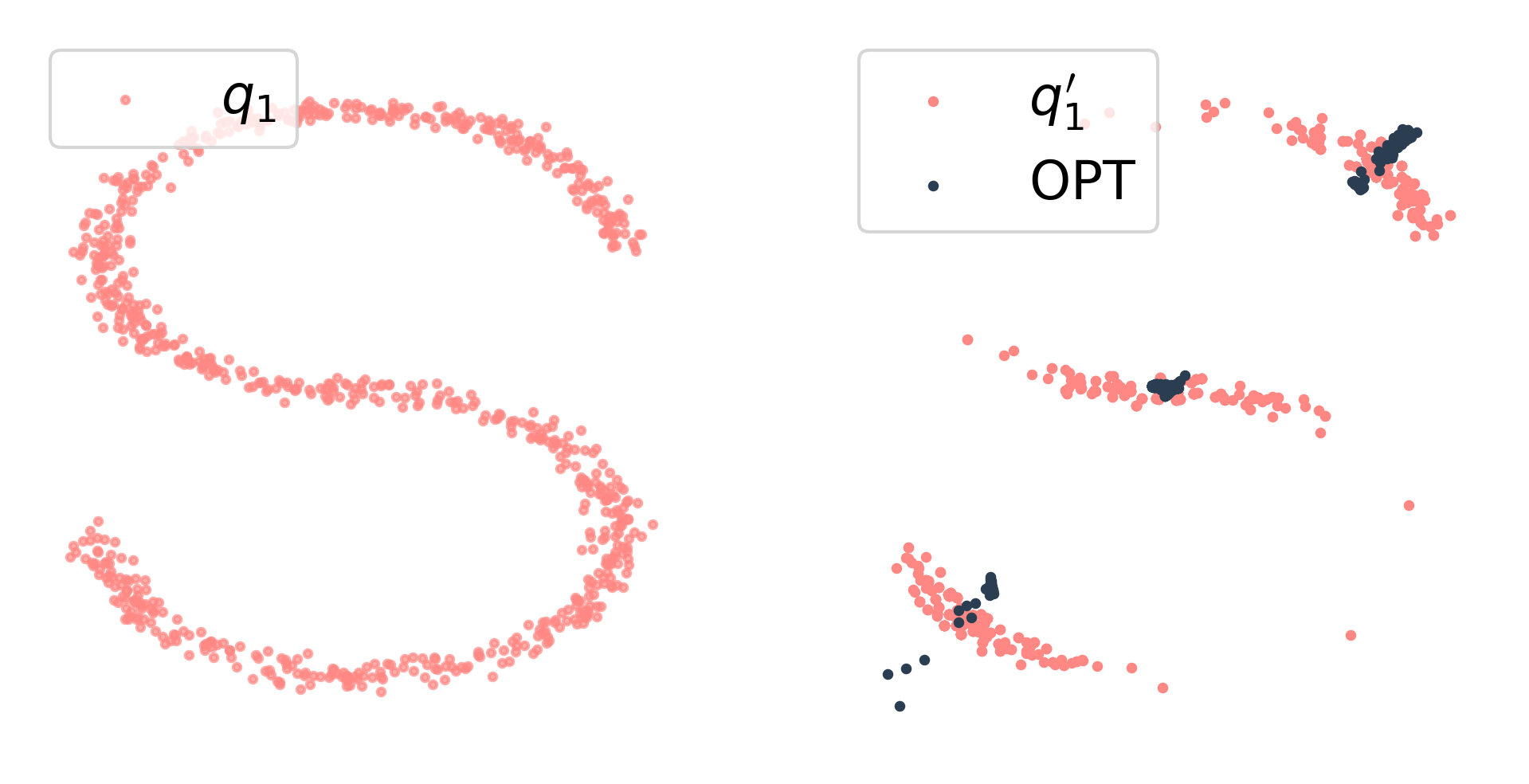}}
\caption{Mode collapse in optimization-based sampling}
\label{fig:2D:illustrate:diversity}
\end{center}
\end{wrapfigure}
As shown in Figure~\ref{fig:2D:illustrate:diversity}, applying \eqref{eq:opt_problem:our}, or equivalently D-Flow, leads to an excessive concentration of the particles around the line $x_1 = x_2$. Therefore, optimization-based sampling fails to capture the inherent diversity of the true conditional distribution.

\subsection{Other sampling methods}
We have presented a selection of methods to sample from $q_0'$. A key strength of SGFM is that it reduces guidance to a well-defined sampling problem, enabling users to flexibly choose the most suitable method for their specific problem. For example, when gradient information is available and a minor bias is acceptable, the Unadjusted Langevin Algorithm (ULA) \cite{robert1999monte} offers an efficient option. If asymptotic exactness is required and the computational budget is larger, a better alternative is Metropolis–Adjusted Langevin Algorithm (MALA) \cite{robert1999monte}, which corrects ULA’s bias. For extensive exploration of complex, high-dimensional distributions, HMC would be preferred.
\section{Experiments}\label{sec:experiments}
In this section, we evaluate the performance of SGFM on a toy 2D example, image generation, and physics-informed generative modeling. We benchmark our method against its closest counterparts, D-Flow \cite{ben2024d} and the top-performing methods in \cite{feng2025guidance}.

\subsection{Toy 2D example}\label{sec:experiment:2D}
We begin the evaluation on low-dimensional synthetic datasets. Specifically, we select a uniform source distribution and an 8-Gaussian target distribution. 
\begin{wrapfigure}{r}{0.4\columnwidth}
\vspace{-2cm}
\begin{center}
\centerline{\includegraphics[width=0.4\columnwidth]{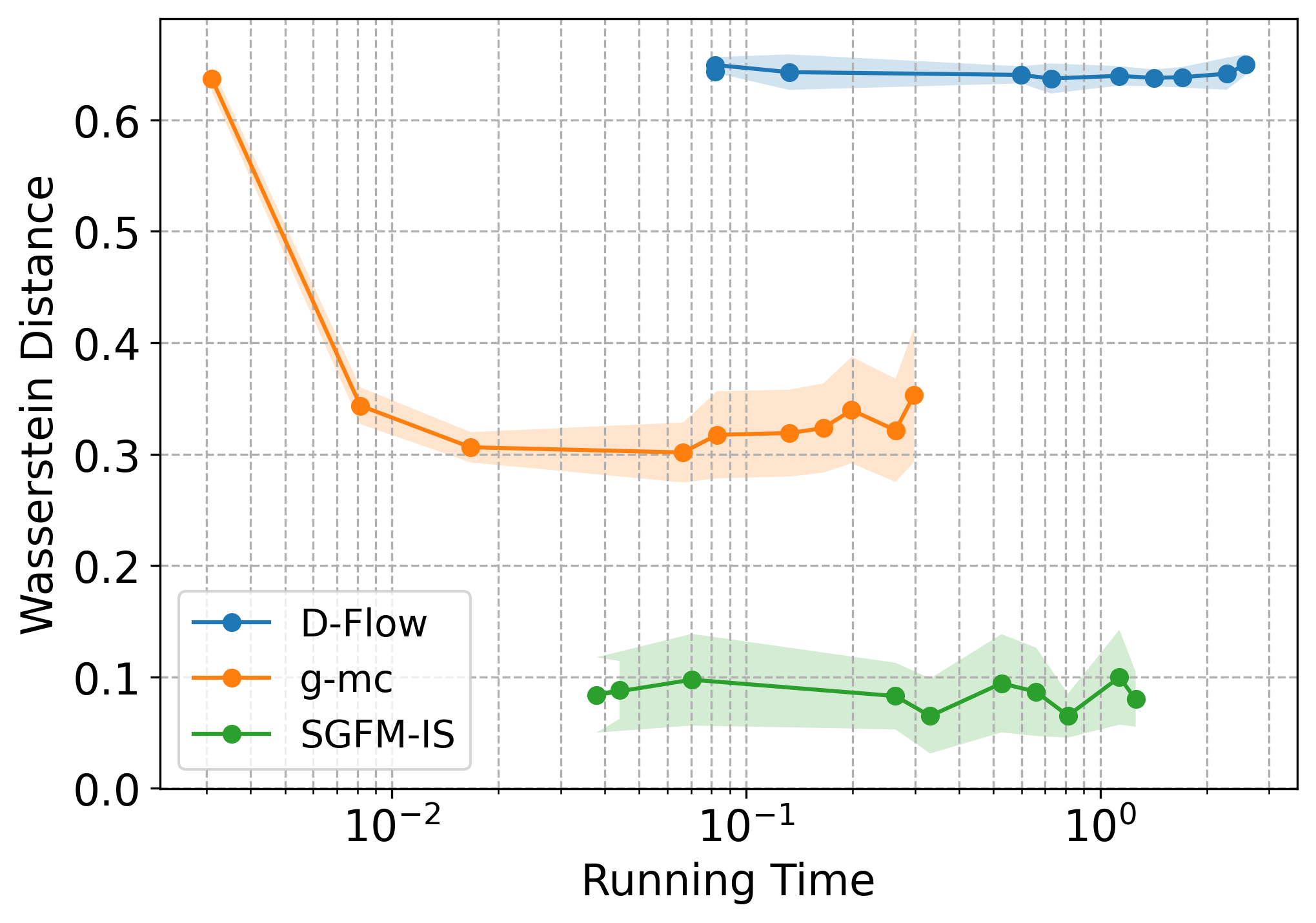}}
\caption{Comparison of sample quality and running time in 2D example.}
\label{fig:2D:comp_uniform}
\end{center}
\end{wrapfigure}
Since the source distribution is non-Gaussian, diffusion-based guidance methods cannot be applied. In this low-dimensional setting, we adopt importance sampling and refer to our method as SGFM-IS. We evaluate the sample quality and inference time by varying the number of function evaluations (NFEs). The sample quality is measured using the Wasserstein distance between the true guided distribution and the generated distribution.
\begin{wrapfigure}{r}{0.4\columnwidth}
\vspace{-0.7cm}
\begin{center}
\centerline{\includegraphics[width=0.4\columnwidth]{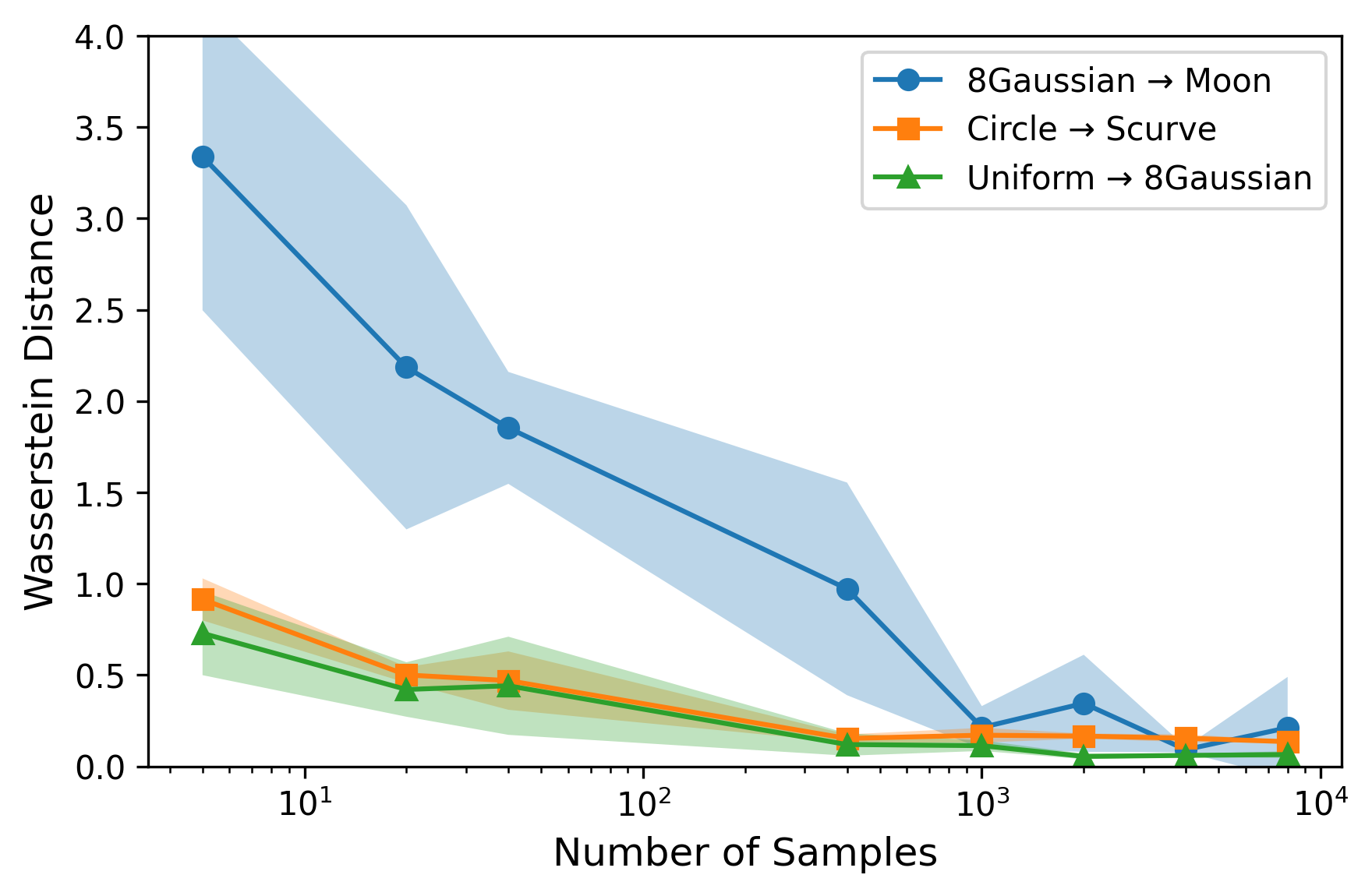}}
\caption{Asymptotic exactness of our guidance method. (Source $\rightarrow$ target) }
\label{fig:2D:illustration:exact}
\end{center}
\vspace{-0.6cm}
\end{wrapfigure}
As shown in Figure~\ref{fig:2D:comp_uniform}, our method consistently achieves lower Wasserstein distances. Moreover, reducing NFEs, which lowers runtime, has only a small effect on sample quality. This observation aligns with the prior findings that the optimal vector field produces straight trajectories and requires fewer evaluations.

Next, we evaluate the exactness of SGFM. Specifically, we investigate how the guidance precision evolves as the number of IS samples increases. The guidance precision is measured by the Wasserstein distance between the generated distribution and the ground-truth distribution, where the latter is approximated using up to $10^4$ samples. As shown in Figure~\ref{fig:2D:illustration:exact}, for each pair of source and target distributions, the Wasserstein error consistently decreases as the number of IS samples increases for every pair of source and target distributions. Therefore, in low-dimensional tasks, our guidance method achieves asymptotic exactness with an increasing number of samples.

\subsection{PDE solution operator}\label{sec:experiments:darcy}
We consider a high-dimensional inverse problem based on the Darcy flow equations \cite{bastek2024physics, jacobsen2025cocogen}. Darcy flow is an elliptic PDE describing fluid flow though a porous medium with permeability field $K$ and pressure field $p$. The flow matching model is trained to sample pairs of $K$ and $p$ occurring as discretized solutions on a square domain with resolution $64\times 64$. The dataset \cite{bastek2024physics} is obtained by solving the PDE using finite differences. For more details, see Appendix \ref{sec:app:darcyeq}-\ref{sec:app:darcyimp}.

The conditional sampling problem is to generate permeability fields that are consistent with a partially observed pressure field. As this problem has many solutions, we let the family of valid solutions be the target distribution. The validity of an inverse estimate $\hat{K}$ is measured by $J(p_{\hat{K}})$, where $J$ computes the target reconstruction error and $p_{\hat{K}}$ is the true pressure field corresponding to $\hat{K}$. Since $p_{\hat{K}}$ is inaccessible, the sampling guidance cost is $J\big(\hat{p})$ where $\hat{p}$ is the pressure field sampled jointly with $\hat{K}$. We evaluate SGFM-HMC, SGFM-OPT \eqref{eq:opt_problem1:our} and SGFM-OPT $\chi^2$ \eqref{eq:opt_problem2:our}, where the latter corresponds to D-Flow with the preferred regularization option, and benchmark these methods against $g^{\text{cov-A}}$.

Figure~\ref{fig:darcy} shows the target pressure and the true pressure solution corresponding to a single outcome of the permeability field for each method. SGFM-OPT $\chi^2$ obtains the best target reconstruction, closely followed by SGFM-HMC. In comparison, SGFM-OPT and particularly $g^{\text{cov-A}}$ suffer from large biases. Additional samples are shown in Appendix \ref{sec:app:darcy_multioutput}. To further analyse the performance, the validity $J(p_{\hat{K}})$, guidance cost $J(\hat{p})$, and physical consistency $\|p_{\hat{K}} - \hat{p}\|$ of 25 samples are shown in Table~\ref{tab:darcy_agg}, reported as the median and interquartile range. SGFM-OPT $\chi^2$ achieves the best validity, followed by SGFM-HMC. In comparison, SGFM-OPT and $g^{\text{cov-A}}$ do not perform significantly better than unconditional sampling. Although $g^{\text{cov-A}}$ achieves the lowest guidance cost, it compromises physical consistency, leading to poor validity. Similarly, SGFM-OPT has worse validity than SGFM-HMC despite achieving lower guidance cost. In contrast, both SGFM-HMC and SGFM-OPT $\chi^2$ maintain physical consistency while SGFM-OPT $\chi^2$ achieves lower guidance cost resulting in the best validity. 

Due to the high-dimensional and complex source distribution, the SGFM methods require longer runtimes compared to $g^{\text{cov-A}}$, which limits the number of samples that can be evaluated. While this constrains our ability to assess how well the methods capture the whole family of solutions, we show that SGFM-HMC performs best in this regard for an example in lower dimension in Appendix \ref{sec:app:ode}. 
\begin{figure}
    \centering
    \includegraphics[width=\linewidth]{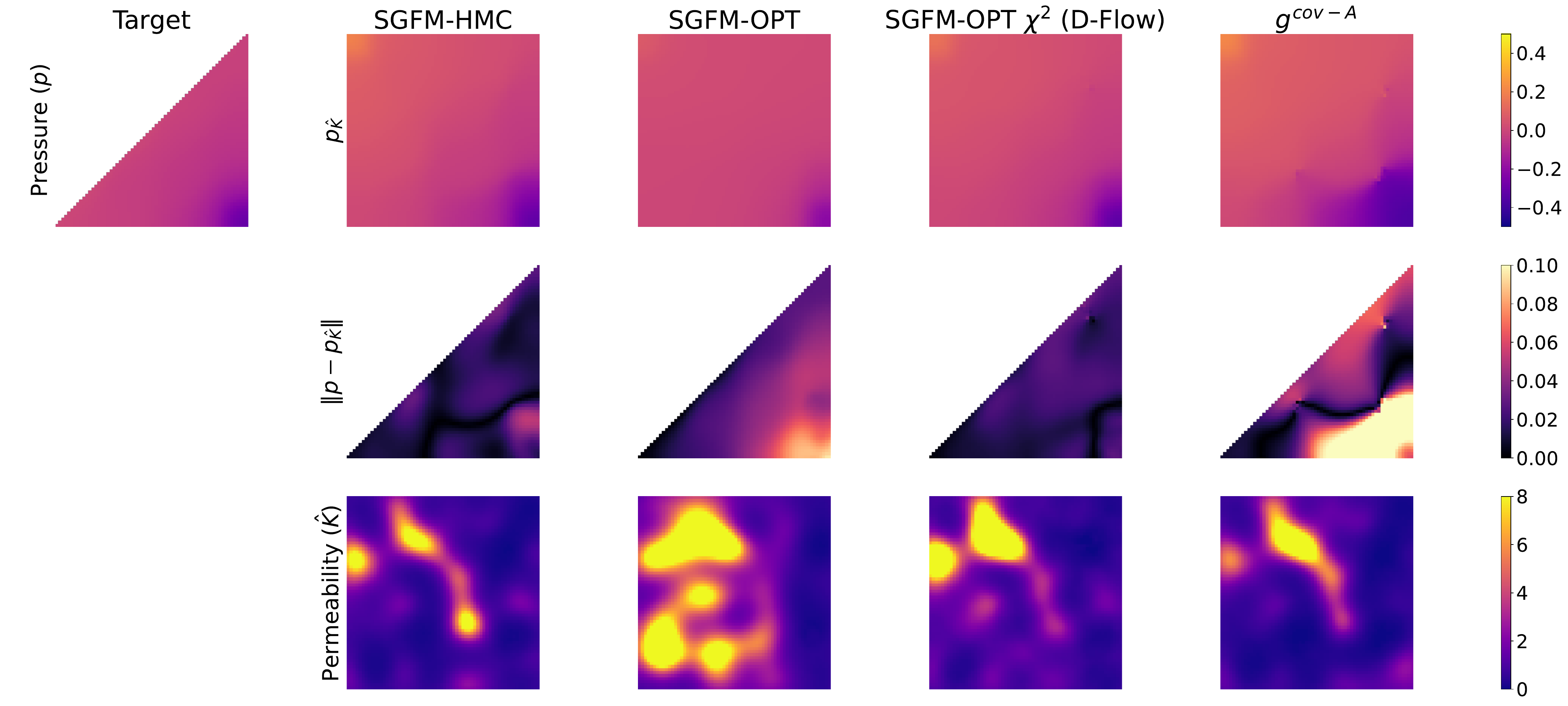}
    \caption{Solutions to the inverse problem of the Darcy flow equations. Top: target pressure and true solution $p_{\hat{K}}$ corresponding to inverse estimate $\hat{K}$; middle: target reconstruction error; bottom: inverse estimate $\hat{K}$ of the permeability field generated by conditional sampling.}
    \label{fig:darcy}
\end{figure}
\begin{table}[ht]
\centering
\caption{Performance of guidance methods in the Darcy flow inverse problem}
\resizebox{\textwidth}{!}{%
\begin{tabular}{lccc}
\toprule
\textbf{Method} & \textbf{Validity of Inverse Estimate} & \textbf{Guidance Cost} & \textbf{Physical Consistency} \\
\midrule
SGFM-HMC                   & 0.591  [0.532, 0.654] & 0.277  [0.239, 0.335] & \textbf{0.189}  [0.169, 0.229] \\
SGFM-OPT                   & 0.907  [0.503, 1.875] & 0.212  [0.164, 0.297] & 0.430  [0.177, 0.771] \\
SGFM-OPT $\chi^2$ (D-Flow) & \textbf{0.474}  [0.416, 0.562] & 0.177  [0.121, 0.203] & 0.191  [0.157, 0.213] \\
$g^{\text{cov-A}}$          & 0.993  [0.857, 1.304] & \textbf{0.033}  [0.030, 0.054] & 0.289  [0.247, 0.351] \\
\midrule
\textit{Unconditional sampling} & 1.006  [0.860, 1.269] & 1.051  [0.905, 1.289] & 0.214  [0.170, 0.274] \\
\bottomrule
\end{tabular}}
\label{tab:darcy_agg}
\end{table}

\subsection{Imaging inverse problem on CelebA}
We conduct experiments on various imaging inverse problems using the CelebA dataset, evaluating five distinct tasks: denoising, deblurring, super-resolution, random inpainting, and box inpainting. Since the target distribution for these inverse problems is typically Dirac or highly concentrated, we apply optimization-based sampling within our SGFM framework. We label the SGFM-OPT variants by 1-6, where SGFM-1 corresponds to \eqref{eq:opt_problem1:our}, SGFM-2 corresponds to \eqref{eq:opt_problem2:our} (\cite{ben2024d}), and SGFM 3-5, SGFM-6 corresponds to \eqref{eq:opt_problem3:our} implemented with different regularizers and projected gradient descent, respectively. Our methods are benchmarked against strong baselines, including the top-performing g-covA and g-covG from \cite{feng2025guidance}, and the PnP method from \cite{martin2024pnp}. For details of the SGFM variants, implementation, and visualizations of the generated images, see Appendix~\ref{sec:app:celeba}.

The experimental results in Table~\ref{tab:celeba} demonstrate that SGFM-OPT variants achieve state-of-the-art performance. Specifically, they outperform $g-covA$ and $g-covG$ across all tasks. Our method is competitive with PnP-flow in most tasks and ranks one class below in deblurring; however, we note that PnP is specifically designed for imaging inverse problems, while our method is more general than that. The results confirm that using the SGFM framework with optimization-based samplers is a highly effective and flexible strategy for imaging inverse problems.


\begin{table}[h]
\centering
\caption{PSNR ($\uparrow$) comparison of methods for inverse problems on CelebA.}
\begin{tabular}{lccccc}
\hline
Method & Denoising & Deblurring & Super-resolution & Rand inpaint & Box inpainting \\
\hline
g-covA            & 26.73 & 29.72 & 18.45 & 19.61 & 24.88 \\
g-covG            & 30.35 & 29.50 & 24.18 & 25.49 & 26.12 \\
PnP               & \textbf{32.14} & \textbf{38.74} & 31.33 & \textbf{33.87} & \textbf{29.92} \\
\hline
SGFM-OPT-1          & 28.51 & 35.12 & 33.30 & 34.02 & 28.51 \\
SGFM-OPT-2  & 28.95 & 35.23 & 33.32 & 34.01 & 28.43 \\
SGFM-OPT-3        & 31.51 & 35.21 & 33.28 & 34.05 & 30.09 \\
SGFM-OPT-4        & \textbf{31.60} & \textbf{35.27} & 33.31 & 34.03 & \textbf{30.12} \\
SGFM-OPT-5        & 28.94 & 35.22 & \textbf{33.33} & \textbf{34.06} & 28.55 \\
SGFM-OPT-6        & 31.54 & 32.60 & 32.10 & 32.36 & 29.19 \\
\hline
\end{tabular}\label{tab:celeba}
\end{table}

\section{Conclusion}
We presented a framework for guided flow matching with theoretical guarantees. The framework reduces the guidance problem to a problem of sampling from a modified source distribution.
Examples on 2D benchmarks, physics-informed generative tasks, and imaging inverse problems demonstrated the effectiveness and flexibility of the framework.
We acknowledge that sampling from the source distribution may present its own challenges, especially for complex, high-dimensional distributions.
Nevertheless, the proposed method offers users the flexibility to select a sampling strategy that balances their desired trade-offs between accuracy and computational cost.

\clearpage
\newpage
\bibliographystyle{unsrt}
\bibliography{ref}

\appendix
\section{Proofs}
\subsection{Proof of Theorem \ref{thm:exact}}\label{sec:app:pf:thm1}
Recall that $T=\phi_1$. Since the transport map $T$ pushes $q_0$ to $q_1$, according to \cite{Villani2009} we have
\begin{align}\label{eq:pf:thm1:1}
    q_1(x_1) = T_{\#} q_0 (x_1) = \frac{q_0(T^{-1}( x_1))} { |\mathrm{det} \nabla T^{-1}(x_1)|}.
\end{align}

The resulting pushforward distribution of $q_0'$  under the transport map $T$ is then
\begin{align}
    T_{\#} q_0' (x_1) = \frac{q_0'(T^{-1}( x_1))} { |\mathrm{det} \nabla T^{-1}(x_1)|} = \frac{q_0(T^{-1}( x_1)) e^{-J(x_1)}}{ Z |\mathrm{det} \nabla T^{-1}(x_1) |} = \frac{q_1(x_1) e^{-J(x_1)}}{Z} =q_1'(x_1),
\end{align}
where the second equality follows from the definition of $q_0'$ and third equality follows from \eqref{eq:pf:thm1:1}. The proof is complete.

\subsection{Proof of Theorem \ref{thm:error_bound}}\label{sec:app:pf:thm2}
Before the proof of Theorem~\ref{thm:error_bound}, we give a useful lemma.
\begin{lemma}\label{lemma:1}
Suppose that $f(x)$ is $L$-Lipschitz continuous in $x$. Then, we have
\begin{align}
    W_2(f_\# \mu, f_\# \nu) \leq L W_2 (\mu,\nu).
\end{align}
\end{lemma}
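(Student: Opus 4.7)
The plan is to use the Kantorovich (coupling) characterization of the $2$-Wasserstein distance. Recall that
\[
W_2^2(\mu,\nu) = \inf_{\pi\in\Gamma(\mu,\nu)} \int \|x-y\|^2\,d\pi(x,y),
\]
and, for Polish spaces with finite second moments, an optimal coupling $\pi^*$ attaining the infimum exists.

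First, I would pick such an optimal coupling $\pi^*\in\Gamma(\mu,\nu)$ for the pair $(\mu,\nu)$. The crucial construction is then to transport this coupling through $f$: define $\tilde\pi := (f\otimes f)_\# \pi^*$, i.e., the pushforward of $\pi^*$ under the product map $(x,y)\mapsto(f(x),f(y))$. A short check on the marginals (using the definition of pushforward applied to test functions of only the first, or only the second, variable) shows that $\tilde\pi\in\Gamma(f_\#\mu,\,f_\#\nu)$, so it is an admissible coupling — generally suboptimal, but that suboptimality gives exactly the inequality we want.

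Next, I would bound the Wasserstein distance using $\tilde\pi$ and apply the change-of-variables formula for pushforwards followed by the Lipschitz hypothesis:
\[
W_2^2(f_\#\mu, f_\#\nu) \;\leq\; \int \|u-v\|^2\,d\tilde\pi(u,v) \;=\; \int \|f(x)-f(y)\|^2\,d\pi^*(x,y) \;\leq\; L^2\!\int \|x-y\|^2\,d\pi^*(x,y) \;=\; L^2\,W_2^2(\mu,\nu).
\]
Taking square roots yields the claim.

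There is no substantive obstacle in this proof; the only thing to be careful about is ensuring the product pushforward $(f\otimes f)_\#\pi^*$ is indeed a coupling with the stated marginals, which follows directly from the definition of pushforward and the fact that $f_\#\mu$ and $f_\#\nu$ are by assumption the marginals of this measure. Existence of the optimal coupling $\pi^*$ is standard under the implicit assumption of finite second moments (which is needed for $W_2$ to be finite in the first place); if one prefers to avoid this, one may instead take a minimizing sequence of couplings and pass to the limit, obtaining the same inequality.
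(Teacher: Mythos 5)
Your proof is correct and follows essentially the same route as the paper: push a coupling of $(\mu,\nu)$ forward through $f\times f$ to obtain an admissible coupling of $(f_\#\mu, f_\#\nu)$, apply the Lipschitz bound under the integral, and take square roots. If anything, your version is slightly more careful than the paper's, since you only use the (true) fact that pushforwards of couplings are couplings, rather than the paper's claim that every coupling of the image measures arises as such a pushforward, which is not needed for the inequality and can fail when $f$ is not injective.
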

\begin{proof}
The 2-Wasserstein definition gives
\begin{align*}
    W_2^2(f_\# \mu, f_\# \nu) = \inf_{\pi' \in \Gamma(f_\# \mu, f_\# \nu) } \EE_{(f(x),f(y))\sim \pi'} \left\| f(x) - f(y) \right\|^2.
\end{align*}
Denote $(f \times f)(x,y) = ((f(x), f(y))$. For every $\pi' \in \Gamma (f_\# \mu, f_\# \nu)$, we have a corresponding $\pi \in \Gamma(\mu,\nu)$ that satisfies $\pi' = (f \times f)\# \pi$. Then, we have
\begin{align}\label{eq:pf:lemma}
    W_2^2(f_\# \mu, f_\# \nu) &= \inf_{\pi \in \Gamma(\mu, \nu) } \EE_{(x,y)\sim \pi'} \left\| f(x) - f(y) \right\|^2 \nonumber \\
    &\leq L^2 \inf_{\pi \in \Gamma(\mu, \nu) } \EE_{(x,y)\sim \pi'}  \left\| x - y \right\|^2 \nonumber \\
    & = L^2 W_2^2(\mu,\nu),
\end{align}
where the inequality follows from the Lipschitz property of $f$. Taking the square root on both sides of \eqref{eq:pf:lemma} completes the proof.
\end{proof}

Now we are ready to prove Theorem~\ref{thm:error_bound}.
By virtue of the triangle inequality for the Wasserstein distance \citep[Lemma~5.3]{santambrogio2015optimal}, we have
\begin{align}\label{eq:pf:thm2:1}
    W_2(q_1',[\phi_1^{\theta}]_\# \tilde{q}_0) &= W_2([\phi_1]_\# q_0', [\phi_1^{\theta}]_\# \tilde{q}_0) \\
    & \leq W_2([\phi_1]_\# q_0', [\phi_1^{\theta}]_\# {q}_0') + W_2([\phi_1^{\theta}]_\# q_0', [\phi_1^{\theta}]_\# \tilde{q}_0 ).
\end{align}

Since the learned vector field has uniform error bound $\epsilon$ and is $L_v$-Lipschitz continuous, by virtue of Theorem 1 in \cite{benton2023error}, the first term can be bounded by 
\begin{align}\label{eq:pf:thm2:2}
    W_2([\phi_1]_\# q_0', [\phi_1^{\theta}]_\# {q}_0') \leq \epsilon e^{L_v}.
\end{align}

In what follows, we analyze the Lipschitz property of $\phi_t^{\theta}$. Recall that $\phi_t^{\theta}$ is the flow of the learned vector field $v_t^{\theta}$. Let $x_t$ and $y_t$ be the solutions of the ODEs
\begin{align*}
    &d{x}_t = v_t^{\theta}(x_t) dt, \quad x_0 = x_0 \nonumber \\
    &d{y}_t = v_t^{\theta}(y_t) dt, \quad y_0 = y_0,
\end{align*}
respectively. Define $\Delta_t = \left\| x_t - y_t\right\|^2$. Then, we have
\begin{align*}
    \frac{d{\Delta}_t}{dt}  = 2 \langle x_t - y_t, \frac{d{x}_t}{dt} - \frac{d{y}_t}{dt} \rangle  = 2 \langle x_t - y_t, v_t^{\theta}(x_t) -v_t^{\theta}(y_t)\rangle \leq 2L_v \left\| x_t - y_t\right\|^2 = 2L_v \Delta_t.
\end{align*}
Integrating from $0$ to $t$ gives
\begin{align*}
    \Delta_t \leq \Delta_0 + 2L_v \int_{0}^t \Delta_s ds.
\end{align*}
By virtue of Grönwall’s inequality, we have
\begin{align*}
    \Delta_t \leq \Delta_0 e^{2L_v t} = \left\| x_0 - y_0\right\|^2 e^{2L_v t}.
\end{align*}
By taking the square root, we have that $\phi_t^{\theta}(x)$ is $e^{L_v t}$-Lipschitz continuous in $x$. In particular, at $t=1$, $\phi_1^{\theta}(x)$ is $e^{L_v}$ Lipschitz continuous.
Then, it follows from Lemma~\ref{lemma:1} that the second term is bounded by 
\begin{align}\label{eq:pf:thm2:3}
    W_2([\phi_1^{\theta}]_\# q_0', [\phi_1^{\theta}]_\# \tilde{q}_0 ) \leq e^{L_v} W_2(q_0', \tilde{q}_0).
\end{align}

Substituting \eqref{eq:pf:thm2:2} and \eqref{eq:pf:thm2:3} into \eqref{eq:pf:thm2:1}, we have the desired result. The proof is complete.

\section{Additional Discussions}

\subsection{Related Works}
\paragraph{Diffusion guidance:} Conditional sampling has been widely studied in diffusion models \cite{chung2022diffusion,song2023loss,ye2024tfg,guo2024gradient,wu2023practical,xu2024provably}. However, the diffusion model requires the source distribution to be Gaussian, and cannot handle general source distributions. Therefore, these guidance methods cannot be applied here.


\paragraph{Flow matching guidance:} The flow guidance methods can be divided into two groups:
training-based guidance and
training-free guidance. Training-based guidance \cite{zheng2023guided} requires retraining when we have a different conditioning. Therefore, this paper focuses on training-free guidance \cite{ben2024d,feng2025guidance}. One closely related training-free guidance method is D-Flow \cite{ben2024d}, which proposes to optimize the source samples via a regularized optimization problem. 
However, its optimization objective is heuristic, whereas our framework provides the missing theoretical foundation.
Besides, \cite{feng2025guidance} proposed a training-free guidance method that keeps the original source distribution and modifies the vector field. 
Such an approach generates curved vector fields and, therefore, requires a large number of discretization steps to integrate the ODE. 
Moreover, the exactness of this guidance method applies to a limited class of pre-trained vector fields and lacks generality.


\paragraph{Guidance via stochastic optimal control (SOC):} Optimal control methods have been used to guide generative models \cite{uehara2024fine,tang2024fine,wang2024training,domingo2024adjoint}. Specifically, \cite{wang2024training} augments the vector field with an additional control term, obtained by solving a SOC problem. However, \cite{wang2024training} does not connect the generated distribution with the target distribution, and there is a bias between these two distributions. 
The works \cite{uehara2024fine,tang2024fine} cancel out this bias by both modifying the vector field and shifting the initial distribution. More recently, \cite{domingo2024adjoint} showed that solely adjusting the vector field is able to remove the bias if the noise schedule is appropriately selected.
Our method is orthogonal to the guidance methodology of \cite{domingo2024adjoint}: 
we remove the bias by solely shifting the source distribution. Moreover, whenever we have a new guidance energy function, 
SOC-based guidance methods have to re-solve the SOC problem, which is computationally expensive.

\paragraph{Guidance by optimizing the source distribution: } Conditional generation by optimizing the source distribution has been explored in \cite{ben2024d,wallace2023end,tang2024inference,novack2024ditto,karunratanakul2024optimizing}. These works propose to propagate the gradient from the target criteria through the whole generation
process to update the initial noise. However, their optimization objectives are heuristically designed without theoretical justification. Besides, these optimization-based approaches are easy to get trapped in local minima and lose diversity. In this paper, we propose a unified framework with theoretical justification and flexible choices of sampling methods.

\subsection{Sampling algorithms}
\subsubsection{Importance sampling}\label{sec:app:is}
Following the discussion on importance sampling (IS) in Section \ref{sec:sampling:exact}, a detailed outline of the method with target density $q=q_0'$ and proposal density $m=q_0$ is given in Algorithm \ref{alg:importance_sampling} \cite{Chopin2020}.
\begin{algorithm}[t]
\caption{Importance Sampling} \label{alg:importance_sampling}
\begin{algorithmic}[1]
    \STATE \textbf{Input}: samples from $x_0\sim q_0$
    \STATE Set $w(\cdot)\triangleq\frac{q_0'(\cdot)}{q_0(\cdot)}=e^{-J\circ T^*(\cdot)}$
    \STATE Compute weights $W^n=\frac{w(x_0^n)}{\sum\limits_{m}w(x_0^m)}$
    \STATE Sample $x_0'$ from $\{x_0^n\}$ with probabilities $\{W^n\}$
    \STATE \textbf{Output}: sample $x_0'$ from $q_0'$
\end{algorithmic}
\end{algorithm}

\subsubsection{Unadjusted and Metropolis Adjusted Langevin algorithms}
The Unadjusted Langevin Algorithm (ULA) \cite{robert1999monte} generates approximate samples from a target distribution with density $q(x) \propto \exp(-U(x))$ by discretizing the Langevin stochastic differential equation (SDE). Specifically, given a step-size $\eta_k > 0$, the ULA update is
\begin{align}
x_{k+1} = x_k - \eta \nabla U(x_k) + \sqrt{2\eta_k}, \xi_k,
\end{align}
where $\xi_k \sim \mathcal{N}(0, I)$ are independent Gaussian noise. Due to discretization errors, ULA introduces sampling bias. 

The Metropolis Adjusted Langevin Algorithm (MALA) \cite{robert1999monte} improves upon ULA by incorporating a Metropolis-Hastings correction step to ensure exact sampling from the target distribution $q(x)$. Given a current state $x_k$, MALA proposes a candidate $x'$ via
\begin{align}
x' = x_k - \eta \nabla U(x_k) + \sqrt{2\eta}, \xi_k,
\end{align}
and accepts it with probability:
$\alpha(x_k, x') = \min\left\{1, \frac{\pi(x')q(x_k|x')}{q(x_k)q(x'|x_k)}\right\}$,
where $q(\cdot|\cdot)$ denotes the transition density induced by the proposal step. If rejected, the chain remains at $x_k$. This correction guarantees that the stationary distribution matches exactly the target distribution $\pi(x)$.

\subsubsection{Hamiltonian Monte Carlo}\label{sec:app:hmc}
Hamiltonian Monte Carlo (HMC) \cite{neal2011mcmc} is an accept--reject MCMC method for unnormalized continuous densities on $\mathbb{R}^d$ where partial derivatives of the log density exist. By associating the target variable with the position of a particle in space and the density with its potential energy, the method introduces an auxiliary momentum variable and implements Hamiltonian dynamics to achieve extensive exploration while maintaining a high acceptance probability.

Specifically, the unnormalized target density $q$ is expressed in canonical form $q(x)\propto e^{-U(x)}$, where $U(x)\triangleq -\ln q(x)$ represents the potential energy. The momentum variable $v$ gives the kinetic energy $K(v)\triangleq\frac{\|v\|^2}{2}$. This forms the Hamiltonian $H(x,v)=U(x)+K(v)$ with the associated joint distribution $\pi(x,v)\propto e^{-\big(U(x) + K(v)\big)}$, where $x$ and $v$ are considered independent with marginals $q$ and the standard Gaussian distribution respectively. HMC generates samples from $\pi$ with MCMC, where each chain iteration starts by resampling the momentum, $v'\sim \mathcal{N}(0, I)$, while keeping the position unchanged, $x'=x$. Then, a Metropolis update step is implemented by generating proposals using Hamiltonian dynamics
\begin{equation}\label{eq:hamiltonian}
    \frac{dx}{dt}=v, \quad \frac{dv}{dt}=-\nabla_xU
\end{equation} to propagate $(x', v')$ along trajectories of constant energy to $(x^*, v^*)$ and accepting the new state with probability $\alpha=\frac{\pi(x^*, v^*)}{\pi(x', v')}$.

Integrating \eqref{eq:hamiltonian} with the leapfrog method (Algorithm \ref{alg:leapfrog}) ensures $\alpha\approx 1$, as $H$ is approximately constant and the transformation is volume-preserving. Still, the integration may move $x$ to positions with very different marginal density $U(x)$. The resampling step prevents the marginal $U$ from being constrained by the initial value of $H$. Thus, the momentum variable is critical for efficient exploration of the space. Algorithm \ref{alg:hmc} implements HMC when $q=q_0'$, initializing the process by $q_0$.

HMC can be tuned by appropriately choosing the step size and the number of leapfrog steps \cite{neal2011mcmc}. It is generally advised to choose the parameters such that the empirical acceptance rate is around the optimal value of 65\%. One may also randomly select these parameters from fairly small intervals at each Markov chain iteration to ensure that both big steps and fine-tuning steps can be taken at various points in the chain.

\begin{algorithm}[t]
\caption{Leapfrog integrator $\eta_{\epsilon, L}$} \label{alg:leapfrog}
\begin{algorithmic}[1]
    \STATE \textbf{Input}: initial state $(x_0, v_0)$\;
    $v_{0}=v_0-\frac{\epsilon}{2}\nabla U(x_0)$\;
    \For{$m=0$ \KwTo $L-1$}{
    $x_{m + 1}=x_m + \epsilon v_{m}$\;
    $v_{m + 1} = v_m - \epsilon\nabla U(x_{m+1})$\;}
    $v_{L}=v_L+\frac{\epsilon}{2}\nabla U(x_L)$
    \STATE \textbf{Output}: $(x_{L}, v_{L})$
\end{algorithmic}
\end{algorithm}

\begin{algorithm}[t]
\caption{Hamiltonian Monte Carlo} \label{alg:hmc}
\begin{algorithmic}[1]
    \STATE \textbf{Input}: samples from $x_0\sim q_0$\;
    \For{$n=0$ \KwTo $N-1$}{
    $v_n'\sim N(0, I)$\;
    $(x^*, v^*)=\eta_{\epsilon, L}(x_n, v_n')$\;
    $\alpha = e^{-\big(U(x^*)+K(v^*)\big)+U(x_n)+K(v_n')}$\;
    Draw $u \sim \mathcal{U}(0,1)$\;
    \If{$u < \alpha$}{
    $x_{n+1} = x^*$}
    \Else{$x_{n+1} = x_n$}
    }
    \STATE Set $x_0'=x_N$
    \STATE \textbf{Output}: sample $x_0'$ from $q_0'$
\end{algorithmic}
\end{algorithm}
\subsubsection{Optimization-based sampling}\label{sec:app:optsamp}
To solve \eqref{eq:opt_problem:our} or \eqref{eq:opt_problem2:our}, we can use any preferred optimization algorithm such as stochastic gradient descent (SGD) or Limited-memory Broyden–Fletcher–Goldfarb–Shanno (L-BFGS). Using the \text{torchdiffeq} package, the gradient of the objective can be computed via automatic differentiation. With access to the gradient, we iteratively refine the initial sample using a standard update rule. Starting from an initial $x_0^{(0)}$, the update takes the form 
$$x_0^{(k+1)}=\texttt{OPT\_Alg}(x_0^{(k)} ),$$
where \texttt{OPT\_Alg} denotes the chosen optimization routine. We can feed the final $x_0^{K}$ into $T^*$ to generate the sample $x_1 = T^*(x_0^{K})$.
\section{Additional Experimental Details}

\subsection{2D experiments}\label{sec:app:exp:2D}
In this section, we present more details on the 2D example in Section~\ref{sec:experiment:2D}, including implementation details and additional experimental results. All experiments were run on a single NVIDIA A100 GPU.

\subsubsection{Implementation details}

\paragraph{Flow matching model:} The vector field is approximated using a time-varying multilayer perceptron (MLP) adopted from \cite{tong2023improving}. We train a standard vector field model using an independent coupling distribution $\pi= q_0 \times q_1$, and an optimal vector field model using the optimal joint distribution $\pi^*$ in \eqref{eq:CFM:loss}. Each model is trained for 20,000 epochs with a batch size of 256, employing the Adam optimizer.

\paragraph{Conditional sampling:} We consider three pairs of source and target distributions: (i) 8-Gaussian to Moon, (ii) Uniform to 8-Gaussian, and (iii) Circle to S-Curve. For these three tasks, we respectively select loss functions $J(x)=((x[2])^2)/0.4$, $J(x) = 4|x[1]+x[2]|$, and $J(x) = 5|x[1]-x[2]|$, where $x := (x[1], x[2])$.

\paragraph{Implementation of D-Flow in \cite{ben2024d}:}
Among several choices of regularization terms in D-Flow, we employ $- \ln q_1$, which ensures the generated samples stay close to the target distribution $q_1$. Although $q_1$ generally lacks an explicit form, in this 2D experiment, we approximate it using kernel density estimation. 
For the pre-trained vector fields used in D-Flow, we evaluate two variants: a standard model trained with an independent $\pi= q_0 \times q_1$ and an optimal vector field model trained with $\pi^*$ in \eqref{eq:CFM:loss}. 
We refer to these variants as D-Flow and D-Flow-OT, respectively. In the optimization process, we use 60 optimization steps and employ SGD as the optimizer. 

\paragraph{Implementation of methods in \cite{feng2025guidance}:}
Among several training-free methods in \cite{feng2025guidance}, we select two of the best methods $g^{sim-MC}$ and $g^{MC}$, which perform well in low-dimensional settings. We use 100 and 50 Monte Carlo samples for $g^{sim-MC}$ and $g^{MC}$, respectively. For both methods, the pre-trained model is selected as the optimal vector field.

\paragraph{Implementation of SGFM:} We evaluate SGFM with four sampling methods: IS, ULA, MALA, and HMC. For ULA, MALA, and HMC, we run 100 sampling iterations. In both MALA and HMC, the proposal step‐size is tuned to target an acceptance rate of 60\%. Besides, each HMC iteration employs $L=5$ leapfrog steps. 

\paragraph{Evaluation metric:} The sample quality is measured using the 1-Wasserstein distance between the generated and ground truth distributions. The generated distribution is measured using $10^3$ samples, while the ground truth distribution is estimated using $10^4$ samples. All experiments were conducted ten times, with solid lines and shaded areas representing the mean and standard deviation.

\subsubsection{Additional results}

We conduct an extensive comparison across different source and target distributions. The generated distributions are visualized in Figure~\ref{fig:my-10row-comparison}. 
In the first experiment (8-Gaussian to moon), we observe that all the baseline methods and most Langevin-based algorithms struggle. The key reason is the highly multimodal landscape of this task, which makes the sampler easy to get trapped in local minima. However, SGFM-IS successfully navigates the posterior distribution.
In the second experiment (circular to S-curve), many guidance methods perform well, but D-Flow tends to overemphasize minimizing the loss $J$ and loses sample diversity.
In the third experiment (uniform to S-curve), we observe that D-Flow fails to generate a satisfactory conditional distribution, and $g^{MC}$ exhibits slight deterioration in sample quality. 
Across every experiment, SGFM-IS consistently delivers high-quality, diverse samples, with SGFM-MALA and SGFM-HMC providing strong alternatives in the latter two tasks.

\begin{figure}[htbp]
  \centering
  \begin{tabular}{ cccc}
    & & & \\
    \midrule
    \textbf{ $q_1$}
      & \includegraphics[width=0.15\linewidth,valign=t]{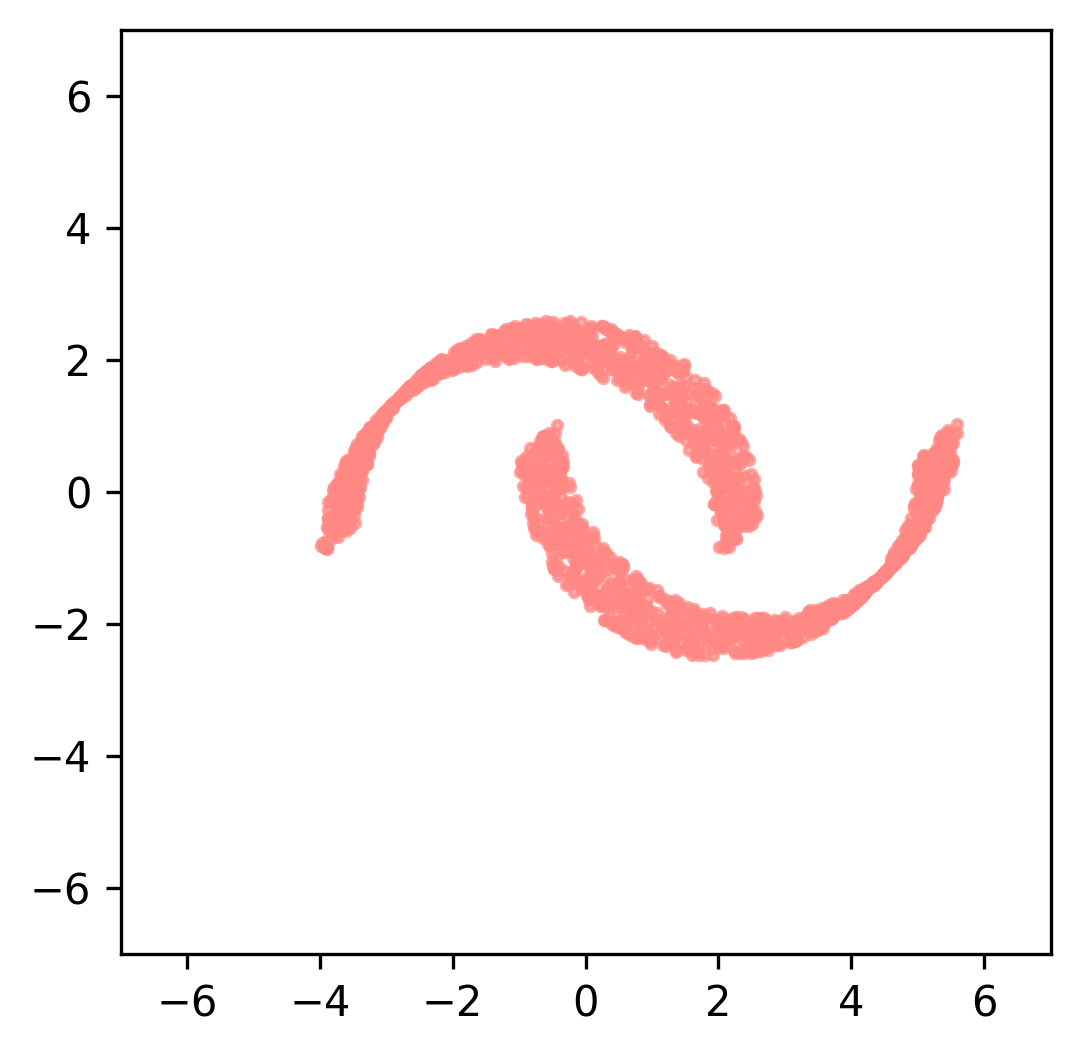}
      & \includegraphics[width=0.15\linewidth,valign=t]{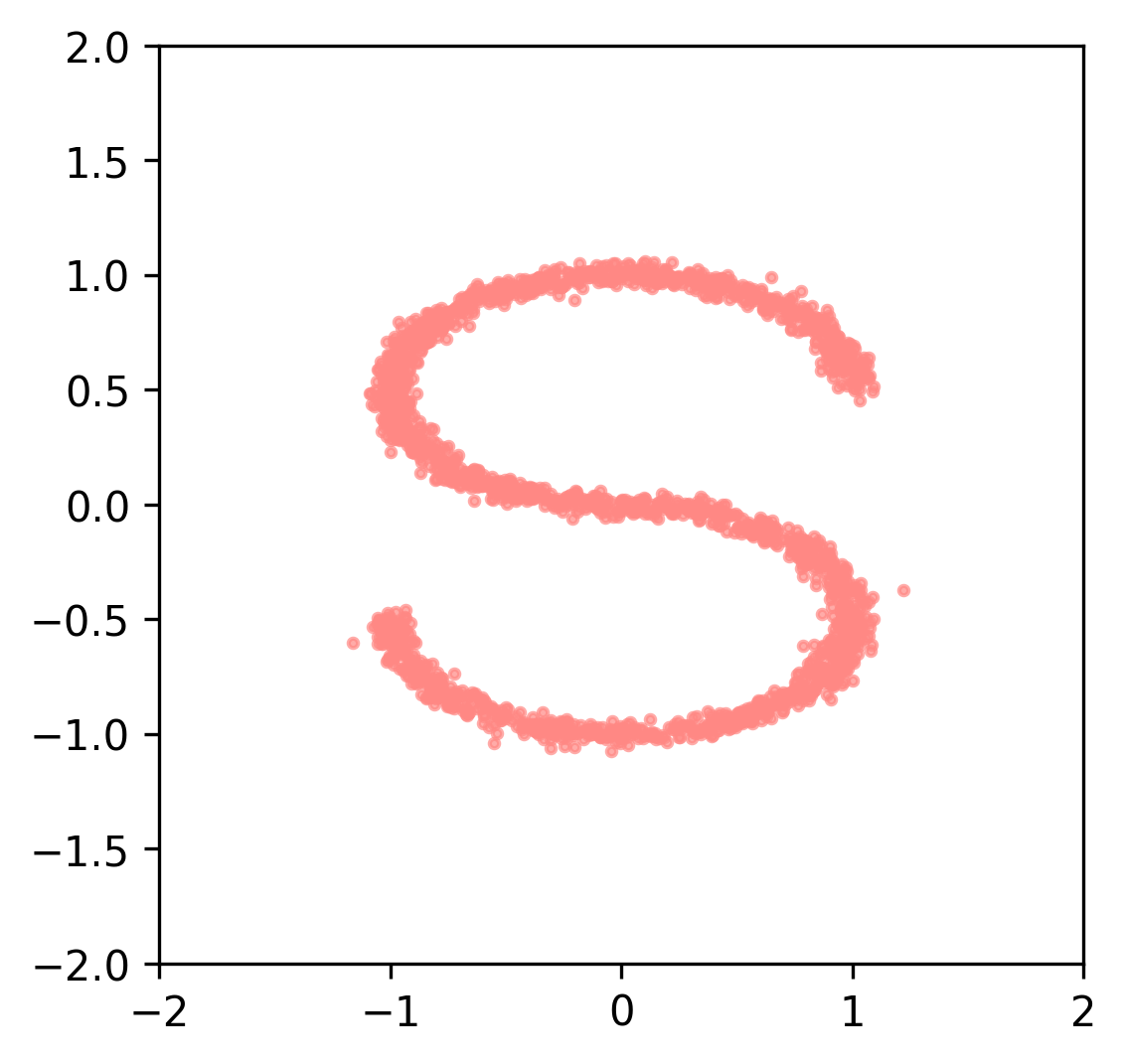} 
      & \includegraphics[width=0.15\linewidth,valign=t]{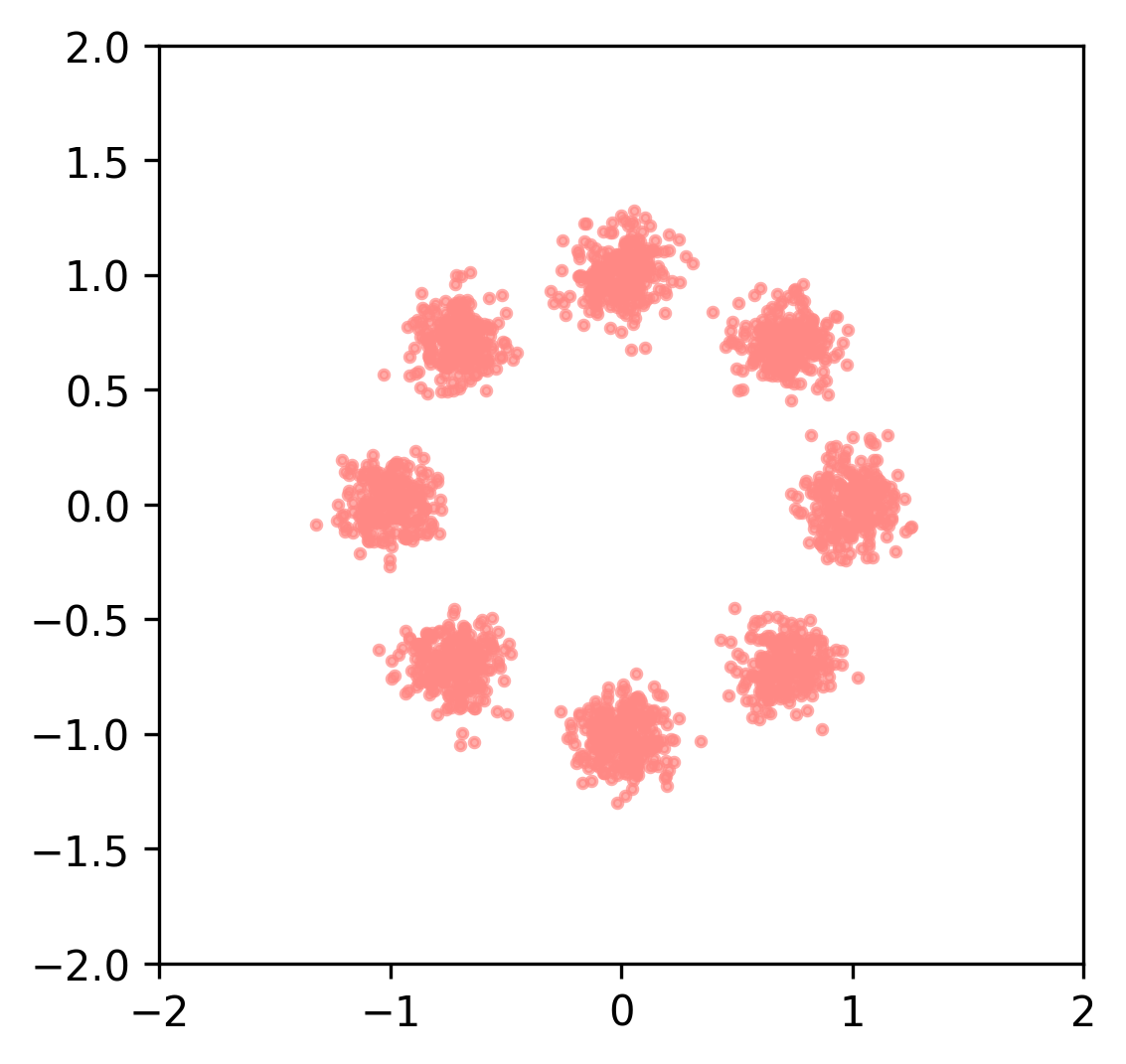} \\
    \textbf{ $q_1'$ (ground truth)}
      & \includegraphics[width=0.15\linewidth,valign=t]{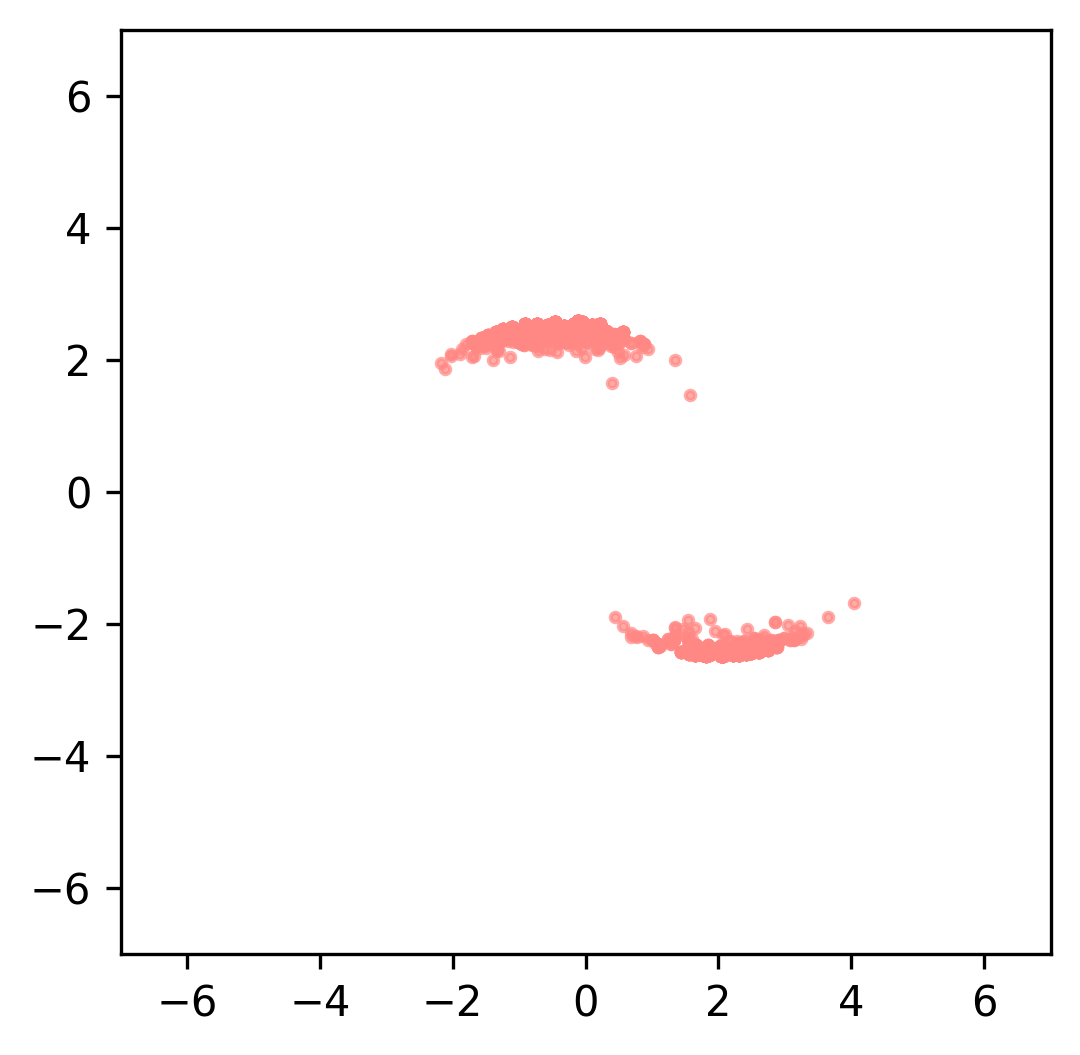}
      & \includegraphics[width=0.15\linewidth,valign=t]{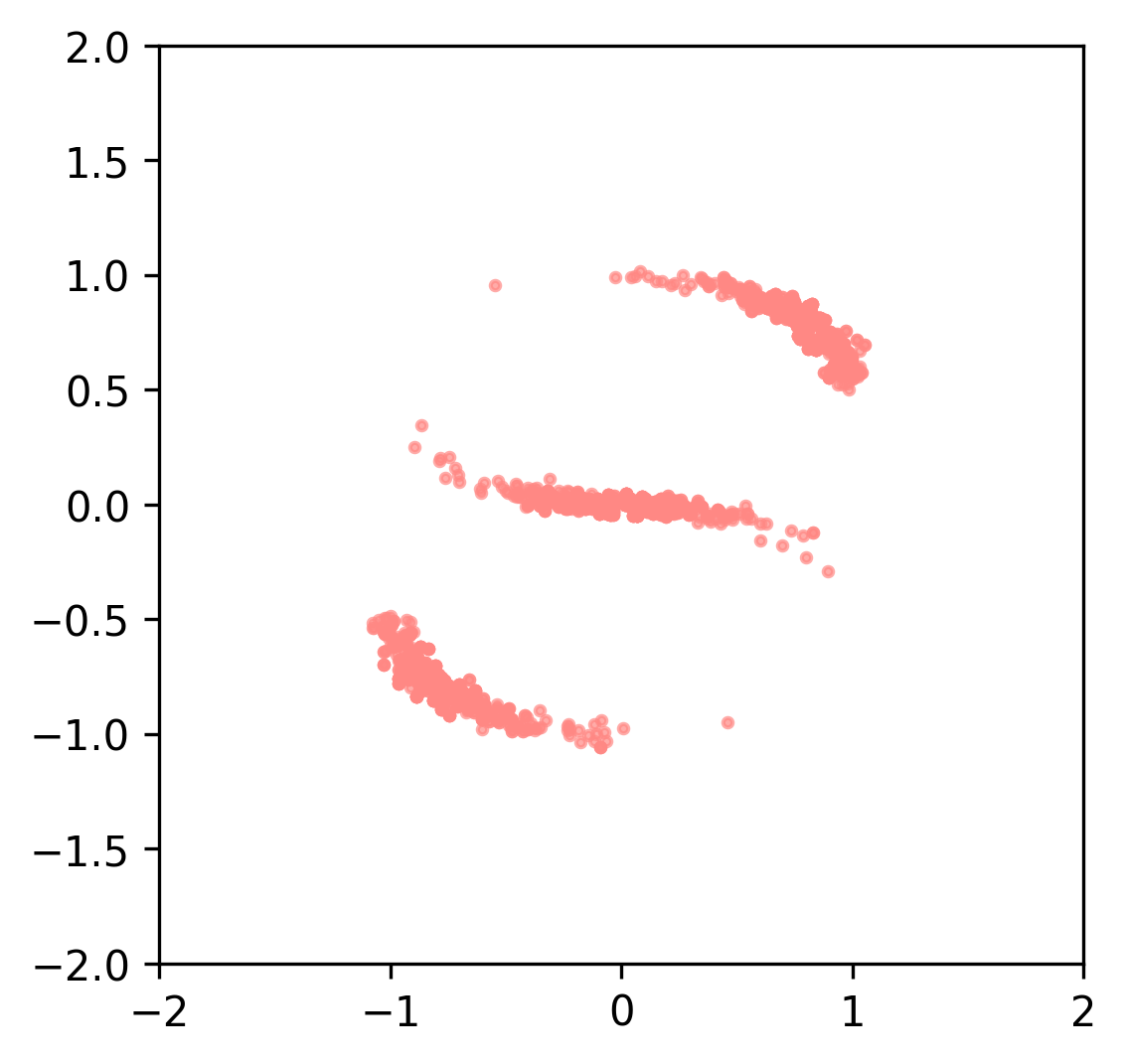} 
      & \includegraphics[width=0.15\linewidth,valign=t]{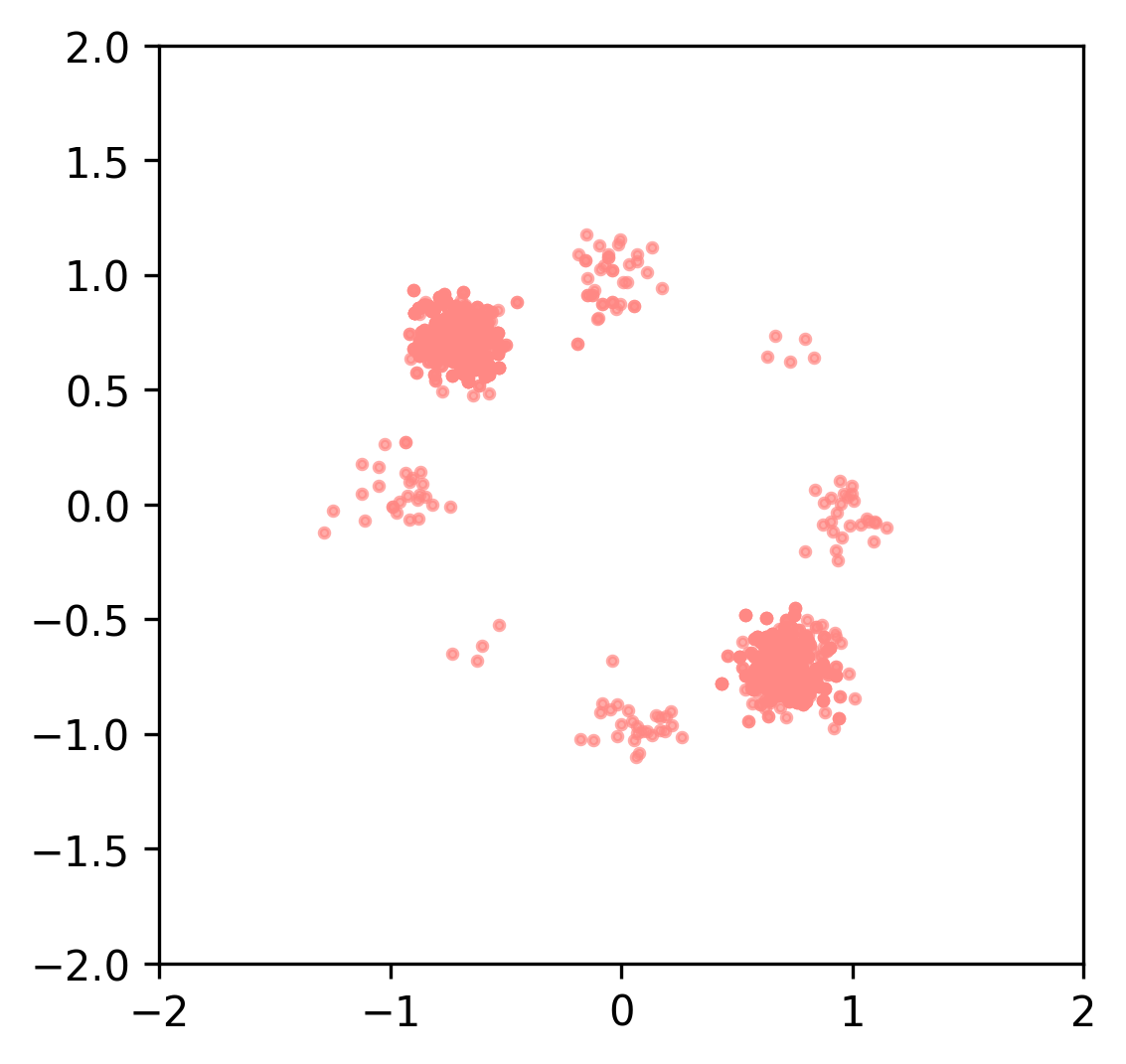} \\
    \textbf{D-Flow \cite{ben2024d}}
      & \includegraphics[width=0.15\linewidth,valign=t]{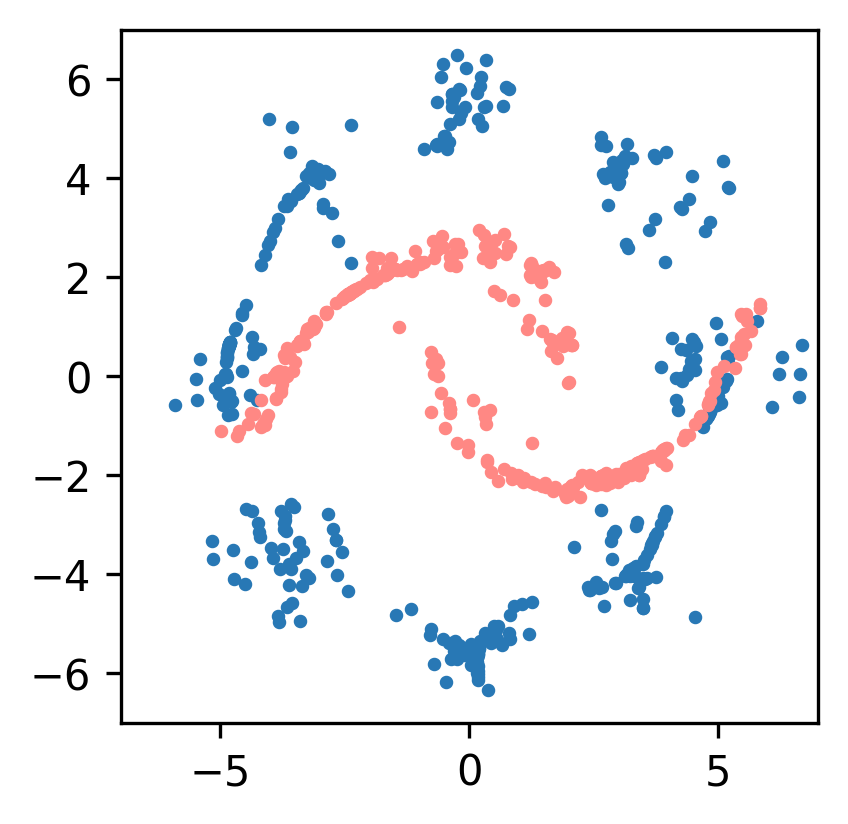}
      & \includegraphics[width=0.15\linewidth,valign=t]{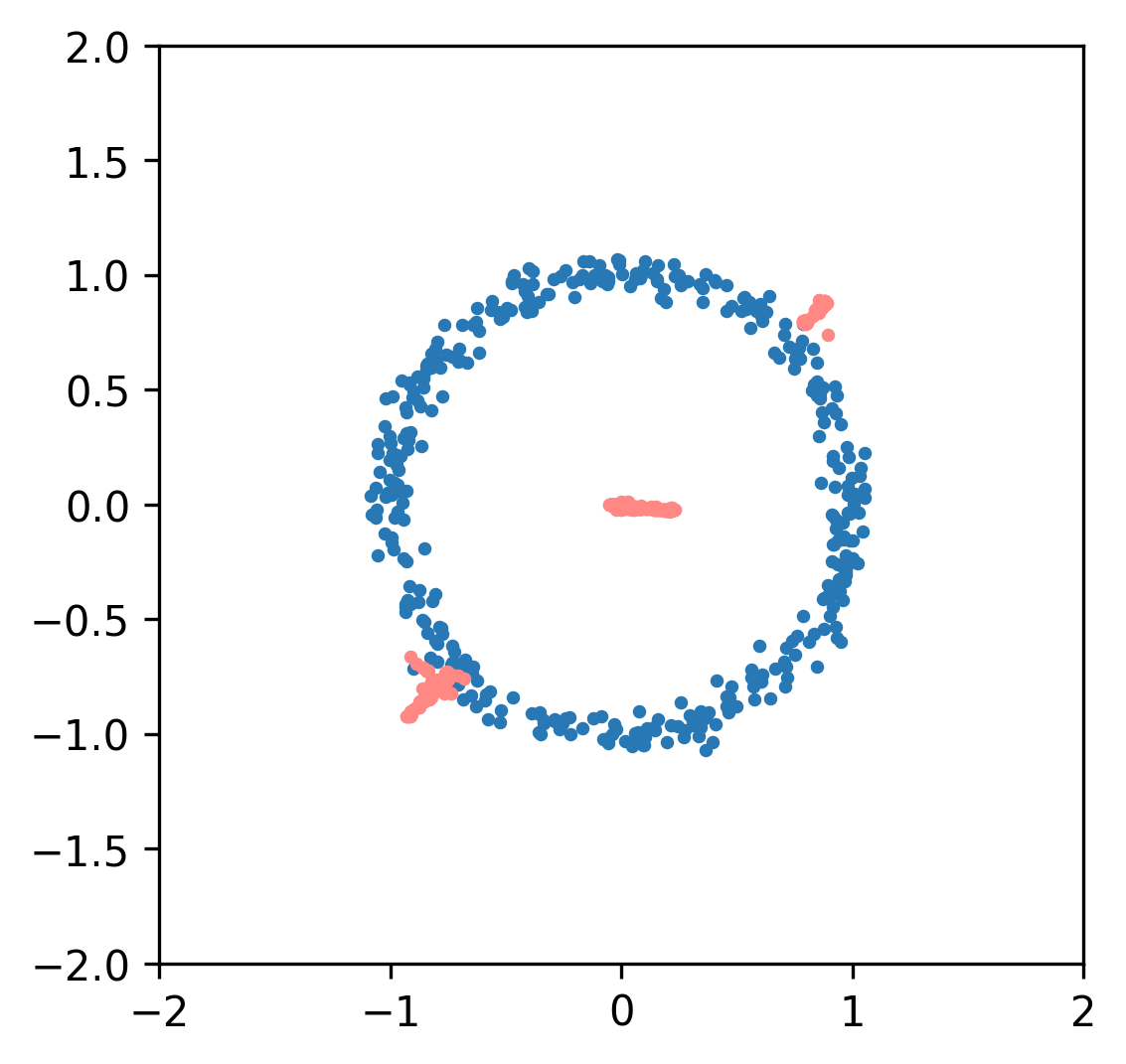} 
      & \includegraphics[width=0.15\linewidth,valign=t]{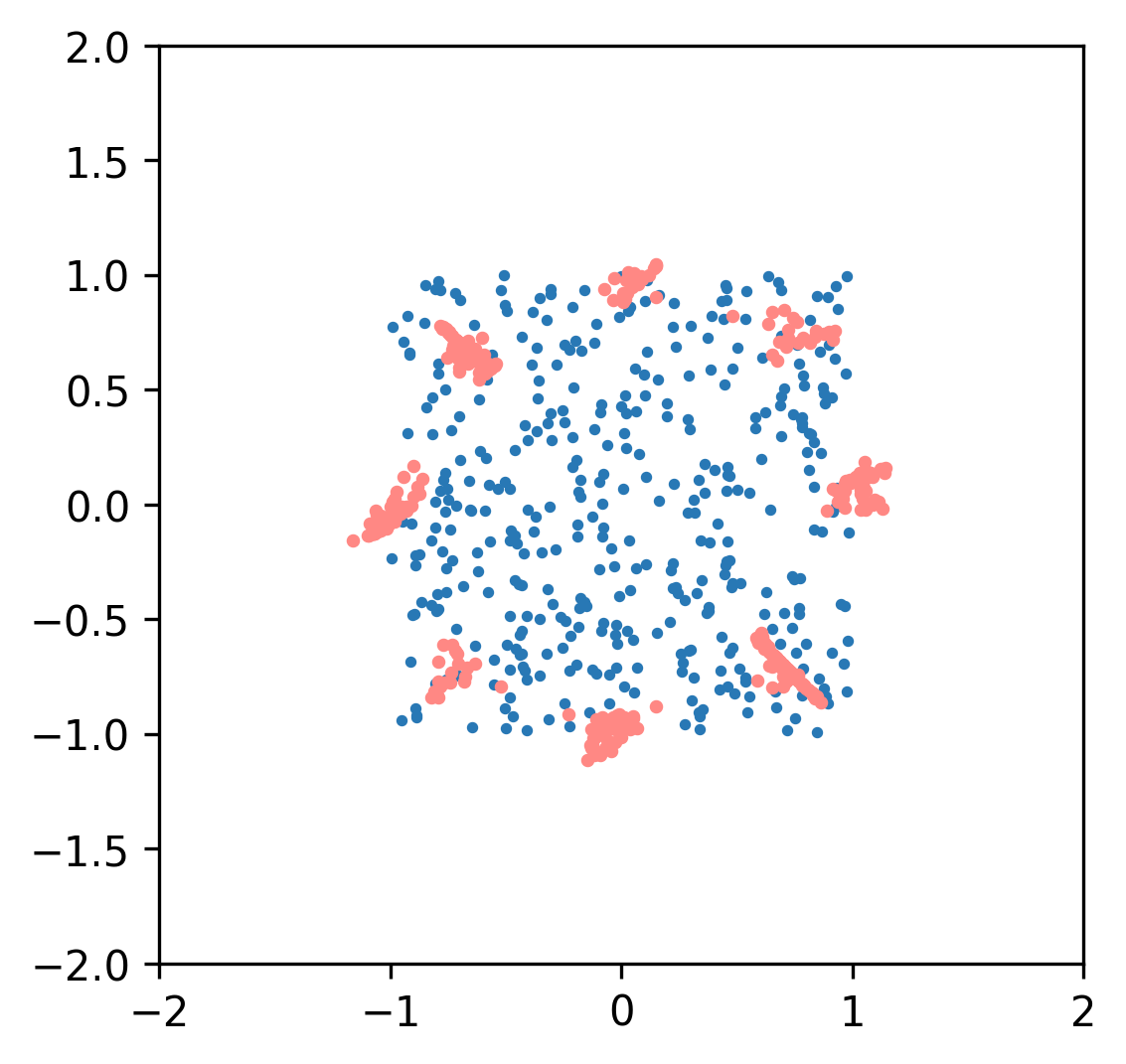} \\
    \textbf{D-Flow-OT \cite{ben2024d}}
      & \includegraphics[width=0.15\linewidth,valign=t]{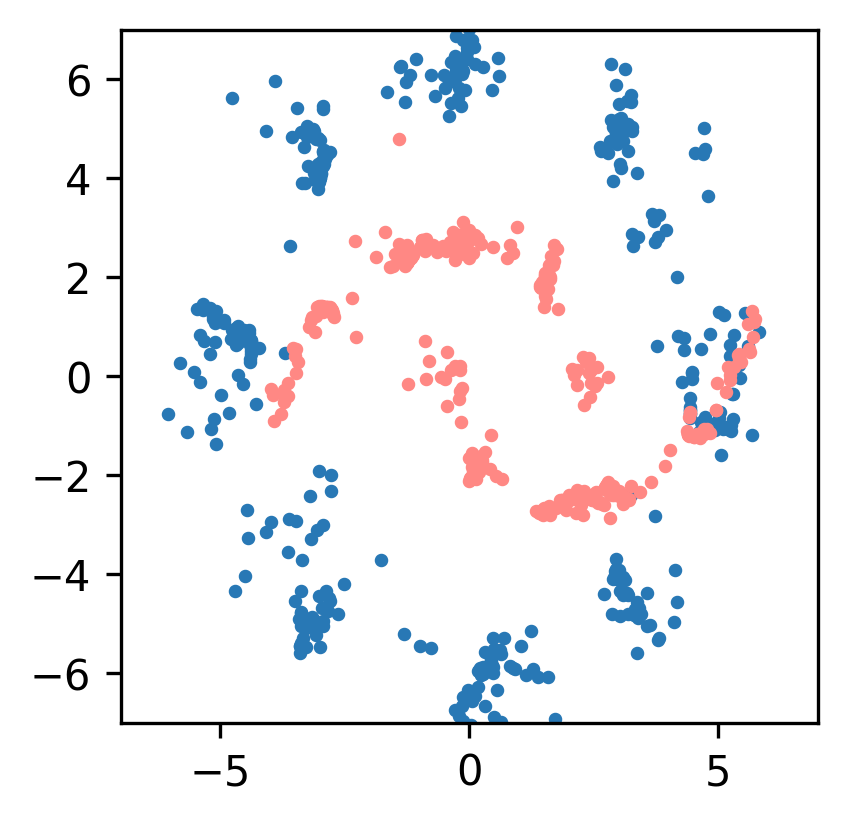}
      & \includegraphics[width=0.15\linewidth,valign=t]{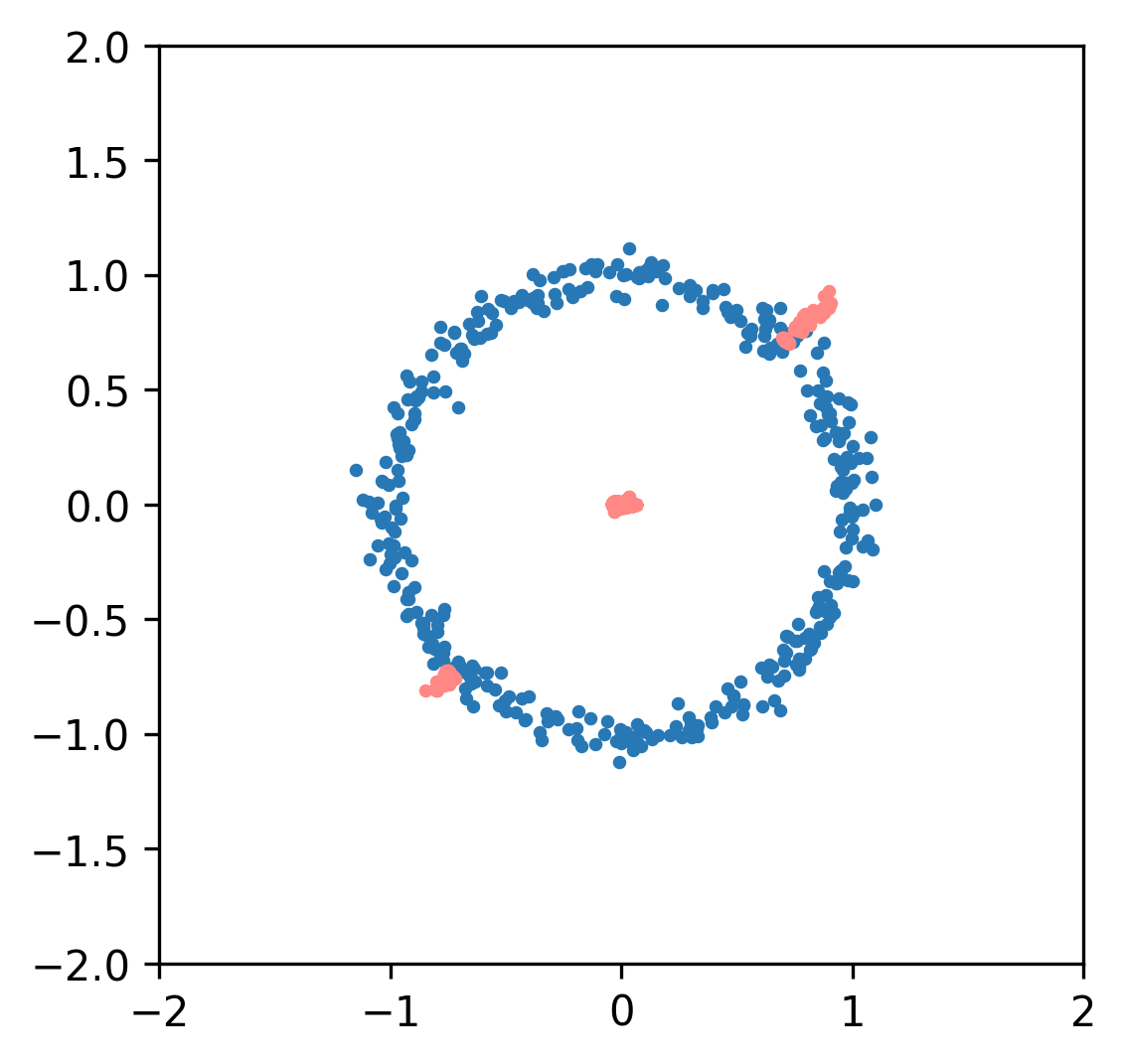} 
      & \includegraphics[width=0.15\linewidth,valign=t]{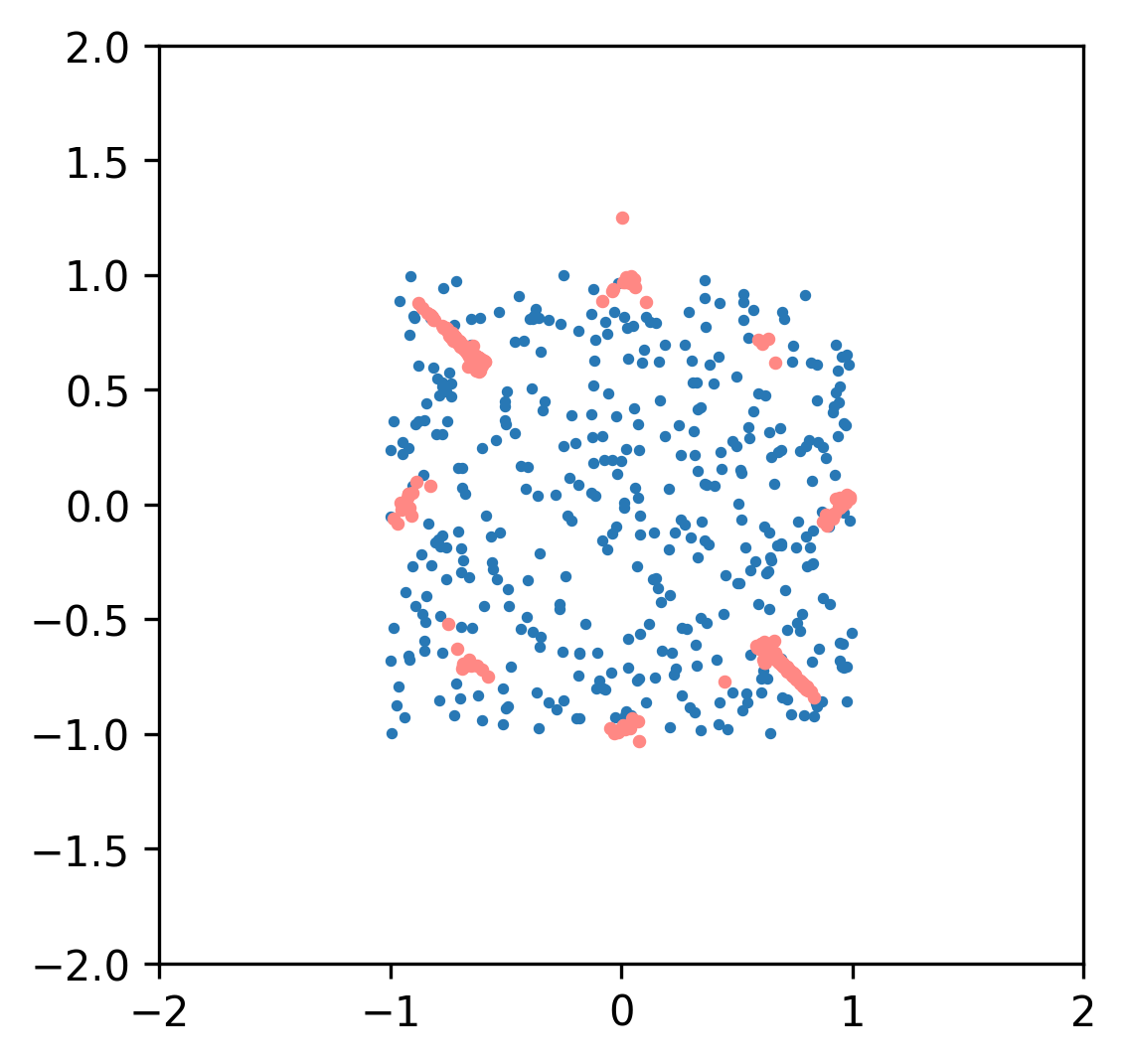} \\
    \textbf{$g^{sim-MC}$ \cite{feng2025guidance}}
      & \includegraphics[width=0.15\linewidth,valign=t]{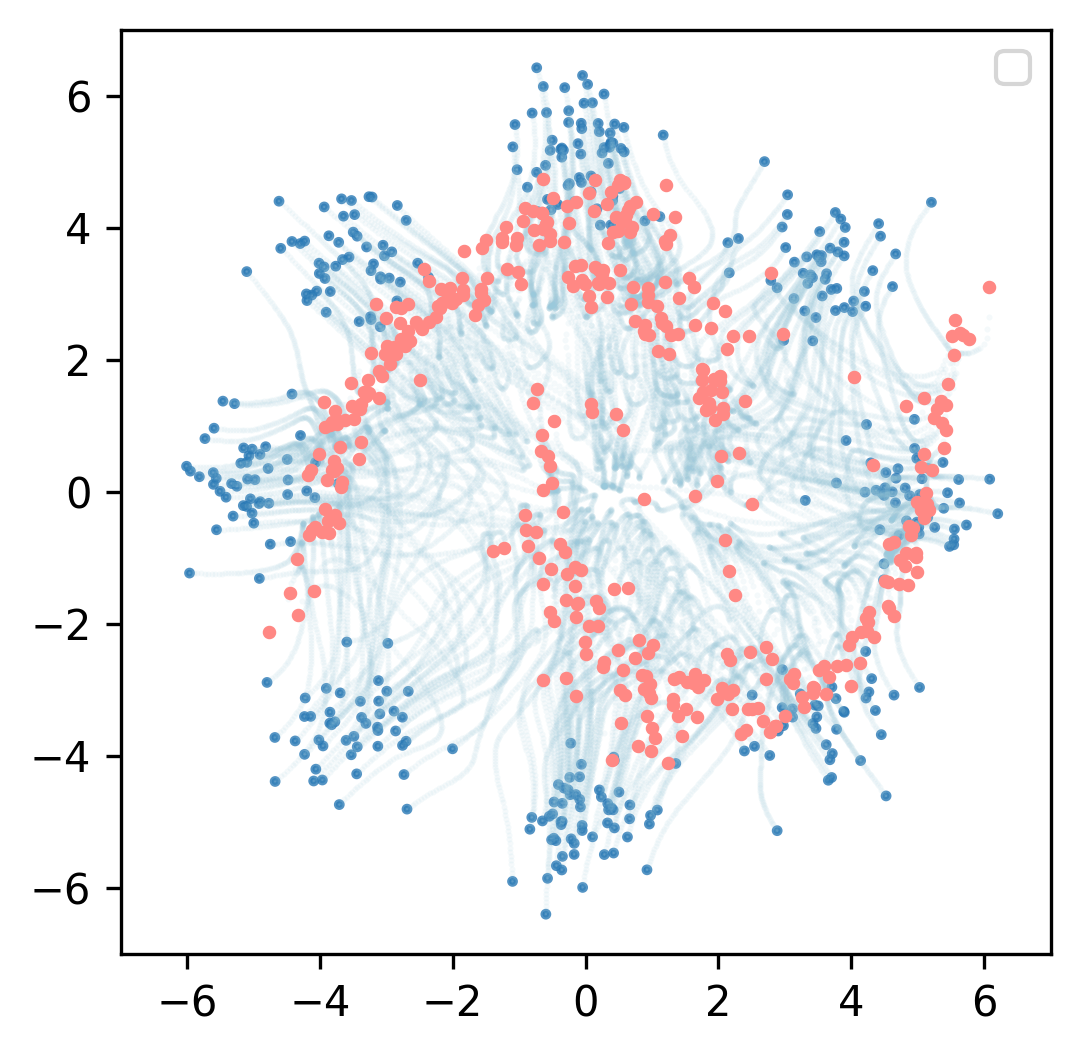}
      & \includegraphics[width=0.15\linewidth,valign=t]{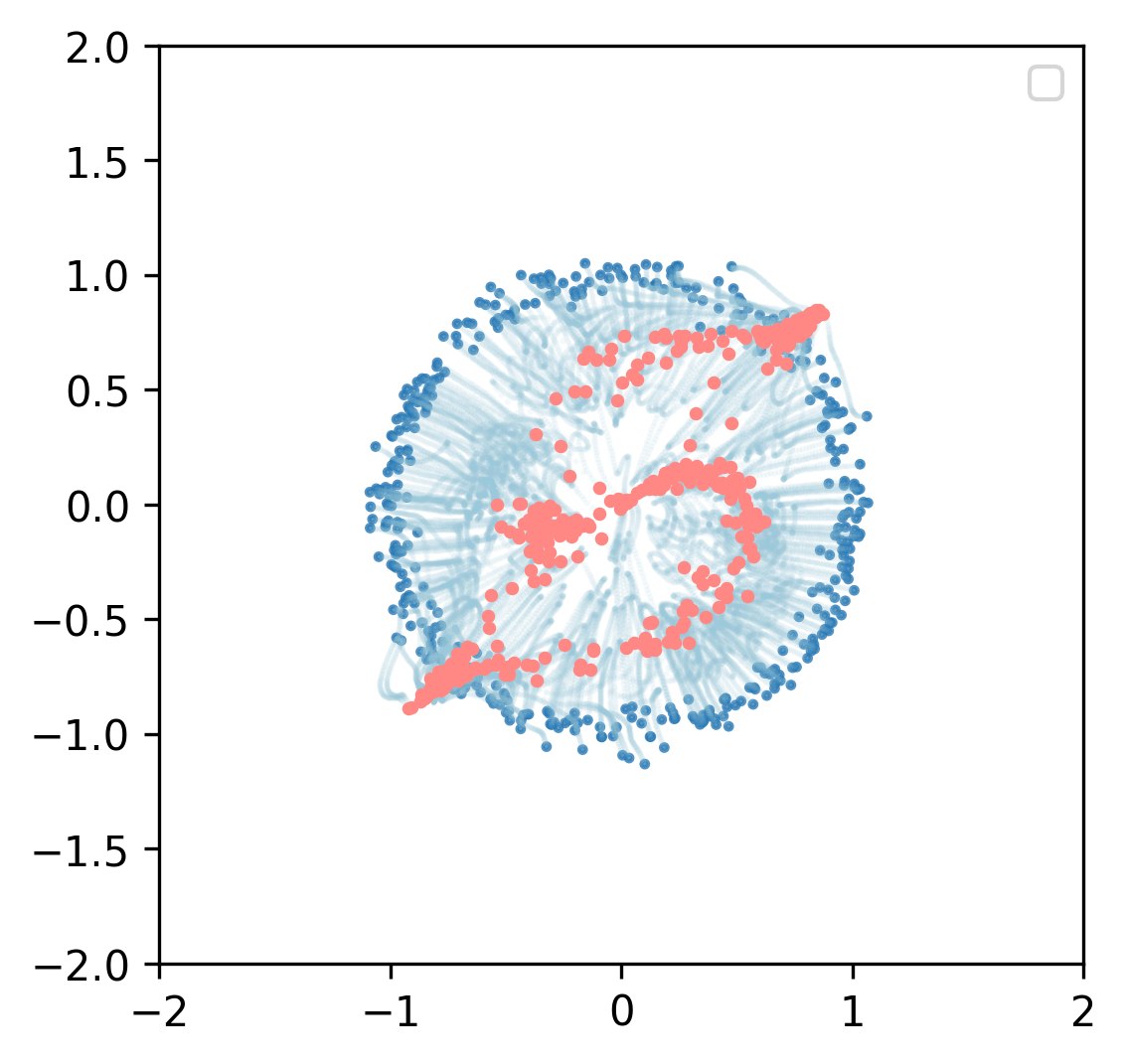} 
      & \includegraphics[width=0.15\linewidth,valign=t]{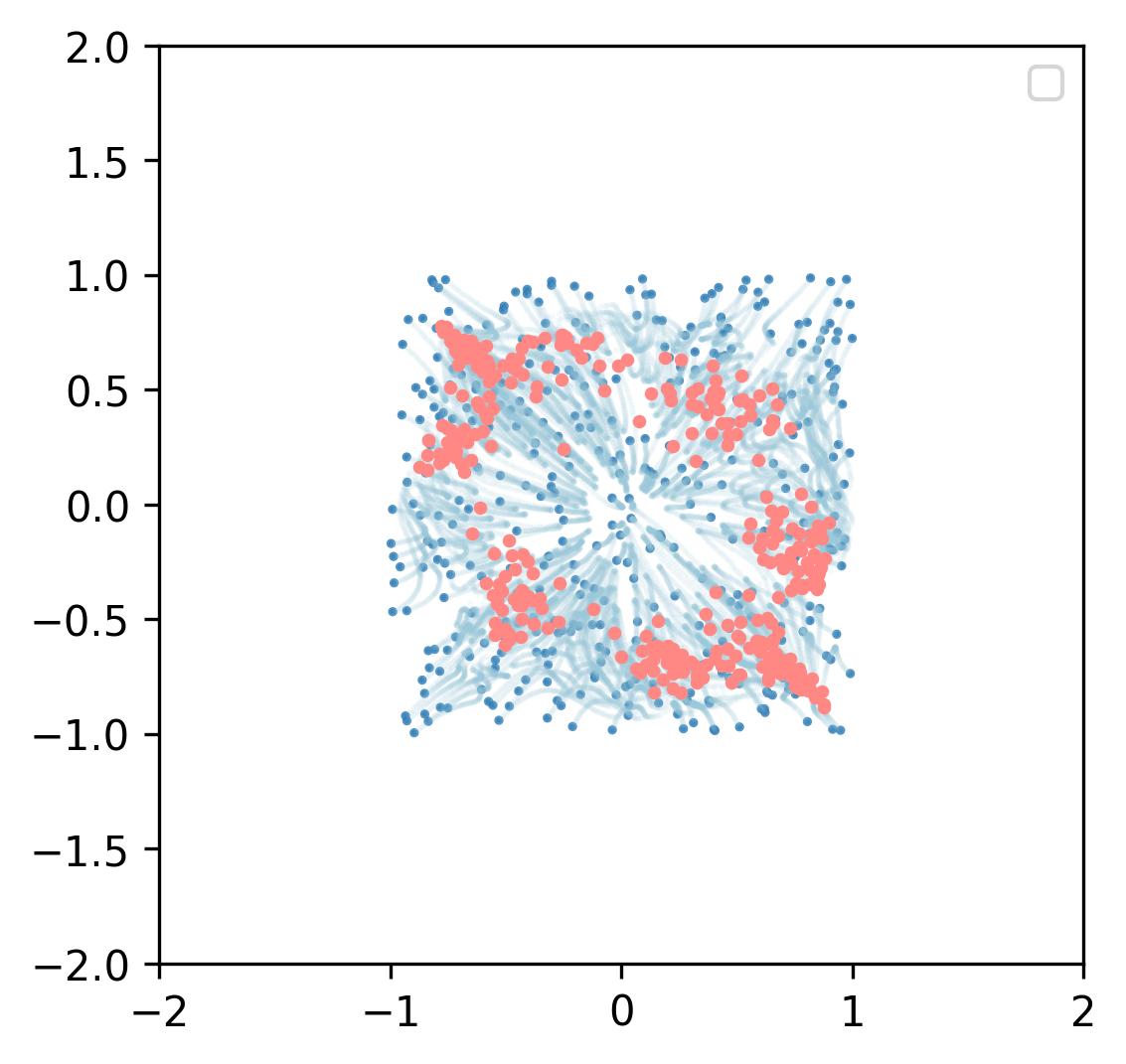} \\
    \textbf{$g^{MC}$ \cite{feng2025guidance}}
      & \includegraphics[width=0.15\linewidth,valign=t]{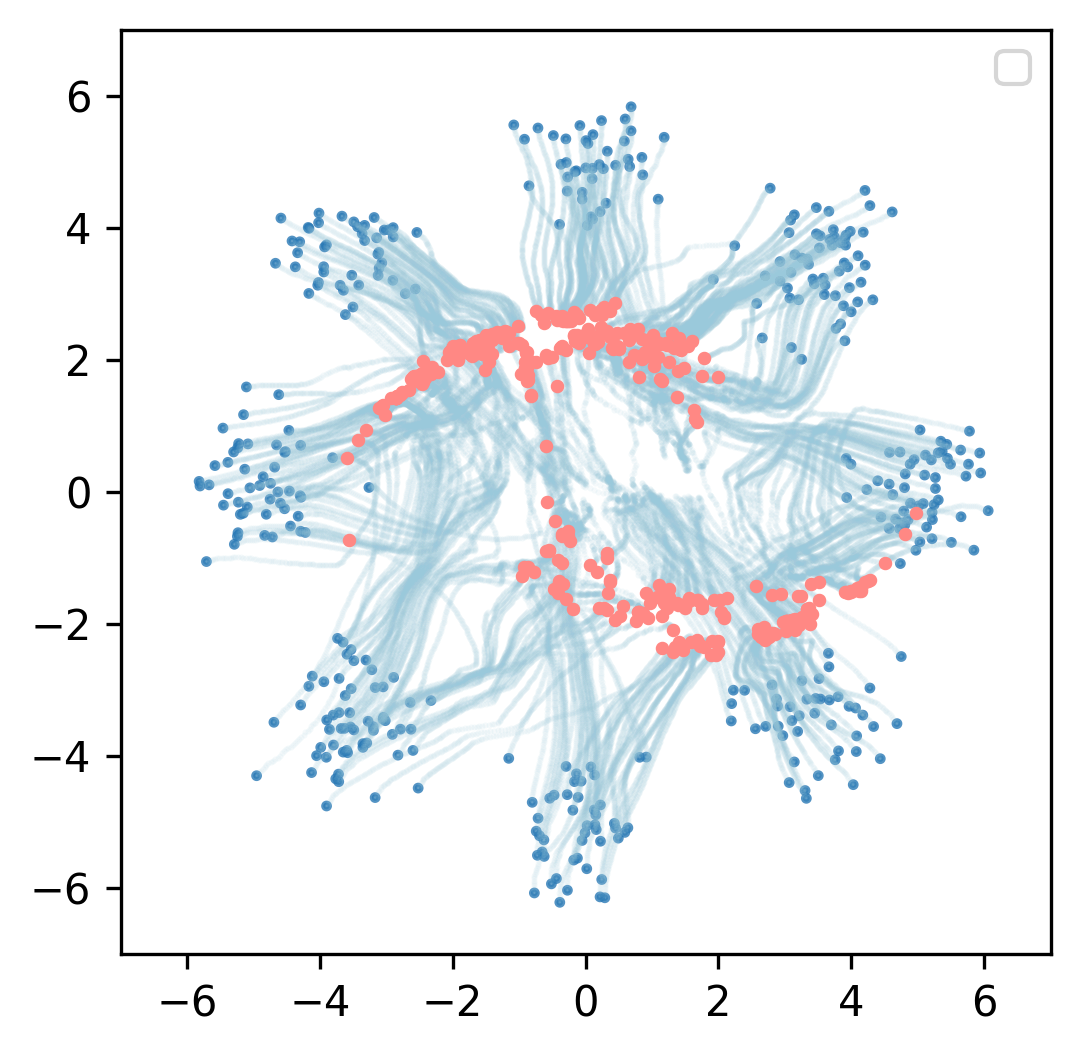}
      & \includegraphics[width=0.15\linewidth,valign=t]{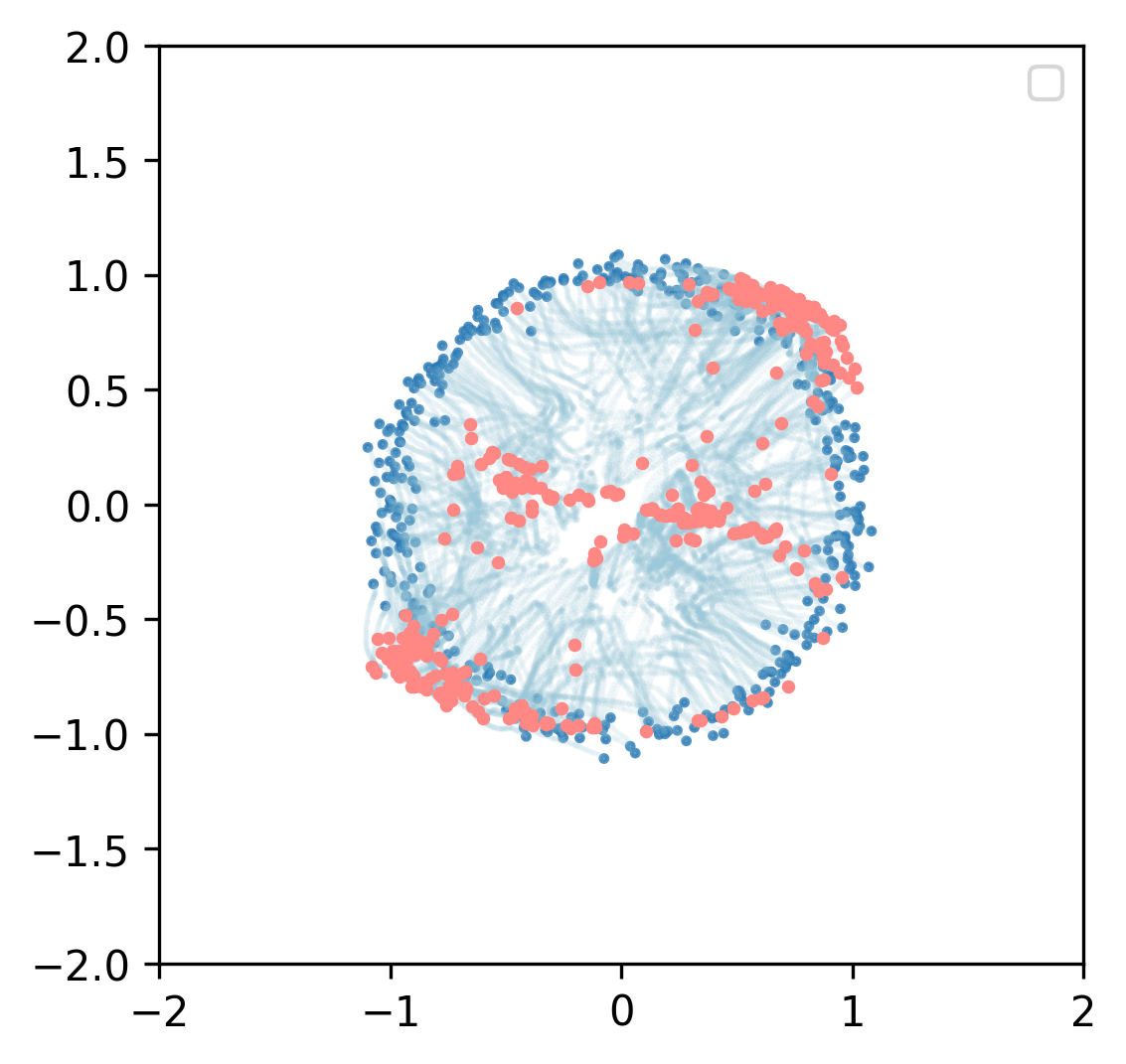} 
      & \includegraphics[width=0.15\linewidth,valign=t]{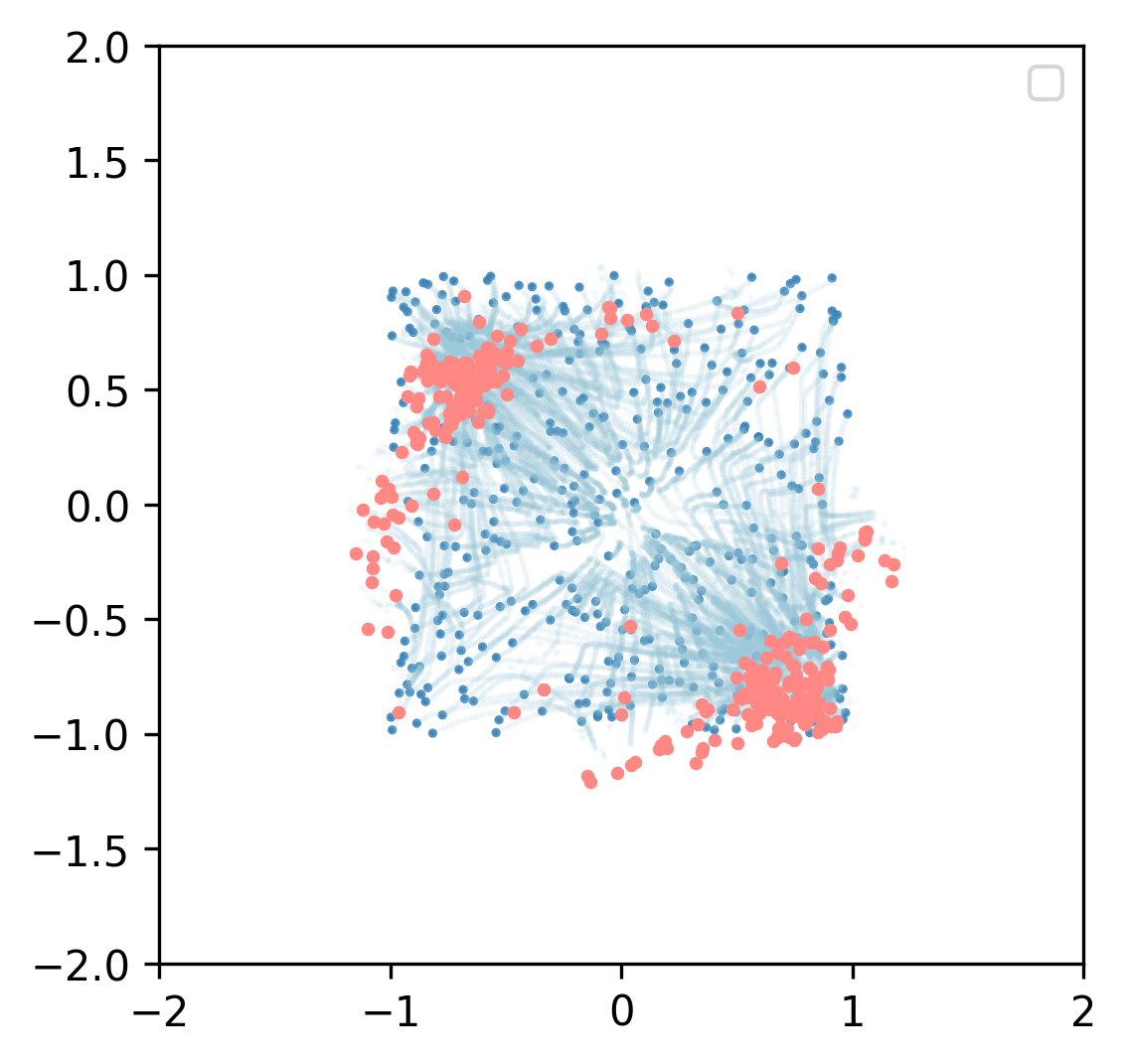} \\
    \textbf{SGFM-IS}
      & \includegraphics[width=0.15\linewidth,valign=t]{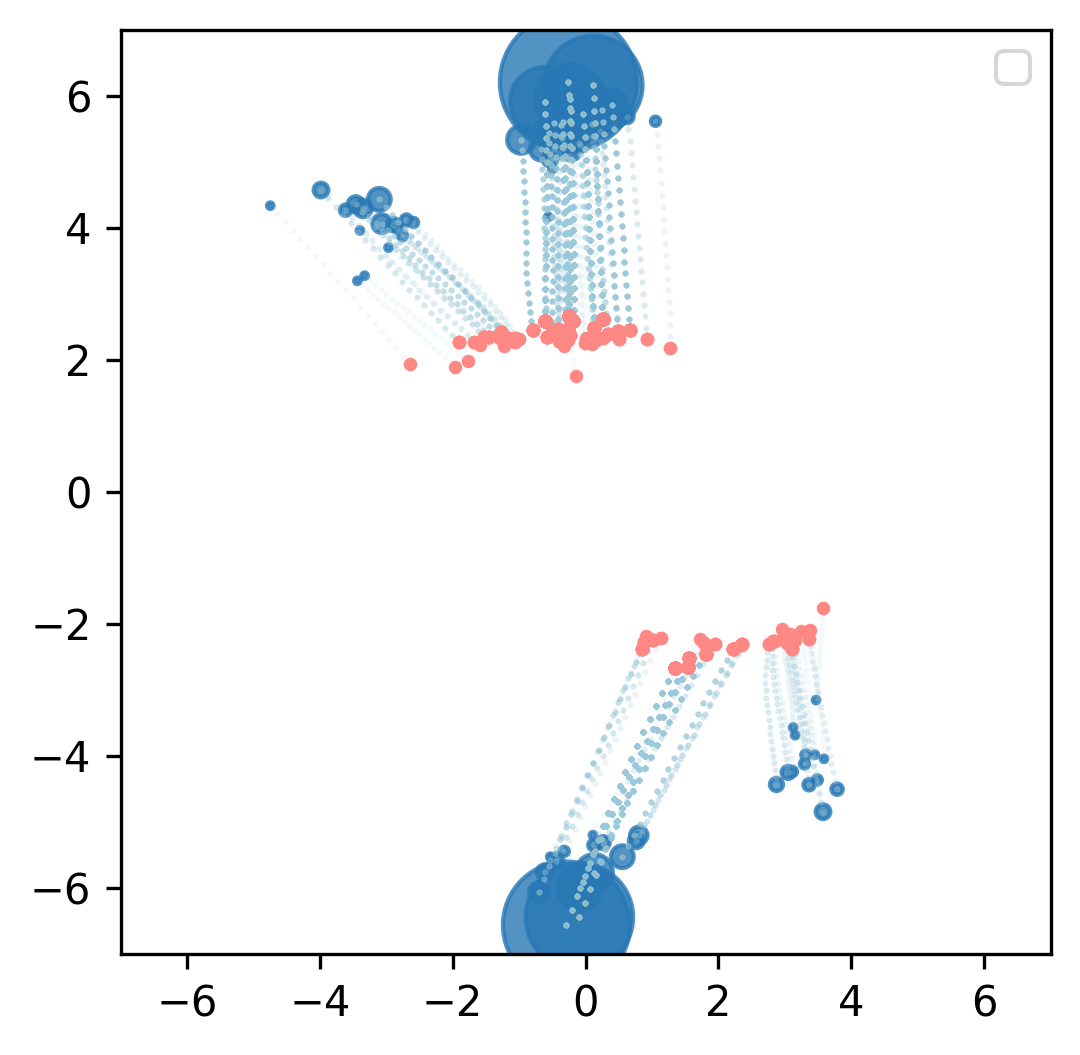}
      & \includegraphics[width=0.15\linewidth,valign=t]{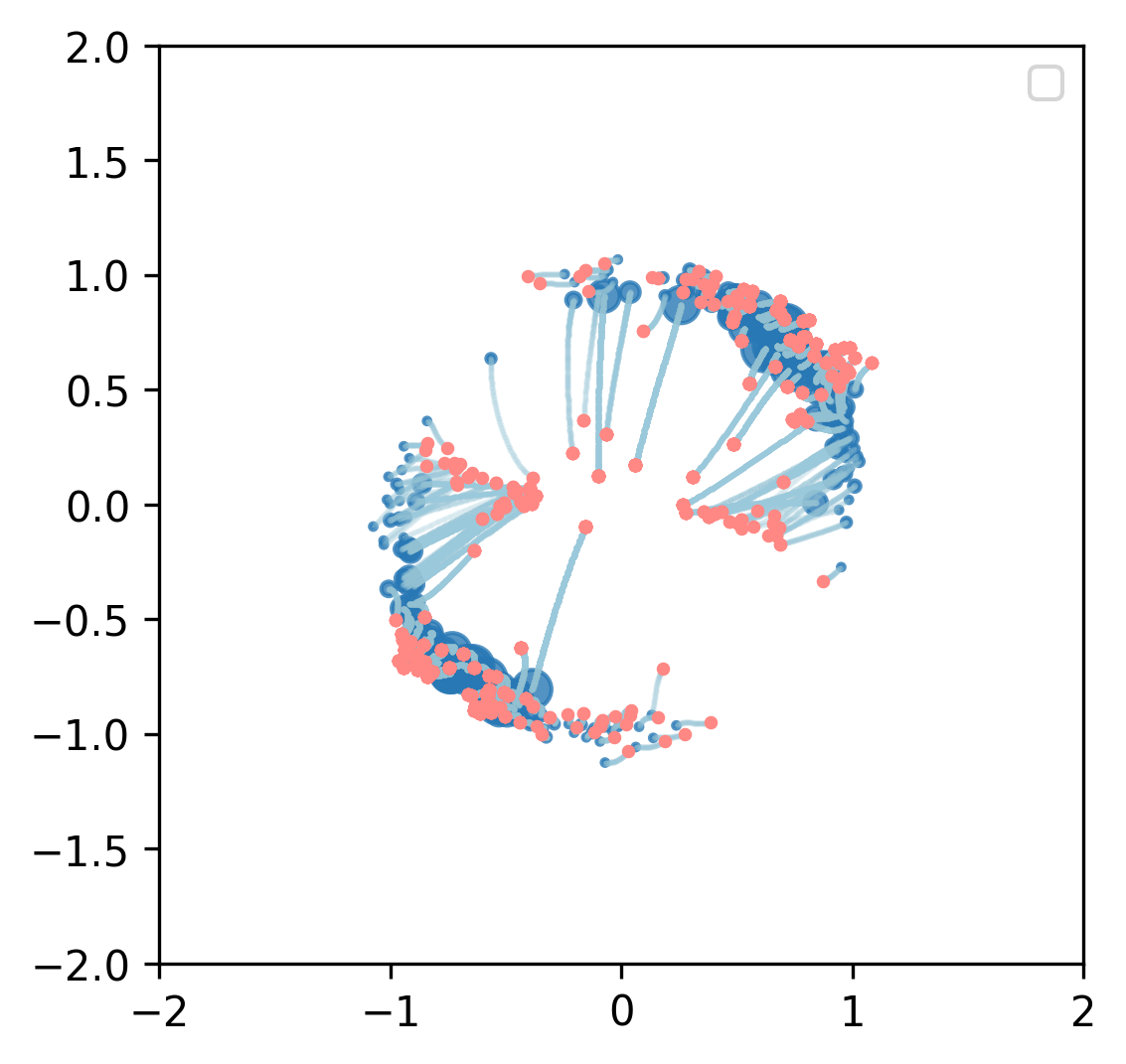} 
      & \includegraphics[width=0.15\linewidth,valign=t]{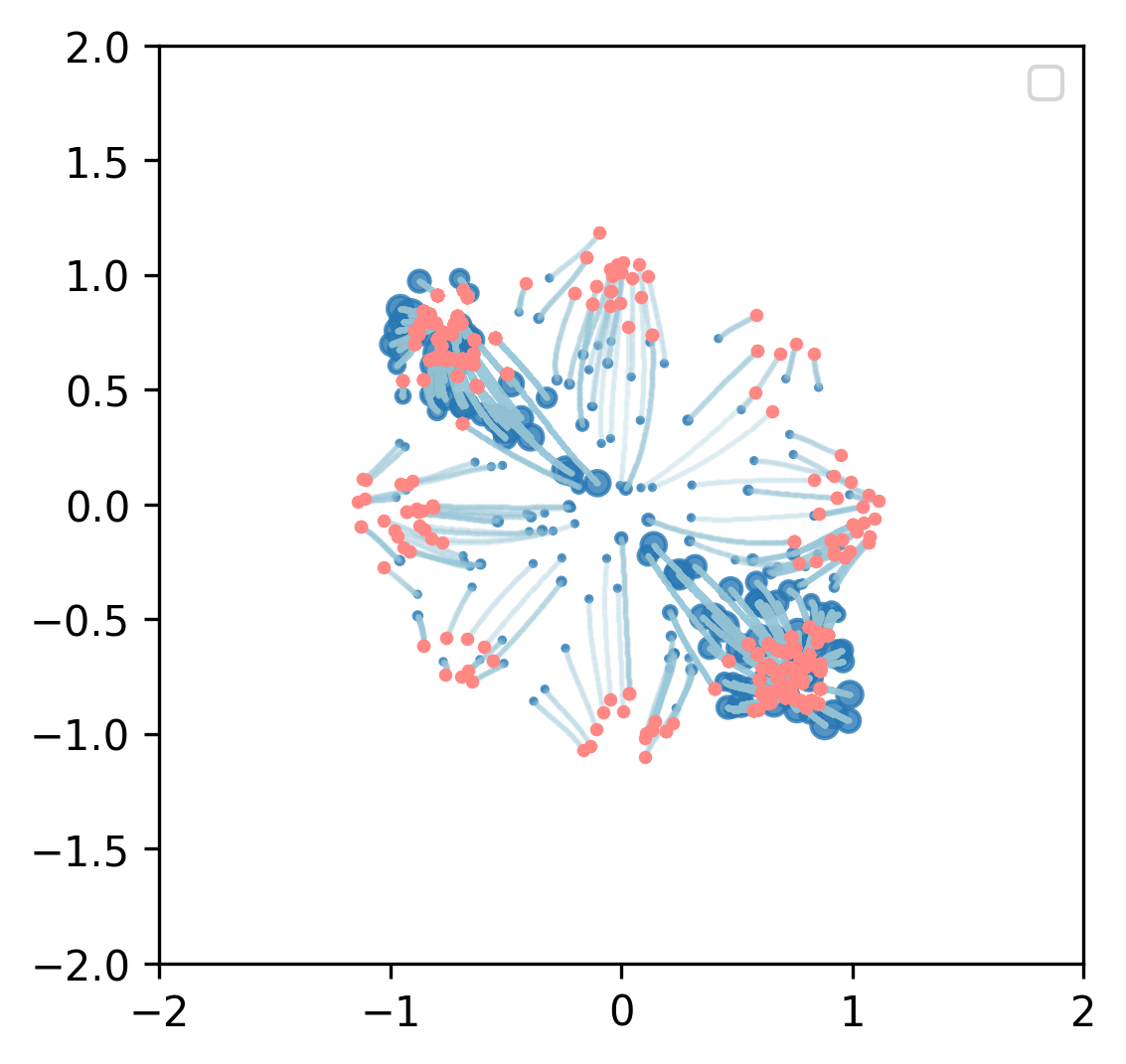} \\
    \textbf{SGFM-ULA}
      & \includegraphics[width=0.15\linewidth,valign=t]{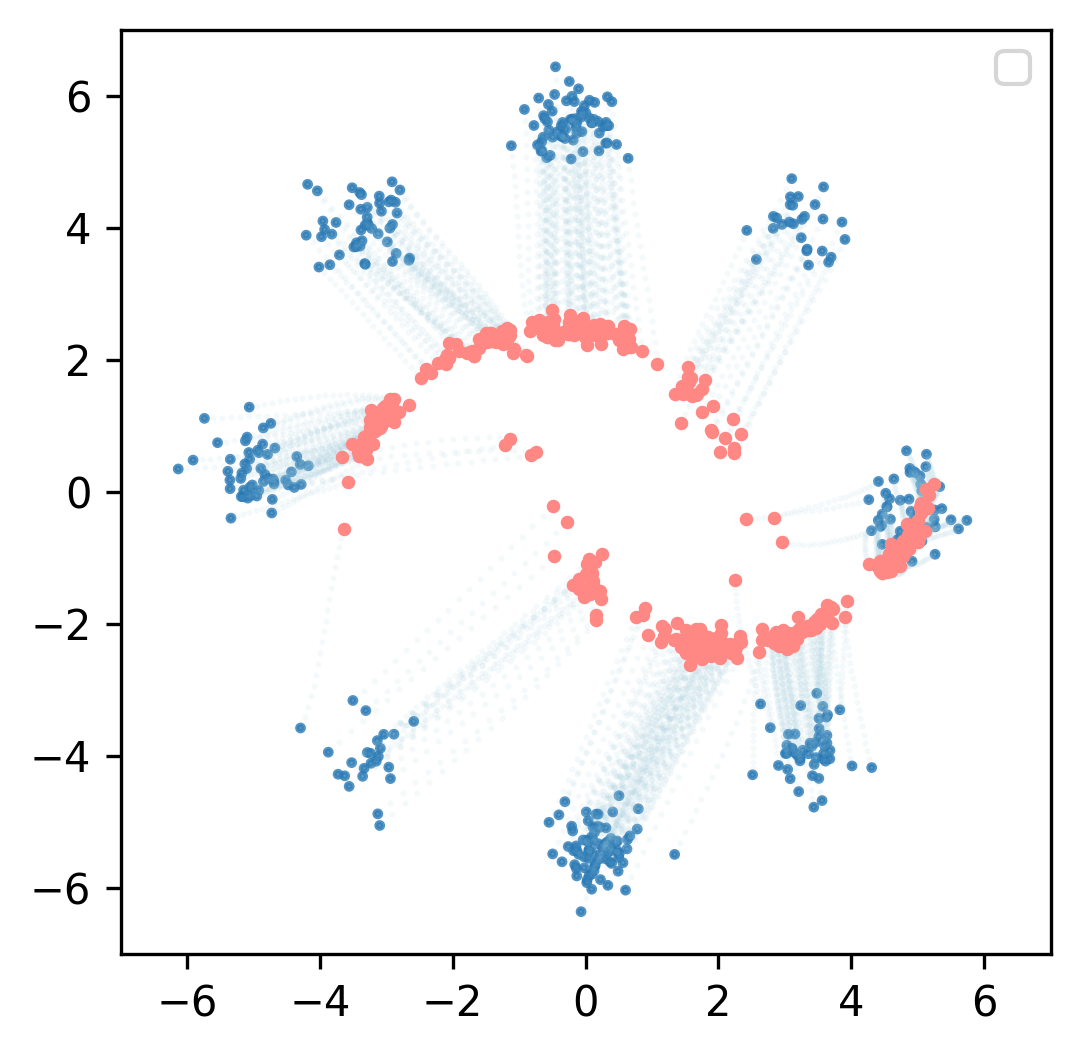}
      & \includegraphics[width=0.15\linewidth,valign=t]{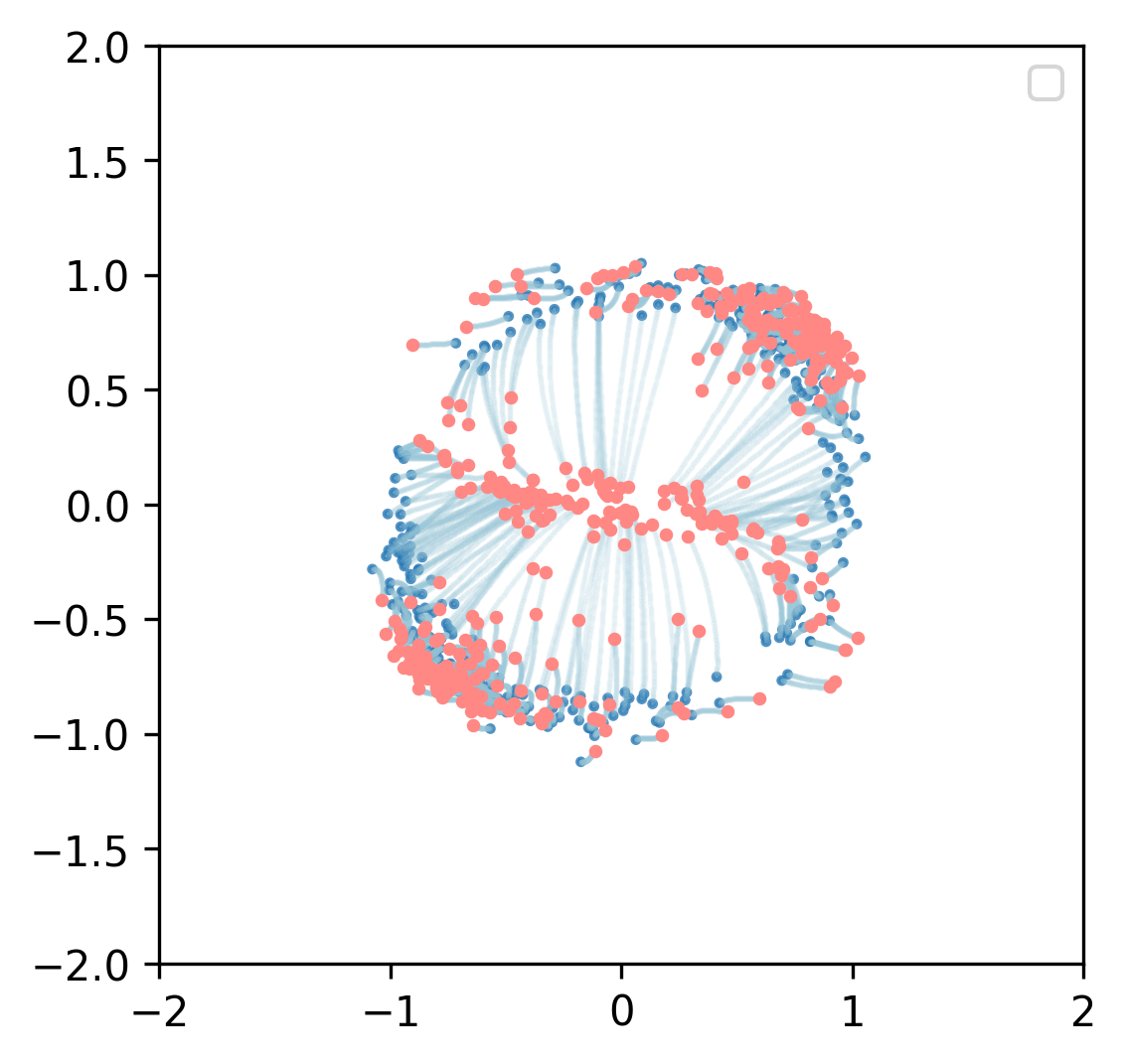} 
      & \includegraphics[width=0.15\linewidth,valign=t]{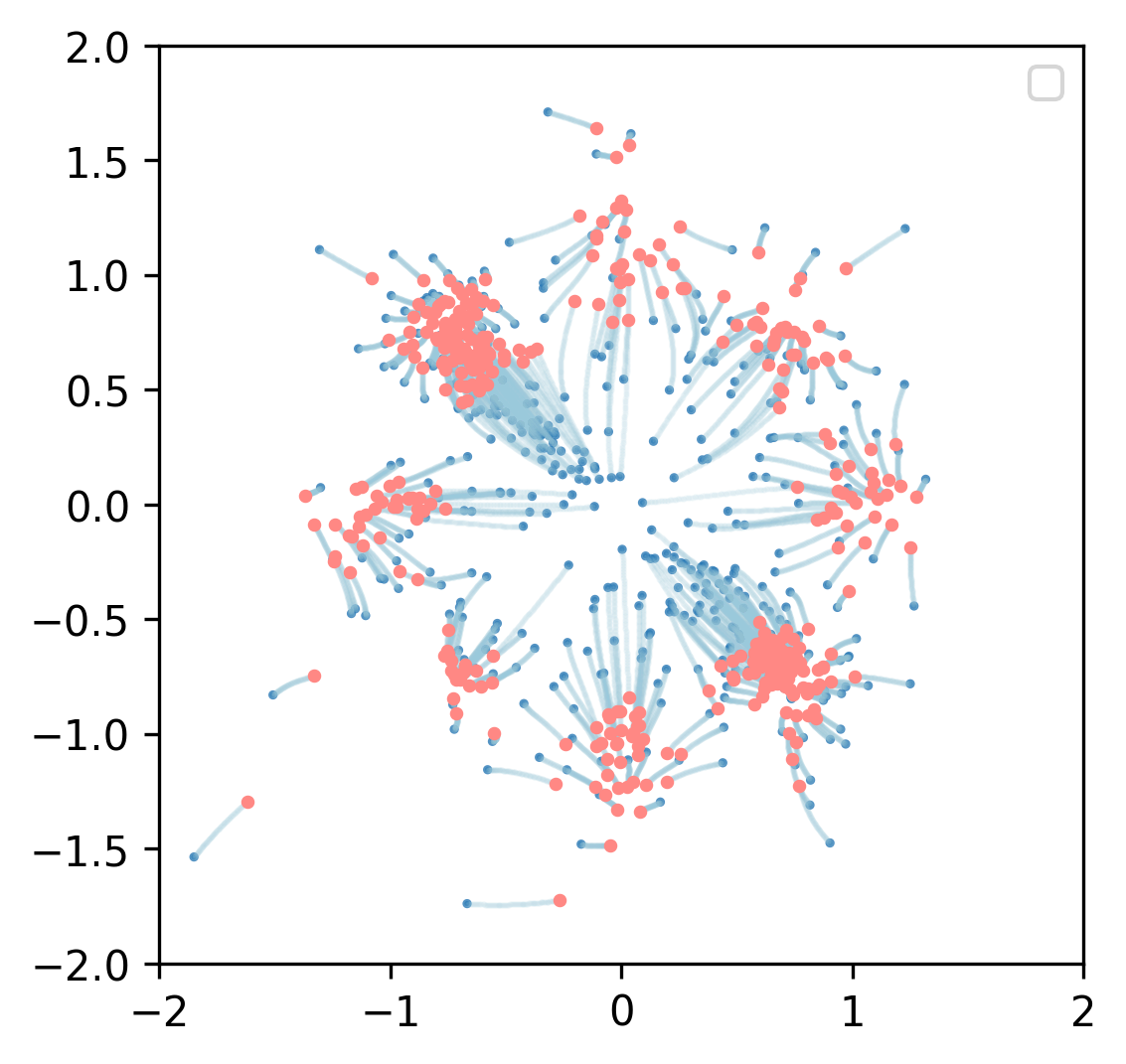} \\
    \textbf{SGFM-MALA}
      & \includegraphics[width=0.15\linewidth,valign=t]{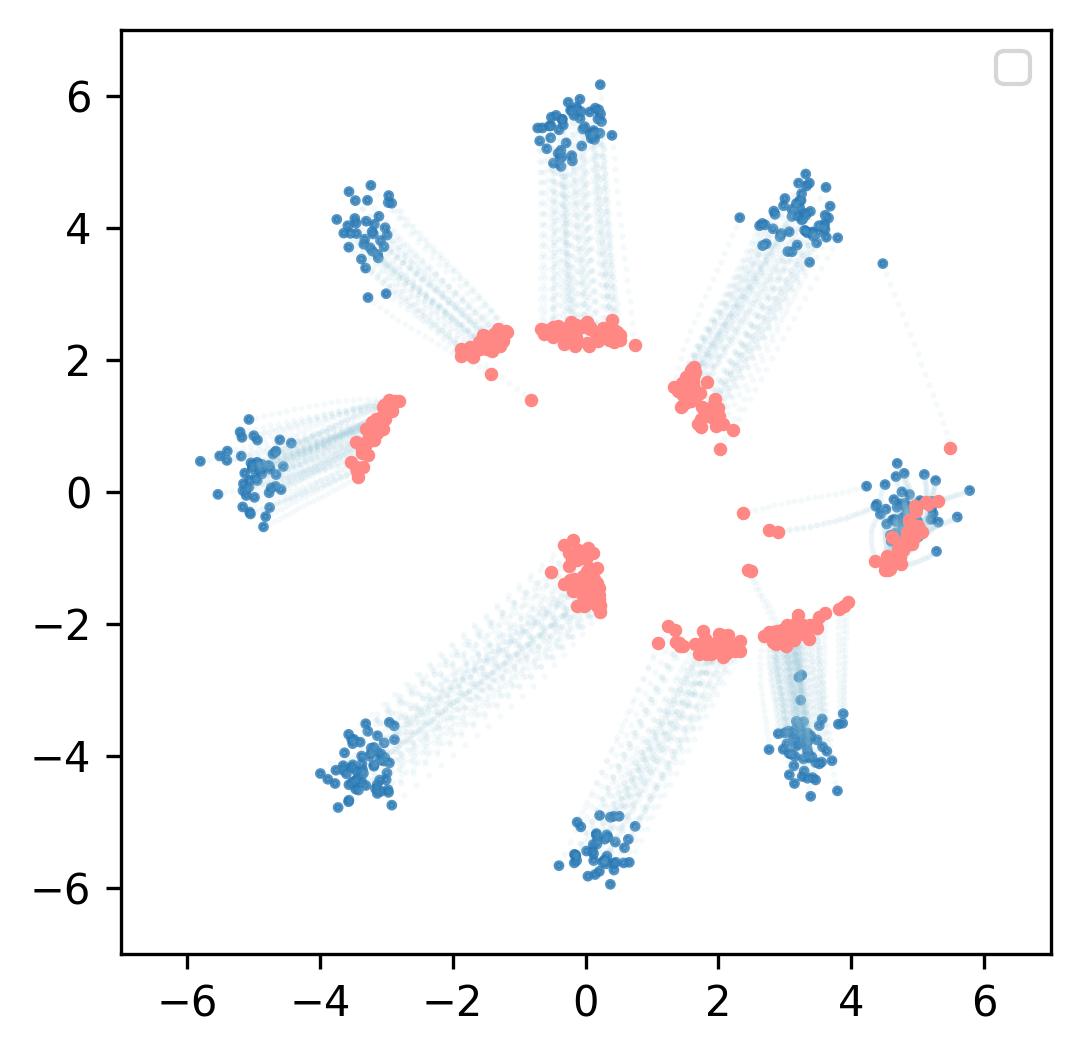}
      & \includegraphics[width=0.15\linewidth,valign=t]{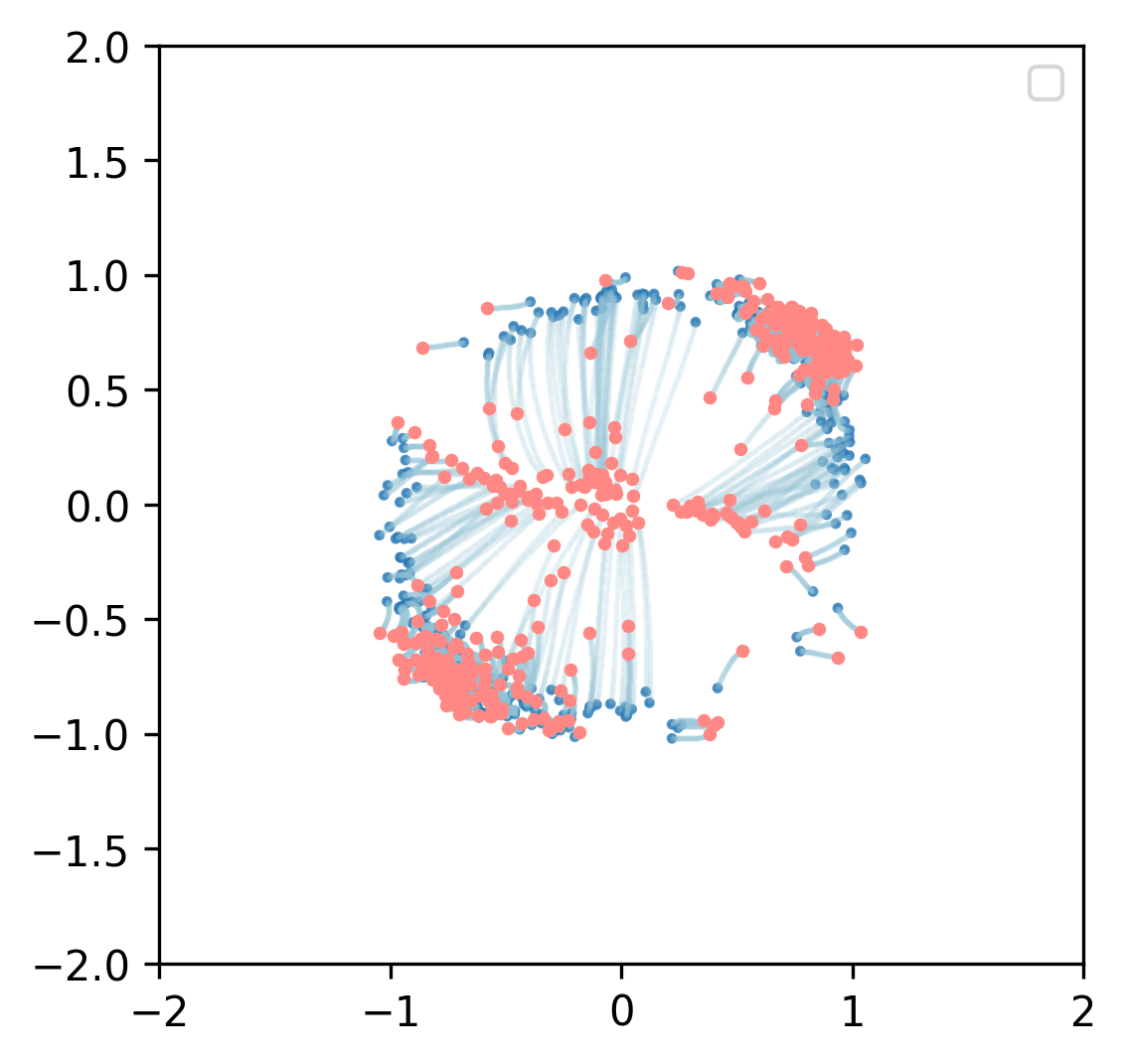} 
      & \includegraphics[width=0.15\linewidth,valign=t]{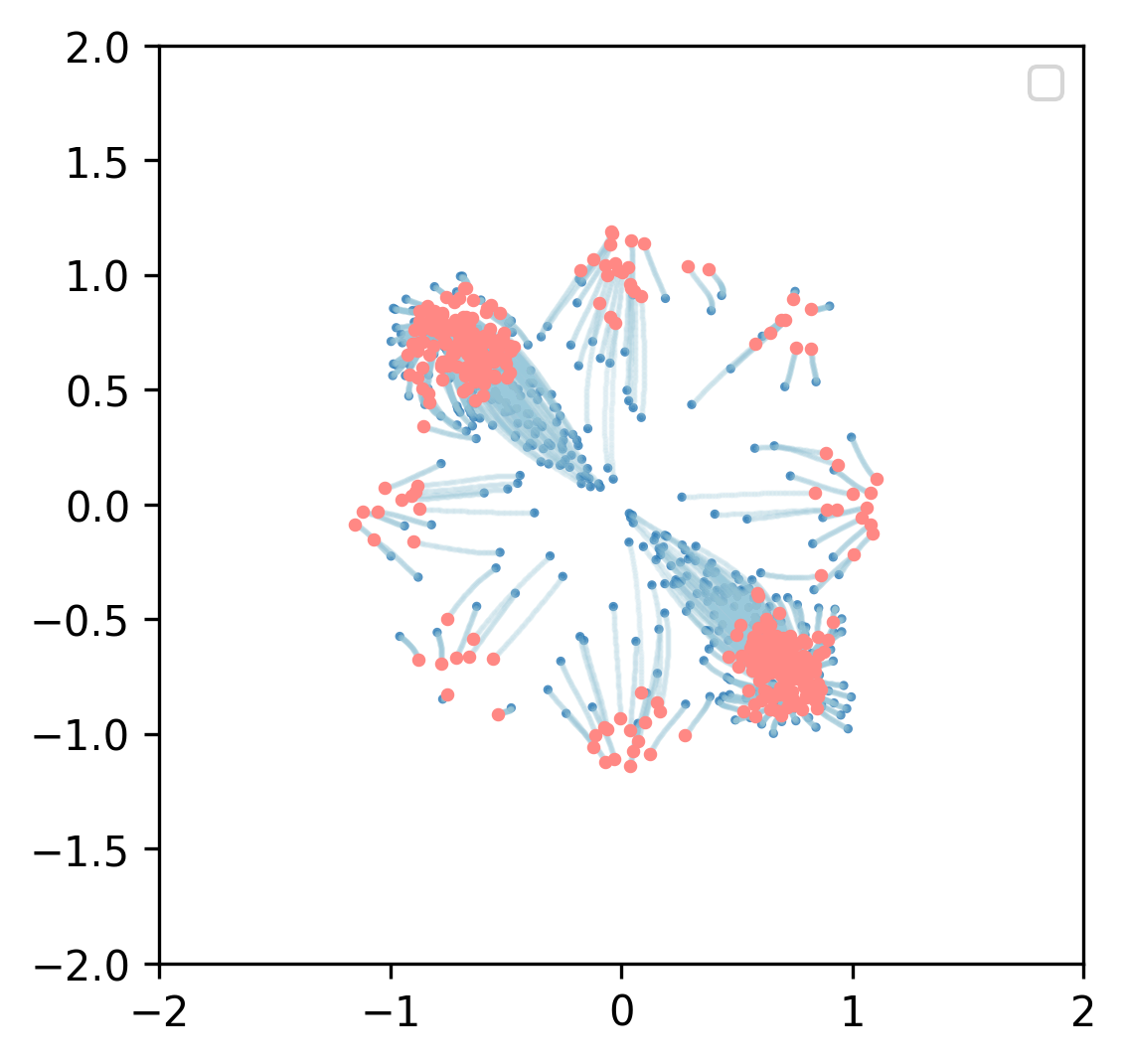} \\
    \textbf{SGFM-HMC}
      & \includegraphics[width=0.15\linewidth,valign=t]{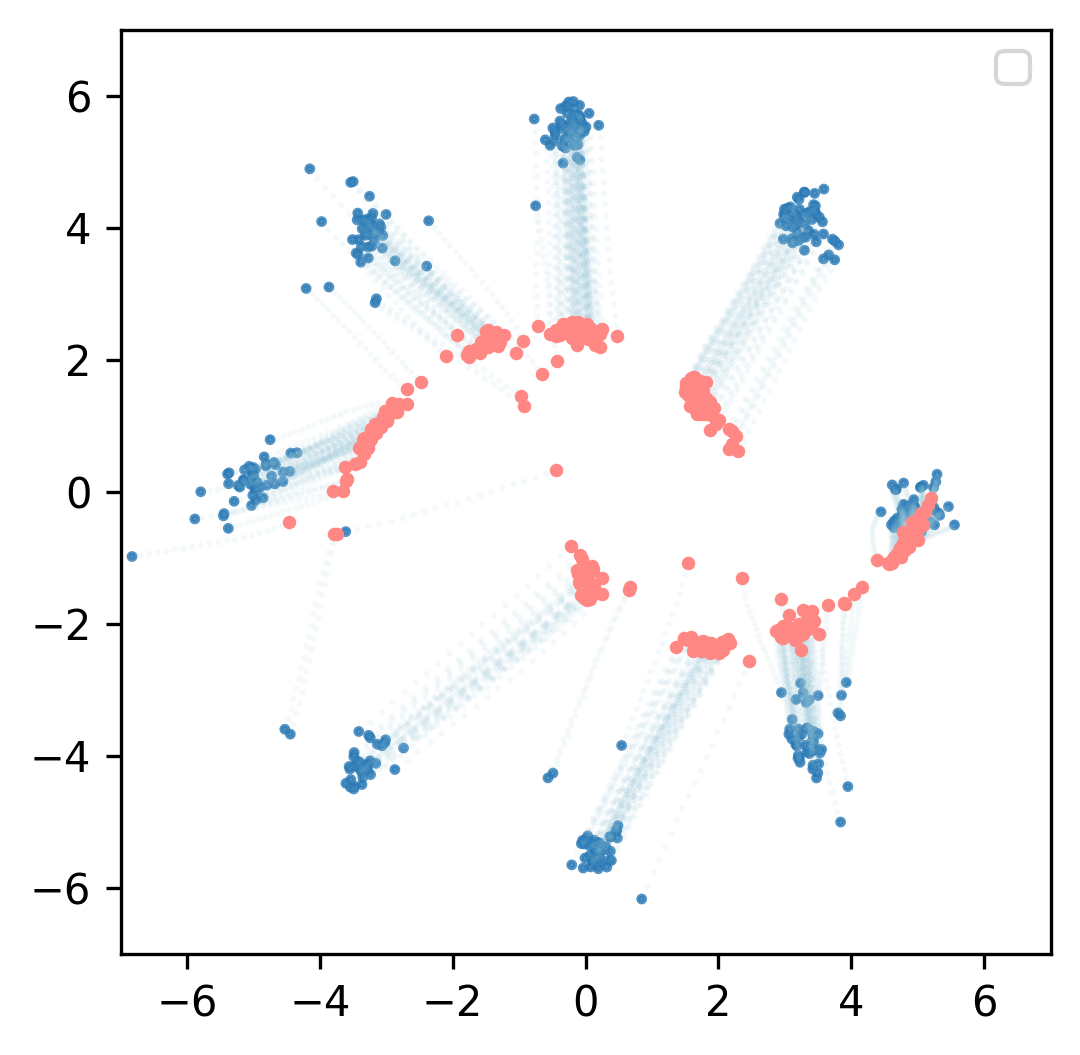} 
      & \includegraphics[width=0.15\linewidth,valign=t]{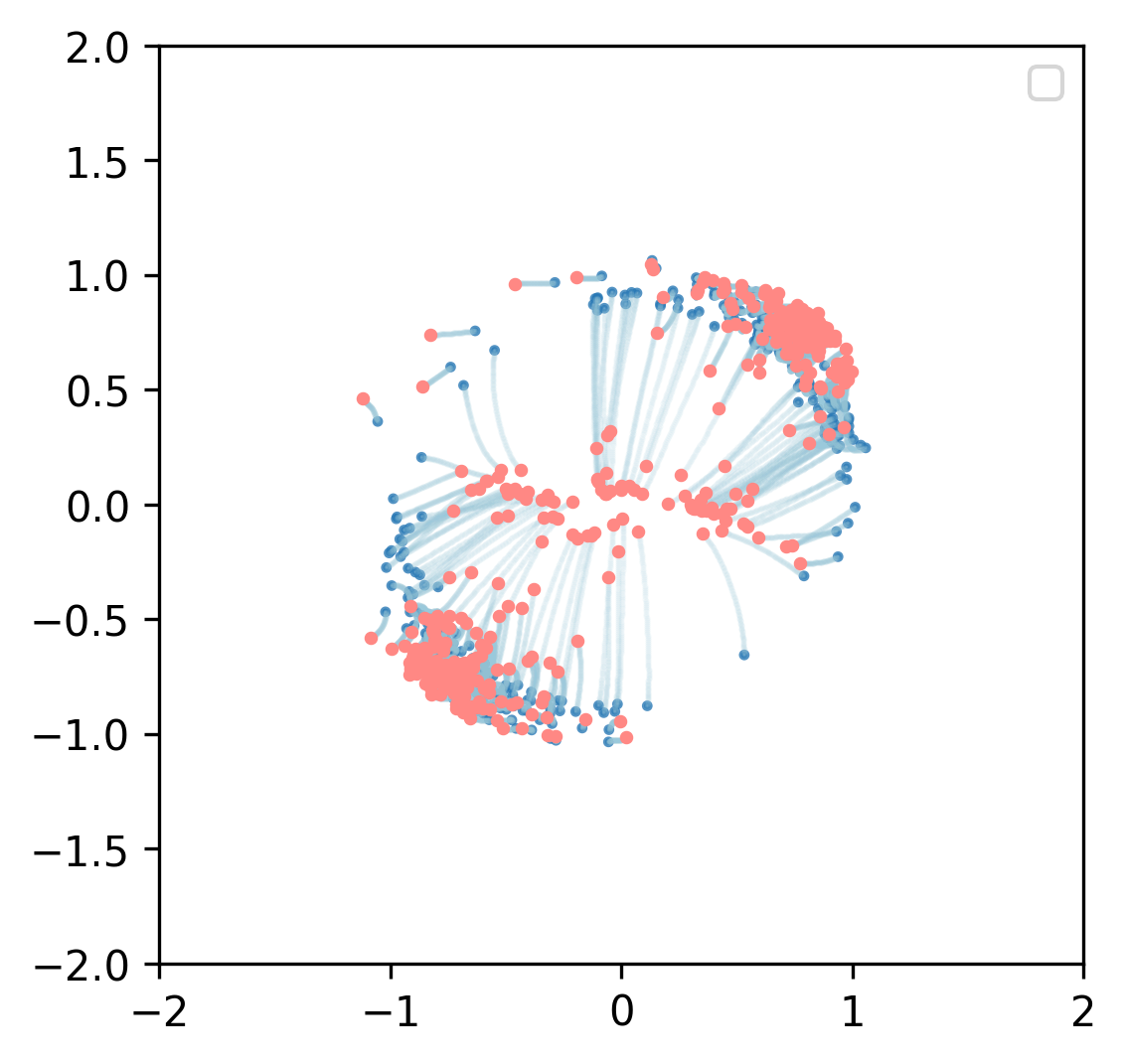} 
      & \includegraphics[width=0.15\linewidth,valign=t]{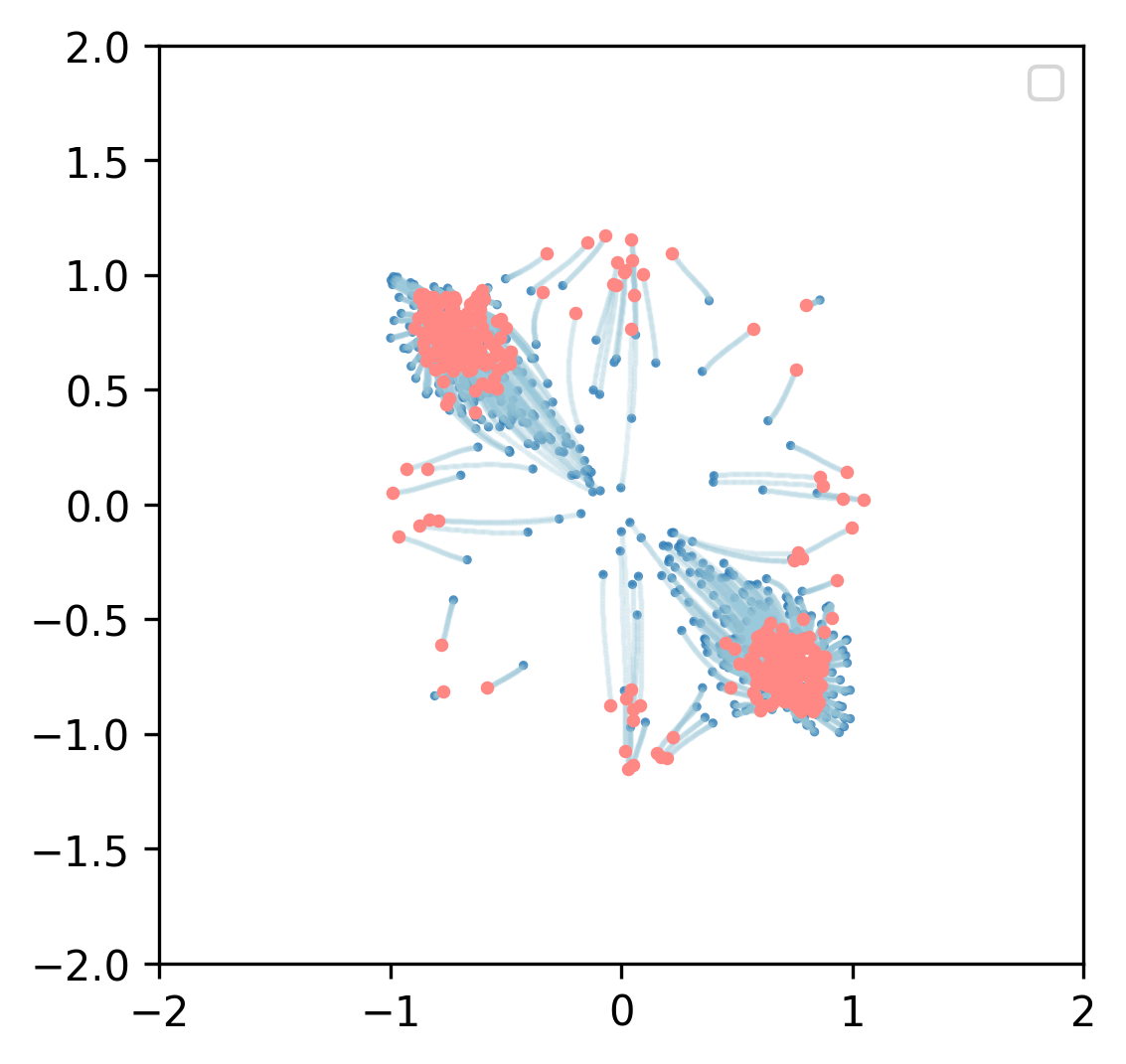} \\
  \end{tabular}
  \caption{Results of synthetic datasets with different source distributions (marked in blue) and target distributions (marked in red).}
  \label{fig:my-10row-comparison}
\end{figure}

\newpage 

We provide additional experiments that evaluate the sample quality with other pairs of source and target distributions. Since the distributions are simple and low-dimensional, we adopt IS as the sampling method and refer to our guidance method as SGFM-IS.
Figure~\ref{fig:2D:comp_moon} presents the generation results with an 8-Gaussian source distribution and a moon target distribution. 
We observe that SGFM-IS achieves superior sample quality across varying running times. However, both baseline methods perform poorly and more running time did not help.
The primary reason for failure is the multi-modal structure of this generation task, which makes samplers trapped in local optima, as illustrated in Figure~\ref{fig:my-10row-comparison}.
These results underscore the flexibility of our guidance framework, which allows for the tailored selection of advanced sampling strategies to suit different tasks.

\begin{figure}[ht]
\begin{center}
\centerline{\includegraphics[width=0.5\columnwidth]{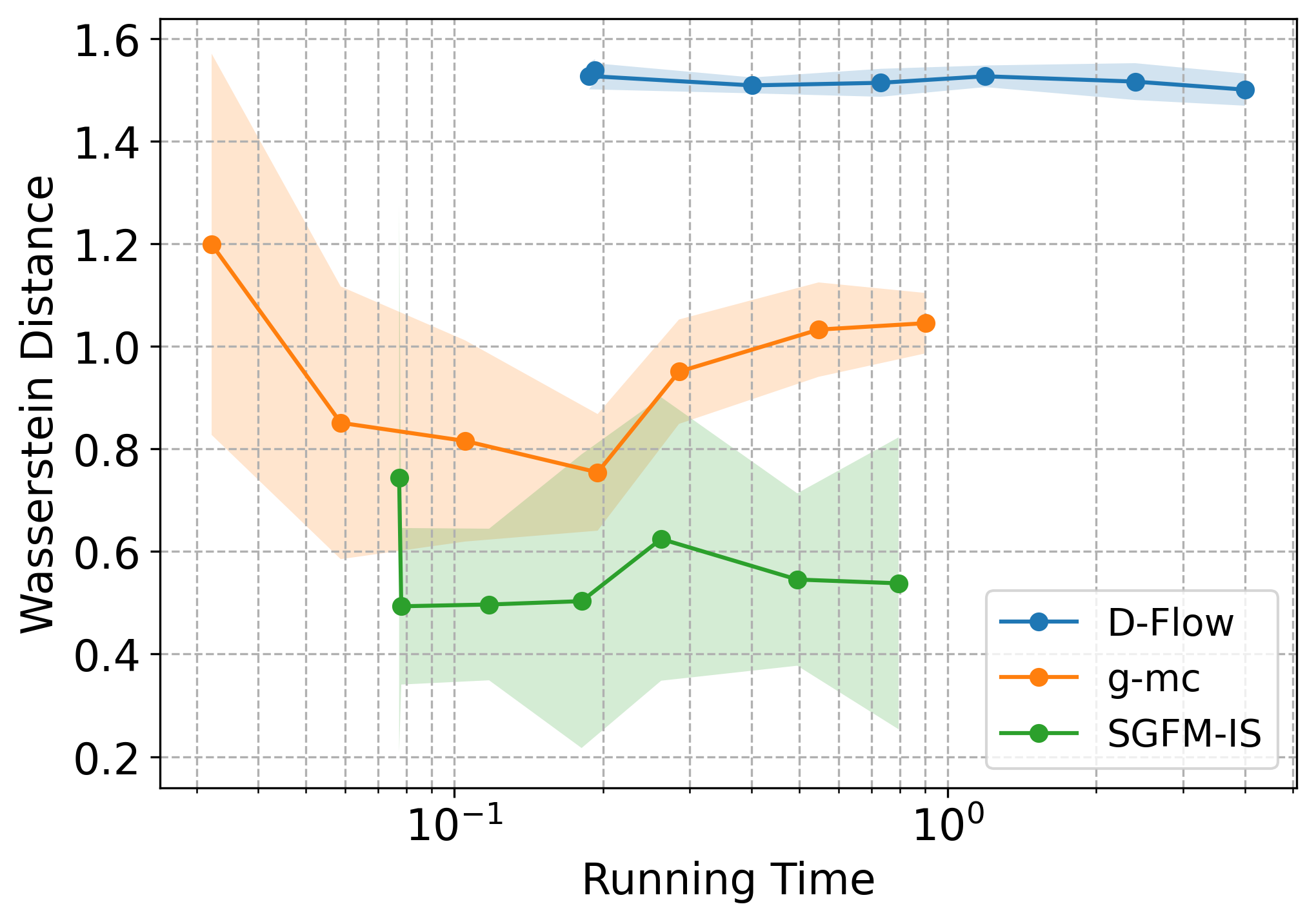}}
\caption{Comparison of sample quality and running time in 2D example, with an 8-gaussian source distribution and a moon target distribution.}
\label{fig:2D:comp_moon}
\end{center}
\end{figure}

Figure~\ref{fig:2D:comp_scurve} presents the generation results with a circle source distribution and an S-curve target distribution. 
We observe that D-Flow performs even worse with increasing running time.
As shown in Figure~\ref{fig:my-10row-comparison}, D-Flow tends to overly transport points to the line $x[1] = x[2]$, indicating that D-Flow overemphasizes minimizing the loss $J$ and loses sample diversity.

\begin{figure}[ht]
\begin{center}
\centerline{\includegraphics[width=0.5\columnwidth]{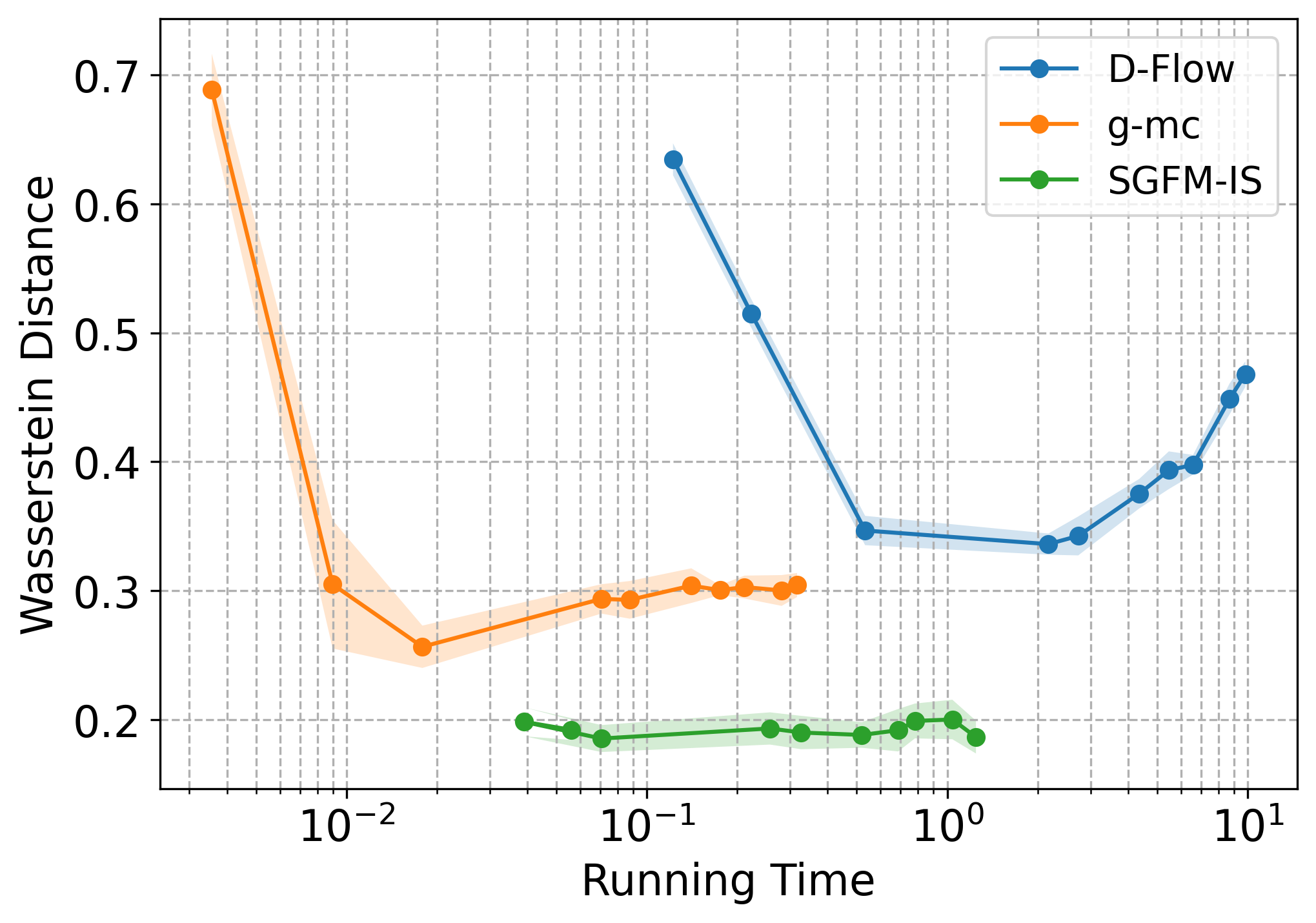}}
\caption{Comparison of sample quality and running time in 2D example, with a circle source distribution and an S-curve target distribution.}
\label{fig:2D:comp_scurve}
\end{center}
\end{figure}

\subsection{PDE solution operator}\label{sec:app:darcy}
In this section, we provide more details on the physics-informed inverse problem in Section \ref{sec:experiments:darcy}, including an outline of the Darcy flow equations, implementation details, and additional sample outcomes.
\subsubsection{Darcy flow equations}\label{sec:app:darcyeq}
Darcy flow is an elliptic PDE describing fluid flow through a porous medium,
\begin{equation}
\begin{aligned}
\label{eq:darcy}
    \mathbf{u}(\mathbf{x})&=-K(\mathbf{x})\nabla p(\mathbf{x}), &x\in \Omega\\
    \nabla\mathbf{u}(\mathbf{x})&=f(\mathbf{x}), &x\in \Omega\\
    \mathbf{u}(\mathbf{x})\cdot\hat{\mathbf{n}}(x)&=0,&x\in\partial\Omega\\
    \int_\Omega p(\mathbf{x})d\mathbf{x}&=0,&
\end{aligned}
\end{equation}
where $K$ is the permeability field, $f$ is a source function, and $p$ is the resulting pressure field. In alignment with \cite{bastek2024physics, jacobsen2025cocogen}, we consider the equations on a square domain $\Omega=[0,1]^2$ with resolution $64\times 64$ and let $f$ be a constant function. In this setting, a dataset of pairwise solutions $(K,p)$ is offered by \cite{bastek2024physics}, which is generated by translating \eqref{eq:darcy} to a linear system using finite difference approximations of the derivatives, and then solving this system. 
\subsubsection{Implementation details}\label{sec:app:darcyimp}
\paragraph{Flow matching model:} The vector field defining the flow-matching model is approximated using a U-Net architecture adopted from \cite{tong2023improving}. The source distribution is a standard Gaussian distribution. In addition to the flow matching objective, the loss is regularized by the physics-residual following \cite{bastek2024physics}. The residual is computed using $\hat{x}_1$ from \cite[Eq. 4]{feng2025guidance} as data-space estimate and we select $\Sigma_t=\frac{1-t}{t}$ and $c=10^{-2}$. The model is trained on $10^4$ samples for 200 epochs using the Adam optimizer with an initial learning rate $\eta=10^{-4}$ which decays exponentially with a factor $\gamma=0.99$. 

\paragraph{Conditional sampling:} All methods are initialized by the same set of samples from the unmodified source distribution. To balance the scale of the cost function $J$ and the prior probability $\log{q_0}$, the cost is scaled by a factor $\frac{1}{\lambda}$ where $\lambda=10^{-3}$. To simulate a setting where true solutions are unavailable, all methods were tuned before observing the validity scores of the outcomes.
\paragraph{Implementation of SGFM-HMC:} SGFM-HMC is implemented by running the HMC algorithm for $N_{HMC}=100$ steps with $L=3$ leapfrog steps, where the step size is randomly selected in each Markov chain iteration as $\epsilon= 5\times (10^{-4} + \zeta\times 10^{-3})$ where $\zeta\sim \chi^2(2)$ with $\chi^2(2)$ being the chi-squared distribution with two degrees of freedom. We found that this setting gives good acceptance ratios while allowing for a significant number of HMC iterations to be performed without having too long runtimes. The transport map is obtained by integrating the neural ODE associated with the vector field for two steps using the Dormand-Prince (Dopri5) method. We use the same transportation map both for the density computation in the HMC iterations and to map the sampled source point to the target space.
\paragraph{Implementation of SGFM-OPT and SGFM-OPT $\chi^2$:} SGFM-OPT and SGFM-OPT $\chi^2$ are implemented using L-BFGS optimization with learning rate $\eta=1$, maximum iterations of 20, and history size 100. The method is allowed to run for the same amount of runtime as HMC (which corresponds to approximately 15 optimization steps), but usually converges before that. The transport map is designed as in SGFM-HMC.
\paragraph{Implementation of $g^{\text{cov-A}}$:} $g^{\text{cov-A}}$ \cite{feng2025guidance} is implemented using a linear schedule $\lambda_t^{\text{covA}}=10\times \lambda$. We found that $\lambda_t^{\text{covA}}=\lambda$ was not sufficient to observe a significant change in the guidance cost, while this choice achieves the lowest guidance cost of all methods. The ODE is integrated for three steps with the Dopri5 method, additional steps had no effect on performance.

\subsubsection{Additional samples}\label{sec:app:darcy_multioutput}
To extend the results in Figure \ref{fig:darcy}, we present further outcomes of the conditionally sampled permeability field and the associated solutions of the pressure field in Figure \ref{fig:darcy_multioutput}.
\begin{figure}
    \centering    
    \subfigure[SGFM-HMC]{
    \includegraphics[scale=0.25]{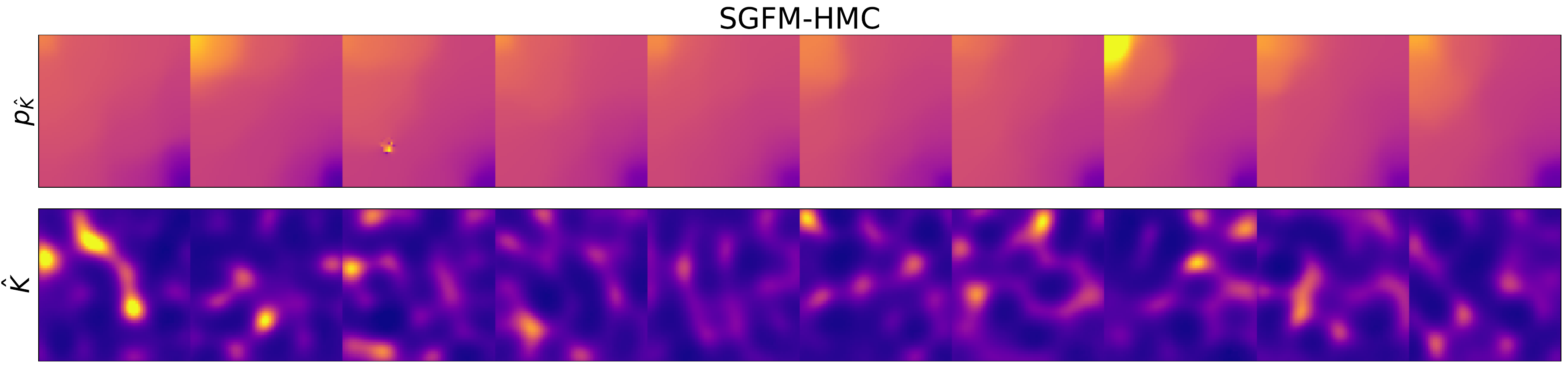}}
    \subfigure[SGFM-OPT]{
    \includegraphics[scale=0.25]{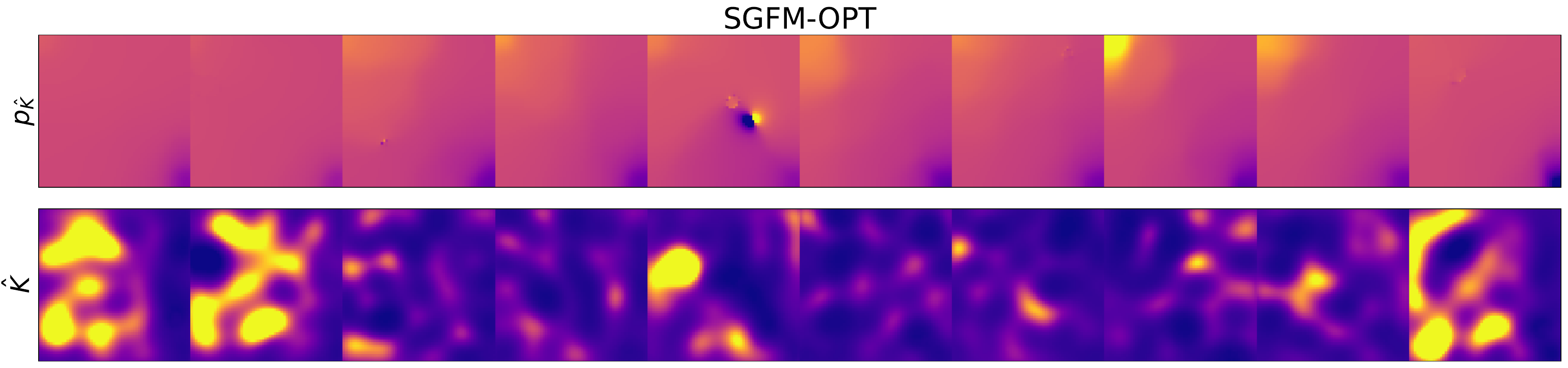}}
    \subfigure[SGFM-OPT with $\chi^2$ trick (D-Flow)]{
    \includegraphics[scale=0.25]{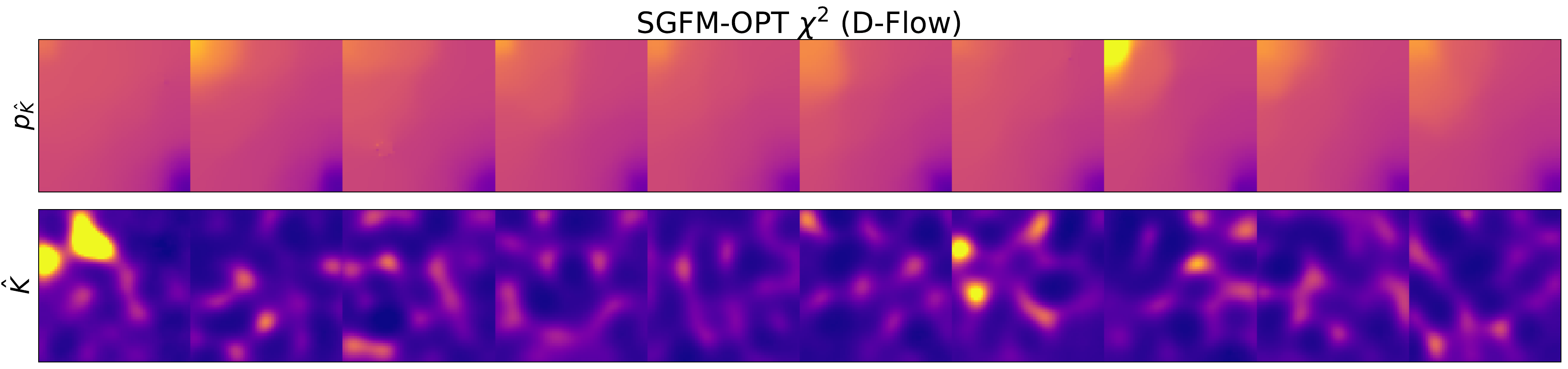}}
    \subfigure[$g^{\text{cov-A}}$]{
    \includegraphics[scale=0.25]{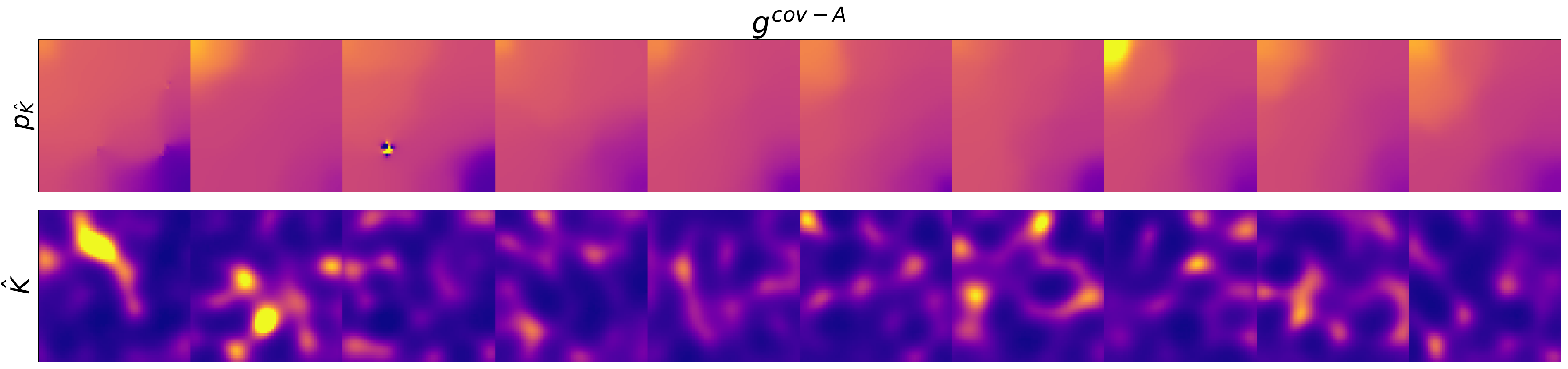}}
    \caption{Solutions to the inverse problem of the Darcy flow equations - additional outcomes following the target in Figure \ref{fig:darcy}. Top: true solution $p_{\hat{K}}$ corresponding to inverse estimate $\hat{K}$; bottom: inverse estimate $\hat{K}$ of the permeability field generated by conditional sampling.}
    \label{fig:darcy_multioutput}
\end{figure}

\subsection{Image inverse problem on CelebA}\label{sec:app:celeba}
\subsubsection{Implementation details}

The regularizers or constraints used in different variants of optimization-based sampling are summarized in Table~\ref{tab:regularizer}. For regularization-based methods, we introduce a weighting coefficient and tune it to achieve optimal performance. For the constraint-based method, we project the solution onto the hyperspherical shell after each update.

\begin{table}[h]
\centering
\caption{Regularizer or constraint for variants of optimization-based sampling.}
\begin{tabular}{ll}
\hline
Method & Regularizer or constraint \\
\hline
SGFM-OPT-1 \eqref{eq:opt_problem1:our}  &  $R_1(x_0) = \left\|x_0\right\|^2 $\\
SGFM-OPT-2 \eqref{eq:opt_problem2:our} &  $R_2(x_0)=- \ln p_{\chi{X}^2}(\|x_0\|^2)=-(d-2) \log\left\| x_0\right\| + \frac{\left\|x_0 \right\|^2}{2}$\\
SGFM-OPT-3 \eqref{eq:opt_problem3:our} &  $R_3(x_0)=( \left\|x_0\right\|^2 - d)^2$\\
SGFM-OPT-4 \eqref{eq:opt_problem3:our} &  $R_4(x_0)=\big|  \left\|x_0\right\|^2 - d \big|$\\
SGFM-OPT-5 \eqref{eq:opt_problem3:our} &  $R_5(x_0)=(\left\|x_0\right\| - \sqrt{d})^2$\\
SGFM-OPT-6 \eqref{eq:opt_problem3:our} &  Constraint: $|\|x_0\|^2-d|\leq \sqrt{2d}$\\
\hline
\end{tabular}\label{tab:regularizer}
\end{table}

\clearpage
\subsubsection{Generated CelebA samples}

\begin{figure}[ht]
\begin{center}
\centerline{\includegraphics[width=0.61\columnwidth]{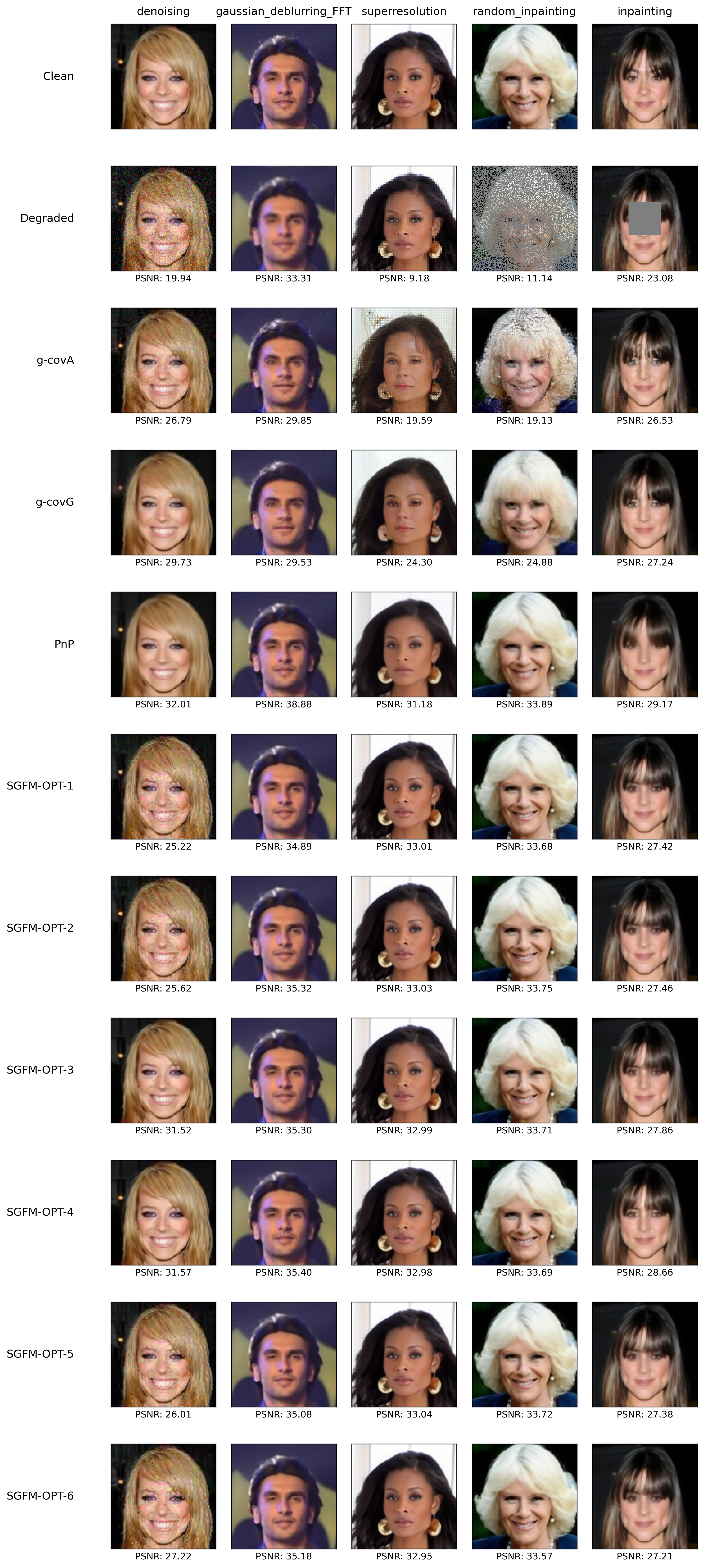}}
\caption{Comparison of image restoration methods on CelebA.}
\label{fig:celeba}
\end{center}
\end{figure}

\clearpage

\subsection{MNIST image generation}\label{sec:app:mnist}
\subsubsection{Conditional generation on MNIST}

\begin{wraptable}{r}{0.4\columnwidth}
\vspace{-0.5cm}
\centering
\begin{tabular}{|l|c|c|}
\hline
\textbf{Method} & SGFM-ULA  & $g^{\text{cov-A}}$ \\
\hline
\textbf{Accuracy} & 87.6\% & \textbf{98.5\%}  \\
\hline
\textbf{FID} & \textbf{46.7} & 57.1  \\
\hline
\end{tabular}
\caption{The label accuracy (higher is better) and FID (lower is better).}
\label{tab:mnist}
\end{wraptable}
We perform conditional image generation experiments on the MNIST dataset, where the generated samples are conditioned on provided labels. Given a target label, the loss function $J$ corresponds to the negative log-likelihood of the label computed via a classifier. We select ULA as our sampling method and benchmark its performance against the baseline method $g^{\text{cov-A}}$ in \cite{feng2025guidance}.  

Performance is evaluated using the Fréchet Inception Distance (FID) and label accuracy. A separate classifier determines accuracy to avoid overconfidence. 
The experimental results, detailed in Table~\ref{tab:mnist}, indicate that while our proposed method yields relatively lower label accuracy, it achieves a superior FID score. 
Figure~\ref{fig:diversity_mnist} presents illustrative examples of generated images from both methods. We observe that although $g^{\text{cov-A}}$ consistently generates images corresponding to the correct digits, the generated samples exhibit limited diversity, characterized by uniformly thick strokes and similar visual styles. In contrast, the ground-truth MNIST distribution inherently comprises digits exhibiting diverse shapes, styles, and stroke widths. This discrepancy is captured by the significant covariance mismatch between the baseline-generated distribution and the real data distribution, resulting in a higher FID for the baseline. Our approach thus demonstrates improved sample diversity, reflected by the lower FID score.

\begin{figure}[ht!]
    \centering
    \subfigure[Ground truth]{
        \includegraphics[width=0.32\textwidth]{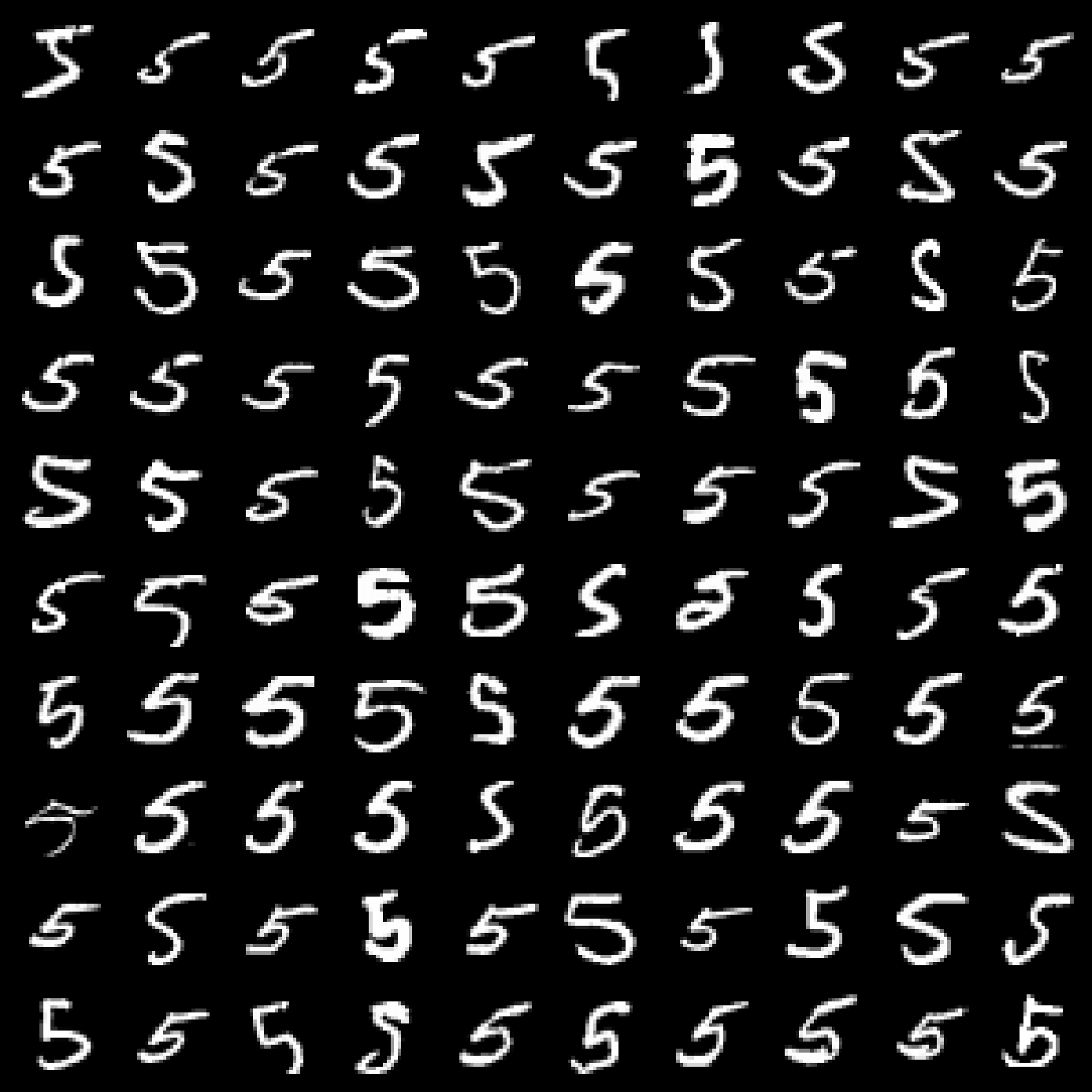}}
    \subfigure[$g^{\text{cov-A}}$]{
        \includegraphics[width=0.32\textwidth]{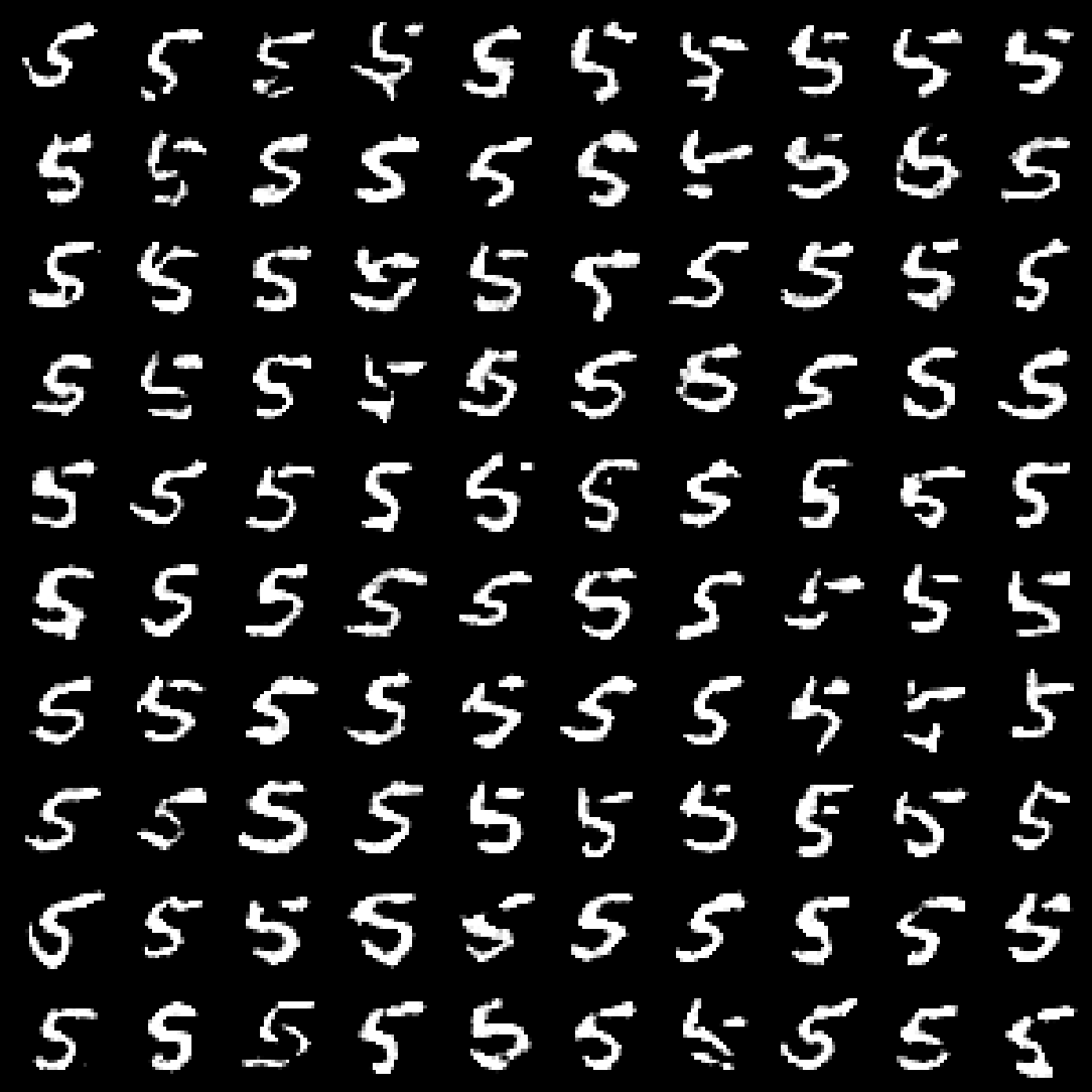}}
    \subfigure[SGFM-ULA]{
        \includegraphics[width=0.32\textwidth]{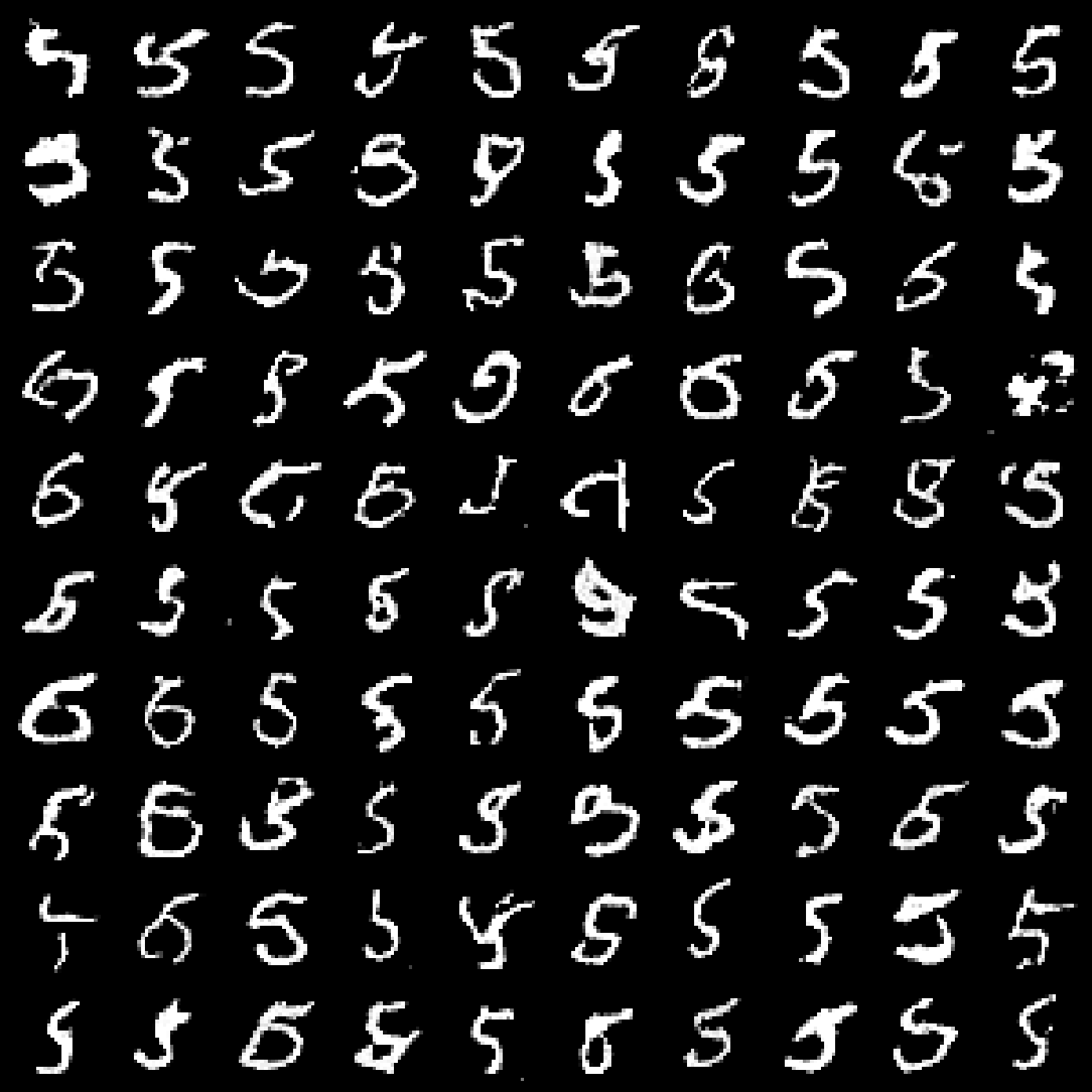}}
    \caption{MNIST sample generation conditioned on digit 5. }
    \label{fig:diversity_mnist}
\end{figure}
\subsubsection{Implementation details}

\paragraph{Pre-trained models:} For the classifier used for guidance, we train a convolutional neural network classifier on MNIST, which achieves an accuracy of 96.8\% on the standard test set. For the classifier used for evaluating the accuracy of generated samples, we adopt an independent pre-trained Vision Transformer classifier\footnote{https://github.com/sssingh/hand-written-digit-classification/tree/master}, which achieves higher robustness with an accuracy of 98.7\% on the testing distribution. Following \cite{tong2023improving}, the vector field model used in our experiments is trained using a U-Net architecture initialized from a Gaussian distribution. It was trained for three epochs, each consisting of 468 iterations.

\paragraph{Conditional generation:}  The objective of this task is to generate images conditioned on a specified label and stay close to the original dataset. The guidance for this conditional generation utilizes a loss function $J$ defined as the negative log-probability of the targeted label $i$:
$$J(x) =-\log {\rm{softmax}}(h(x))_i,$$ 
where $h(x)$ represents the logits returned by a pre-trained classifier. To balance the scale of the cost function and the probability density function, we scale the loss function $J$ by $\frac{1}{\lambda}$ with $\lambda= 10^{-3}$.

\paragraph{Implementation of the method in \cite{feng2025guidance}:} We select $g^{\text{cov-A}}$ as the baseline method in \cite{feng2025guidance}, which shows superior performance in image problems. We use a constant schedule $\lambda_t^{\text{cov-A}} = \lambda^{\text{cov-A}}$. The ODE is integrated for 100 steps using the Dopri5 method.

\paragraph{Implementation of SGFM-ULA:}
SGFM-ULA is implemented by running the ULA algorithm over a maximum of 150 steps with a batch size of 16. The step size in each iteration is selected as $5 \times 10^{-4} \times \zeta$, where $\zeta\sim \chi^2(2)$ with $\chi^2(2)$ being the chi-squared distribution with two degrees of freedom.
Each batch takes about 218 seconds to process.

\paragraph{Evaluation metric:} We assess the quality of generated images using the Fréchet Inception Distance (FID), which measures the similarity between two distributions based on their means and covariances. The reference images used in the FID calculation are subsets of the MNIST training dataset corresponding to each target label, and evaluations are performed using 400 samples.



\clearpage
\subsection{ODE solution operator}\label{sec:app:ode}
In this section, we present another example of conditional generation in the context of physics-informed generative modeling. Instead of the Darcy flow equations \eqref{eq:darcy}, we consider an ODE with truncated solution trajectories. Compared to the problem in Section \ref{sec:experiments:darcy}, this example has lower dimension which allows us to further explore how well the SGFM framework and benchmark methods approximate the target distribution. 
\subsubsection{Conditional generation of ODE trajectories via flow matching}
Consider the ODE
\begin{equation}\label{eq:ode}
    \dot{x}(t) = -\theta_ax(t) + \theta_b\sin(\theta_\omega t),\quad x(0)=0.\\
\end{equation}
where a flow matching model is trained to sample from the joint distribution of the ODE parameters $[\theta_a, \theta_b, \theta_\omega]\triangleq\theta \sim \mathcal{U}(1,3)^3 $ and the set of corresponding discretized solutions \( x_\theta \in \mathbb{R}^{100} \). The source distribution is a standard Gaussian distribution for the parameters $\theta$ and a Gaussian Process with zero mean and squared exponential kernel for $x_\theta$ to encourage smooth solutions.

The conditional sampling problem is to generate solution trajectories consistent with a partial observation of the ODE parameters $[\theta_a^*, \theta_b^*, \cdot]$, which defines a family of admissible solutions $\{x_{\theta_a^*, \theta_b^*, \cdot}\}_{\theta_\omega\in\lbrack 1,3 \rbrack}$. The cost function is the reconstruction error of the target parameters, $J(\theta, x_\theta) = \|\theta_a-\theta_a^*\|^2+\|\theta_b-\theta_b^*\|^2$. This corresponds to a soft constraint on $\theta_a$ and $\theta_b$, since the marginal target distributions for $\theta_a$ and $\theta_b$ then behave like posteriors with uniform priors and Gaussian likelihoods.
Figure \ref{fig:ode_ab_out} shows samples from the unconditional model and conditional samples using SGFM-HMC, SGFM-OPT (D-Flow)\footnote{In this example, the source distribution is not directly compatible with \eqref{eq:opt_problem2:our}, hence SGFM-OPT $\chi^2$ is omitted. However, \eqref{eq:opt_problem:our} remains applicable, and the corresponding method from D-Flow coincides with SGFM-OPT.} \eqref{eq:opt_problem:our} and $g^{\text{cov-A}}$.
\begin{figure}[ht!]
    \centering
    \includegraphics[width=\linewidth]{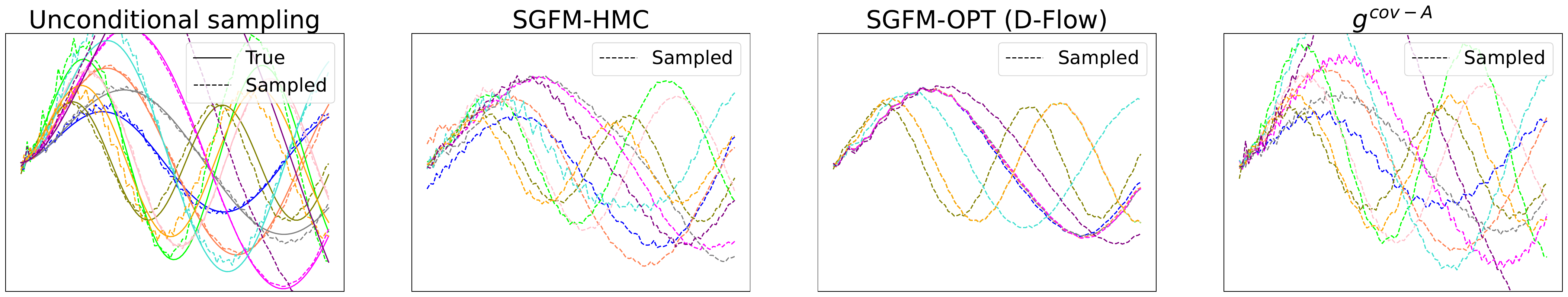}
    \caption{Solutions to the forward ODE problem. Samples $\lbrack\theta, x_\theta\rbrack \in \mathbb{R}^{103}$ consist of ODE parameters $\theta \sim \mathcal{U}(1,3)^3 $ and corresponding solution trajectories $ x_\theta \in \mathbb{R}^{100} $. The conditioning set consists of a partial observation of the ODE parameters, $\lbrack\theta_a^*, \theta_b^*, \cdot\rbrack$, which yields a family of admissible solutions $\{x_{\theta_a^*, \theta_b^*, \cdot}\}_{\theta_\omega\in\lbrack 1,3 \rbrack}$. 
    }
   \label{fig:ode_ab_out}
\end{figure}

We evaluate the methods by generating $10^3$ samples and assessing both their physical consistency and how well the empirical distribution approximates the target distribution. The latter is evaluated by comparing the parameter outcomes to equal-tailed credible intervals derived from the target distribution, which are obtained by MCMC simulation. The results in Figure~\ref{fig:ode_ab_fid} show that SGFM-based methods generate samples of higher physical consistency compared to $g^{\text{cov-A}}$. Furthermore, SGFM-HMC achieves the most representative distribution over the parameters. In contrast, SGFM-OPT collapses to the modes for the conditioned parameters $\theta_a,\theta_b$, and importantly fails to capture the full admissible range of the unconditioned parameter $\theta_\omega$. $g^{\text{cov-A}}$, on the other hand, captures the full range of admissible $\theta_\omega$ but excessively generates values outside of the credible intervals, sometimes outside of the training data distribution. Thus, we conclude that SGFM-HMC best approximates the target distribution.
\begin{figure}[ht!]
\begin{center}
\includegraphics[width=\linewidth]{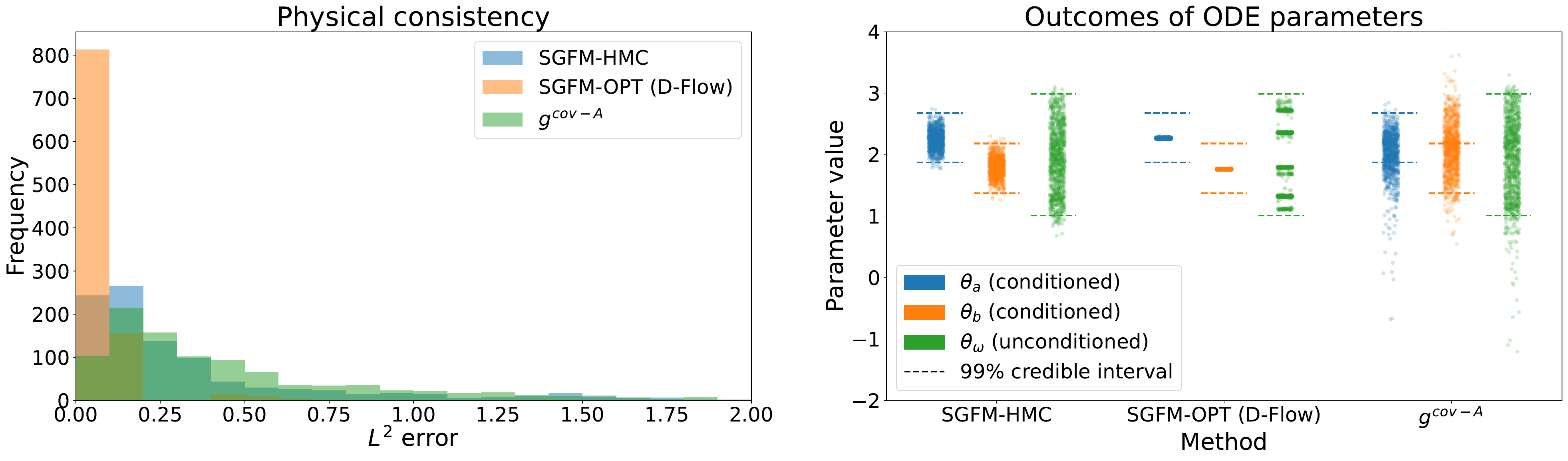}
\caption{Physical consistency of samples and closeness of the empirical distribution to the target distribution. The physical consistency is measured by the relative $L^2$ error between the sampled trajectory and the true trajectory under the jointly sampled ODE parameters.  Each method's ability to capture the target distribution is then assessed by analyzing the empirical distribution of the sampled ODE parameters. In the ideal case, 99\% of the samples fall within the credible interval indicated by the dashed lines. The true marginal target distributions for the conditioned parameters $\theta_a, \theta_b$ behave like posteriors with a uniform prior and Gaussian likelihood, so the outcomes are expected to distribute smoothly across the bounds. Similarly, the true marginal target distribution for the unconditioned parameter $\theta_\omega$ is a uniform distribution, so the outcomes are expected to be distributed uniformly across the bound.}
   \label{fig:ode_ab_fid}
\end{center}
\end{figure}
\subsubsection{Implementation details}
\paragraph{Flow matching model:} The vector field defining the flow-matching model is approximated using an MLP similar to \cite{tong2023improving} with four hidden layers of size 256 and SELU activation functions. Furthermore, we add a Gaussian smoothing filter with non-learnable parameters on the last layer to encourage smooth solutions. The source distribution is a standard Gaussian distribution for the ODE parameters and a Gaussian process with zero mean and squared exponential kernel having length scale $l=1$. The dataset consists of $10^4$ samples which are generated by sampling $\theta\sim\mathcal{U}(1,3)^3$ and integrating \eqref{eq:ode} for $t\in[0, 5]$ using the Euler method with $\Delta t=0.05$. The model is trained for $10^3$ epochs using the Adam optimizer with learning rate $\eta=10^{-3}$.

\paragraph{Conditional sampling:} All methods are initialized by the same set of samples from the unmodified source distribution. To balance the scale of the cost function $J$ and the prior probability $\log{q_0}$, the cost is scaled by a factor $\frac{1}{\lambda}$ where $\lambda=5\times 10^{-2}$. 
\paragraph{Implementation of SGFM-HMC:} SGFM-HMC is implemented by running the HMC algorithm for $N_{HMC}=200$ steps with $L=50$ leapfrog steps, where the step size is randomly selected in each Markov chain iteration as $\epsilon= 10^{-4}( 1 + \zeta\times 15)$ where $\zeta\sim \chi^2(2)$ with $\chi^2(2)$ being the chi-squared distribution with two degrees of freedom. We found that this setting gives good acceptance ratios while allowing for a significant number of HMC iterations to be performed without having too long runtimes. The transport map is obtained by integrating the neural ODE associated with the vector field for two steps using the Dormand-Prince (Dopri5) method. We use the same transportation map both for the density computation in the HMC iterations and to map the sampled source point to the target space.
\paragraph{Implementation of SGFM-OPT:} SGFM-OPT is implemented using L-BFGS optimization with learning rate $\eta=1$, maximum iterations of 20, and history size 100. The method is allowed to run for approximately the same amount of runtime as HMC (which corresponds to 160 optimization steps), but usually converges before that. The transport map is designed as in SGFM-HMC. We use the same transportation map both for the density computation in the D-Flow iterations and to map the sampled source point to the target space.
\paragraph{Implementation of $g^{\text{cov-A}}$:} $g^{\text{cov-A}}$ \cite{feng2025guidance} is implemented using a constant schedule $\lambda_t^{\text{covA}}=\lambda$. This choice cancels out the loss scaling factor $\frac{1}{\lambda}$, and other alternatives either degrade physical consistency or impose weak constraints on the conditioned parameters. The ODE is integrated for three steps using the Dopri5 method, additional steps had no effect on performance.
\section{Limitations and Future Work}
Similar to other guidance methods that optimize the source samples, one limitation of our method lies in the long runtime due to the need to backpropagate through the ODE. Consequently, it would be interesting to incorporate efficient backpropagation in our framework. Additionally, training the optimal vector field requires access to the OT coupling, which becomes particularly challenging in high-dimensional settings. Although we can approximate $\pi^*$ using mini-batch data or entropic OT solvers, these approximations can introduce bias and may not scale well. Developing a more efficient and scalable approach to training the optimal vector field is another important avenue for future research.

\end{document}